\begin{document}

\title{How to ensure a safe control strategy? Towards a SRL for urban transit autonomous operation}

\author{Zicong Zhao,~\IEEEmembership{Graduate Student Member,~IEEE}
}



\maketitle

\begin{abstract}
Deep reinforcement learning (DRL) has gradually shown its latent decision-making ability in urban rail transit autonomous operation. However, since reinforcement learning can not neither guarantee safety during learning nor execution, this is still one of the major obstacles to the practical application of reinforcement learning. Given this drawback, reinforcement learning applied in the safety-critical autonomous operation domain remains challenging without generating a safe control command sequence that avoids overspeed operations. Therefore, a SSA-DRL framework is proposed in this paper for safe intelligent control of urban rail transit autonomous operation trains. The proposed framework is combined with linear temporal logic, reinforcement learning, Monte Carlo tree search and consists of four mainly module: a post-posed shield, a searching tree module, an additional actor and a DRL framework. Furthermore, the output of the framework can meet speed constraint, schedule constraint and optimize the operation process. Finally, the proposed SSA-DRL framework for decision-making in urban rail transit autonomous operation is evaluated in sixteen different sections, and its effectiveness is demonstrated through the basic simulation and additional experiment.

\end{abstract}

\begin{IEEEkeywords}
Safe Reinforcement Learning, Urban Rail Transit, Autonomous Operation, Intelligent Control.
\end{IEEEkeywords}

\section{Introduction}
\IEEEPARstart{A}{utonomous} train operation has become a hot research in the world to make urban rail transit smarter, more efficient and greener. Siemens Mobility in 2018 presented the world’s first autonomous tram at InnoTrans. Also, at InnoTrans in 2022, Thales’ rolling laboratory train Lucy can determine its precise location and speed by itself. In 2023, CRRC QINGDAO SIFANG CO., LTD. presented the Train Autonomous Circumambulate System (TACS) at Qingdao Line 6 and Alstom is now trying to bring GoA4 autonomous systems on regional train lines. Besides these companies, rail developed countries are also working hard on the research of autonomous train operation system. In 2022, Europe’s Rail Flagship Project R2DATO project is spending €180 million to develop the Next Generation Autonomous Train Control system. All in all, different companies or countries will have different technique features for autonomous train operation, but one important thing in common is that the train should have the ability of self-decision or self-driving no matter in common situation or an emergence situation. How to achieve the mentioned ability is now becoming a hot research.

In other transportation area such as Unmanned Aerial Vehicle (UAV) path planning or vehicle autonomous driving, the agent also needs to have the ability of self-planning or self-lane keeping \cite{fu2014monocular,horwick2010strategy}. In recent years, reinforcement learning (RL) or deep reinforcement learning (DRL) are widely used to train an agent with self-decision ability \cite{sutton2018reinforcement}. However, unlike those traditional virtual RL environment like OpenAI Gym or DeepMind MuJoCo \cite{1606.01540,todorov2012mujoco}, the above transportation scenarios are all real and safety-critical which means that a wrong or danger action may lead to an unacceptable result for example the UAV collides the infrastructures, the cars have a collision and the train runs over the limit speed. Unluckily, traditional RL algorithms may not guarantee the safety not only in training process but also in execution process which results the main obstacle that prevents RL from being widely used in real world \cite{thananjeyan2021recovery}.

Moreover, unlike the UAV path planning or vehicle autonomous driving, the intermediate process of autonomous train operation is also important. In UAV path planning, researchers always focus on how to find the shortest and safest path to reach the destination and in vehicle autonomous driving, researchers always focus on how to keep or change the lane to avoid collision. But for autonomous train operation, things are not the same. Because the trains must obey the schedule and there are always several optimization objectives such as the energy consumption and passenger comfort. This makes it harder to design a self-decision algorithm to trade-off between the final object (arrive at the destination safe and on time) and the optimization object (minimize the energy consumption and keep the passengers comfortable). And in recent years, with the development of artificial intelligence (AI), the AI community has realized that if the failure can cause or damage to an AI system, a fail-safe mechanism or fallback plan should be an important part of the design of the AI system \cite{kaur2022trustworthy,liu2022trustworthy,li2023trustworthy}. In some AI guidelines published in recent years, this mechanism has become an important requirement of AI \cite{smuha2019eu}.

In this paper, our main purpose is to propose a self-decision algorithm for autonomous train operation with three other abilities, 1) objectives are optimized, 2) safety is ensured and 3) decisions can be explained. Safe reinforcement learning (SRL), a subfield of RL is used in this paper to design the algorithm. The detailed contributions are introduced in SectionI.B.

\subsection{Related Work}
In urban transit optimal control area, most studies can be regarded as an extension of the study of speed profile optimization. In the past few decades, due to the characteristics of speed tracking control, researchers mainly focused on how to optimize the speed profile, so that the control algorithm, especially PID control, can achieve better control effect. Up to now, most studies are based on Pontryagin's Maximum Principle (PMP) to analysis the switching of working conditions such as maximum acceleration (MA), coasting (CO), maximum braking (MB) to optimize the speed profile\cite{howlett1990optimal,howlett2012energy,khmelnitsky2000optimal,albrecht2016k,albrecht2016key,GOVERDE2021353}. Such optimization methods are also called energy-efficient train control (EETC)\cite{2017Review}.

With the development of intelligent control theory and the requirement of self decision-making in autonomous operation, intelligent control methods represented by dynamic programming (DP) and RL are widely studied to improve automation level of traditional ATO system.

As the basis of RL and DP, Bellman optimal equation is used to build a multi-stage decision making problem for train operation control\cite{ko2004application}. Compared with the famous bionic optimization algorithm such as genetic algorithm and ant colony optimization, DP has a better performance under different operation time and inter-section distance\cite{lu2013single}. With the development of train motors and controllers, continuous control commands are more and more used in urban rail transit, which makes the traditional RL or DP method unsuitable. DRL and approximate dynamic programming (ADP) are then used to handle this problem. For the optimal control of heavy haul train with uncertain environment conditions, the maximum utility of regenerative braking energy and the optimization of speed profile with parametric uncertainty, ADP all has a good performance\cite{2017Optimal,liu2018optimal,2020Train}. As for the specific use of DRL, researches on one hand use it directly to output control command for train\cite{zhu2017communication,LIU2021103249,liu2021dqn}, and on the other hand combine it with other framework such as the expert system or history experience to correct the given control command\cite{yin2014intelligent,zhou2020smart,shang2021deep}.

In RL research area, with the development of computer science and graphics process unit (GPU), several famous algorithms represented by the Q-learning, actor-critic (AC), deep deterministic policy gradient 
(DDPG) and soft actor-critic (SAC) have achieved tremendous achievement in two-player game and computer game\cite{watkins1992q,konda1999actor,lillicrap2015continuous,haarnoja2018soft}. The two most famous researches are AlghaGo and AlphaZero in Chess and AlphaStar in StarCraft for they have almost beaten every human player without pressure\cite{silver2016mastering,silver2017mastering,vinyals2019grandmaster}.

Though DRL has shown great potential in decision-making area, researchers have also found that RL algorithms do not necessarily guarantee safety during learning or execution phases, which leads to an unavoidable drawback for safety-critical application in real world such as robot control\cite{alshiekh2018safe,gu2022review}. To get over this issue, researchers have studied to ensure reasonable system performance and respect safety constraints during the learning and deployment processes, such researches are so-called SRL\cite{garcia2015comprehensive}. Considering there are many different ways to classify SRL algorithms, application area will be used in this paper to make the literature review.

In the car autonomous driving area, many methods have been proposed for autonomous driving based on modern, advanced techniques. Traditional
motion planning and RL methods are combined to perform better than pure RL or traditional methods\cite{gu2022constrained}. Different with \cite{gu2022constrained}, a third layer called risk neural network is added to AC algorithm to realize safe autonomous driving\cite{wen2020safe}. Moreover, control barrier functions (CBF) and Monte Carlo Tree Search (MCTS) are also used for safe autonomous driving\cite{cheng2019end,mo2021safe}.

In the robotics area, the safety of robot is not considered as an optimization objective in the past researches, the study of SRL has established a bridge between the simulation and application.
Optlater ensures safe action sets that a robot can only taken from\cite{pham2018optlayer}. Safe exploration derived from risk function is used to construct PI-SRL and PS-SRL algorithm, which makes SRL based robot walking come true\cite{garcia2012safe,garcia2020teaching}.

SRL is also widely used in other areas. In recommender system, SRL is deployed to optimize the healthy recommendation sequences by utilizing a policy gradient method\cite{singh2020building}. In wireless security, an Inter-agent transfer learning based SRL method is proposed\cite{lu2022safe}. In UAV control, a brain-inspired reinforcement learning model (RSNN) is proposed to realize self-organized decision-making and online collision avoidance\cite{zhao2022nature}. In high speed train operation optimization, a Shield SARSA algorithm is proposed to plan an energy-efficient speed profile without overspeed operation\cite{zhao2022safe}. In diabetes treatment, a new mathematical problem formulation framework called Seldonian optimization problem is proposed, and it is used in the optimization of insulin dosing for type 1 diabetes treatment\cite{thomas2019preventing}.

\subsection{Problems and Contributions}
Through the introduction and literature review, two information can be acquired are that SRL is now widely used in safety-critical area to make RL based method more real world realizable and in urban rail transit area there are few researches typically considering how to construct a SRL based intelligent control method for autonomous operation. Table \ref{ref} summarized the safety protection methods for solving train operation optimization or control problems using RL in recent years. It is clear that the widely used method to prevent overspeed operation nowadays are adding a punishment or set the speed equal to the limit speed. However, the effect of a punishment may be influenced by the value of punishment weight and can only be known after several simulations which is obvious unsuitable for the operation in real world. When setting the speed equal to limit speed, this behavior may break the principle that at each time step the agent receives some representation of the environment’s state and on that basis selects an action \cite{sutton2018reinforcement}. Moreover, since this approach ignores the behavior policy, it may not maximize the long term reward.
\begin{table}[htbp]
        \renewcommand{\arraystretch}{1.2}
        \setlength{\abovecaptionskip}{-0.2cm} 
	\caption{Typical Safety Protection Methods }
	\label{ref}
	\begin{center}
		\begin{tabular}{cc}
			\hline
			\textbf{References} & \textbf{Safety Protection Method}  \\
			\hline
		\cite{zhou2020smart}	  & If $v>v_l$, adapt minimum deceleration   \\
	\cite{shang2021deep}   & If $v>v_l$, using reference system to brake     \\
 \cite{2014Intelligent}   & If $v>v_l$, safety index equal to 0     \\
\cite{tang2020reinforcement}    & If $v>v_l$, add an overspeed punishment        \\
		\cite{wang2023cooperative} & If $v>v_l$, regard as infeasible    \\
	\cite{chen2023research}   & If $v>v_l$, reset environment       \\
 \cite{ning2021deep}   & $v\le v_l$ in model but not mentioned in RL algorithm        \\
 \cite{lin2023reinforcement}   & $v\le v_l$ in model but not mentioned in RL algorithm         \\
 \cite{li2023intelligent}   & If $v>v_l$, add an overspeed punishment -1       \\
 \cite{zhang2021intelligent}   & Add an avoidance punishment        \\
 \cite{su2022cooperative}   & If $v>v_l$, add an operspeed punishment      \\
			\hline
		\end{tabular}
	\end{center}
	\vspace{-0.45cm}
\end{table}

Then in this paper, a SRL framework called SSA-DRL is proposed for safe self-decision making of urban rail transit autonomous operation. The framework is consists of a Shield, a Searching tree, an Additional safe actor and a DRL framework. The main contributions of this paper can be summarized as follows:
\begin{itemize}
    \item The proposed SSA-DRL framework enables agent to learn safe control policies and ensure schedule constraints and operation efficiency.
    \item The safe actions are get by a white box searching tree model and an iterative formula that can be explained.
    \item The proposed SSA-DRL framework can effectively reduce the number of times that the protection mechanism works which means that the final agent has self-protection ability.
    \item The proposed SSA-DRL framework has transferability and robustness and is easy to deploy.
\end{itemize}

The remainder is organized as follows. In Section II, the preliminaries are introduced. In Section III, the proposed SSA-DRL framework is elaborated. In Section IV, simulation results are discussed and in Section V the conclusions are given.

\section{Preliminaries}
\subsection{Markov Decision Process}
A finite Markov Decision Process is usually denoted by the 5-tuple $\left( \mathcal{S},s_0,\mathcal{A},p,\mathcal{R} \right) $ with a finite state set $\mathcal{S}=\left\{s_0,...,s_n\right\}$, a unique initial state $s_0 \in \mathcal{S}$, a finite action set $\mathcal{A}=\left\{a_1,...,a_n\right\}$, a dynamic function $p:\mathcal{S}\times \mathcal{R}\times \mathcal{S}\times \mathcal{A}\rightarrow \left[ 0,1 \right] $ and a reward function $r:\mathcal{S}\times\mathcal{A}\rightarrow\mathbb{R}$.

The solving of a RL task is to find a mapping called policy written as $\pi$ from states to probabilities of selecting each possible action to receive a lot of rewards over a long episode. For the optimal policy, it is better than or equal to all other policies. Noted that there may be more than one optimal policy, thus all the optimal policies are denoted by $\pi_*$. All the optimal policies share the same optimal state-value function $v_*$ and optimal action-value function $q_*$ defined as 
\begin{equation}
    \left\{ \begin{array}{l}
	v_*\left( s \right) \doteq \underset{\pi}{\max}v_{\pi}\left( s \right)\\
	q_*\left( s,a \right) \doteq \underset{\pi}{\max}q_{\pi}\left( s,a \right)\\
\end{array} \right. 
\end{equation}

\subsection{State Value and Action Value}
Value function is widely used in RL to evaluate a given state or a state-action pair. Since the return an agent can get almost depend on the chosen action, thus value function is associated with the policy. The function $v_{\pi}\left(s\right)$, which calculates the value of state $s$ under policy $\pi$ is called a state value function and is denoted by \eqref{statevalue}.
\begin{equation}\label{statevalue}
    \begin{aligned}
    v_{\pi}\left( s \right) 
	&\doteq \mathbb{E}_{\pi}\left[ G_t\mid S_t=s \right]\\
	&=\mathbb{E}_{\pi}\left[ \sum_{k=0}^{\infty}{\gamma}^kR_{t+k+1}\mid S_t=s \right] ,\,\,\text{for\,\,all\,\,}s\in \mathcal{S}
\end{aligned}
\end{equation}
Similarly, the value function $q_{\pi}$ calculates the value of taking an action $a$ at state $s$ under policy $\pi$ and is denoted by \eqref{actionvalue}.
\begin{equation}\label{actionvalue}.
    \begin{aligned}
        q_\pi(s, a) &\doteq \mathbb{E}_\pi\left[G_t \mid S_t=s, A_t=a\right] \\
        &=\mathbb{E}_\pi\left[\sum_{k=0}^{\infty} \gamma^k R_{t+k+1} \mid S_t=s, A_t=a\right]
    \end{aligned}
\end{equation}
$\mathbb{E}\left[\cdot\right]$ denotes the expected value of a random variable.

\subsection{Off Policy DRL}
In off-policy RL, the agent uses a behavior policy for action selection during the learning process, and a target policy for updating the policy. This means that the policies used by the agent while learning are different from those actually executed. The core feature of off-policy RL is to seek the global optimal value. In DRL especially, due to the introduction of replay buffer, off-policy DRL algorithms are more common. The DDPG and SAC algorithms are two examples of off-policy methods based on policy gradient framework AC, which are used as benchmarks in this paper.

DDPG is an deterministic algorithm typically designed for continuous action set which concurrently learns a Q-function $Q\left(s,a\right)$ and a policy. It has two AC structures and uses off-policy data and the Bellman equation to learn the Q-function then uses it to learn the policy. At each state $s$, the optimal action is acquired by solving \eqref{ddpgoptimal}.
\begin{equation}\label{ddpgoptimal}
    a_*\left( s \right) =\text{arg}\underset{a}{\max}Q_*\left( s,a \right) 
\end{equation}

SAC is an algorithm that optimizes a stochastic policy in an off-policy way, forming a bridge between stochastic policy optimization and DDPG-style approaches \cite{SpinningUp2018}. Different from DDPG, SAC is suitable for both continuous and discrete action set. The core feature of SAC is entropy regularization, which means the algorithm is designed to search a trade-off between expected return and entropy. Unlike the traditional DRL algorithm, SAC finds the optimal policy by solving \eqref{sacentropy}.
\begin{equation}\label{sacentropy}
        \pi ^*=\text{arg}\max_{\pi}\underset{\tau \sim \pi}{\text{E}}\left[ \sum_{t=0}^{\infty}{\gamma ^t}\left( R\left( s_t,a_t,s_{t+1} \right) +\alpha H\left( \pi \left( \cdot |s_t \right) \right) \right) \right]
\end{equation}
\begin{equation}
        H\left( P \right) =\underset{x\sim P}{\text{E}}\left[ -\log P\left( x \right) \right] 
\end{equation}
$H$ is the entropy of $x$ calculated by its distribution $P$.

Although both DDPG and SAC have learned good agents on several benchmark tasks, there is no guarantee of safety in these algorithms, nor any other traditional off-policy RL algorithm. Therefore, another purpose of this paper is to combine DRL agents with other modules to both improve control efficiency and ensure safety.

\subsection{Monte Carlo Tree Search}
MCTS is an algorithm based on tree search and Monte Carlo method for decision-making problems, widely used in the field of games and RL. It simulates multiple possible states of a game or problem and selects the optimal action scheme to find the best decision. MCTS iteratively simulates the subsequent development of a searching tree, updates the nodes in the tree according to the simulated results, and selects one node by a policy as the action for the next step. The widely used policies are upper confidence bounds and $\epsilon -greedy$. The basic steps of MCTS are selection, expansion, simulation and back propagation. In this paper, the process of MCTS is addressed to better suitable for the proposed framework.

\subsection{Linear Temporal Logic}
Linear Temporal Logic (LTL) is a widely used temporal logic method in areas such as formal modeling and model checking \cite{pnueli1977temporal}. It can describe constraints and temporal relationships that need to be satisfied by a system in the past, present, and future using time sequence operators. Therefore, it is particularly suitable for describing reactive systems that generate outputs based on external inputs and the system's current state. LTL can conveniently and accurately describe the properties and constraints that a system needs to meet, and is typically described using linear temporal logic formulas.

A word is typically used to express an atomic proposition (AP) or the negation of an AP. Alphabet $\varSigma$ of AP is denoted as $2^{AP}$, where $2^{AP}$ represents the power set of $AP$. Subsequently, the sets of all finite and infinite sequences of elements from the alphabet $\varSigma$ are denoted as $\varSigma^*$ and $\varSigma^{\omega}$, respectively. Important and widely used properties such as safety, satisfiability, and liveness can be defined through linear temporal logic formulas.


\section{Method Formulation}
In this section, the framework of the proposed SSA-DRL is firstly shown in Fig.~\ref{framework}. It is clearly that SSA-DRL consists of four main modules: a Shield based protective module, a searching tree based module, a DRL module and an additional actor module. Then we will explain how these four modules work in detail.

\begin{figure*}[!t]
\centering
\includegraphics[scale=0.8]{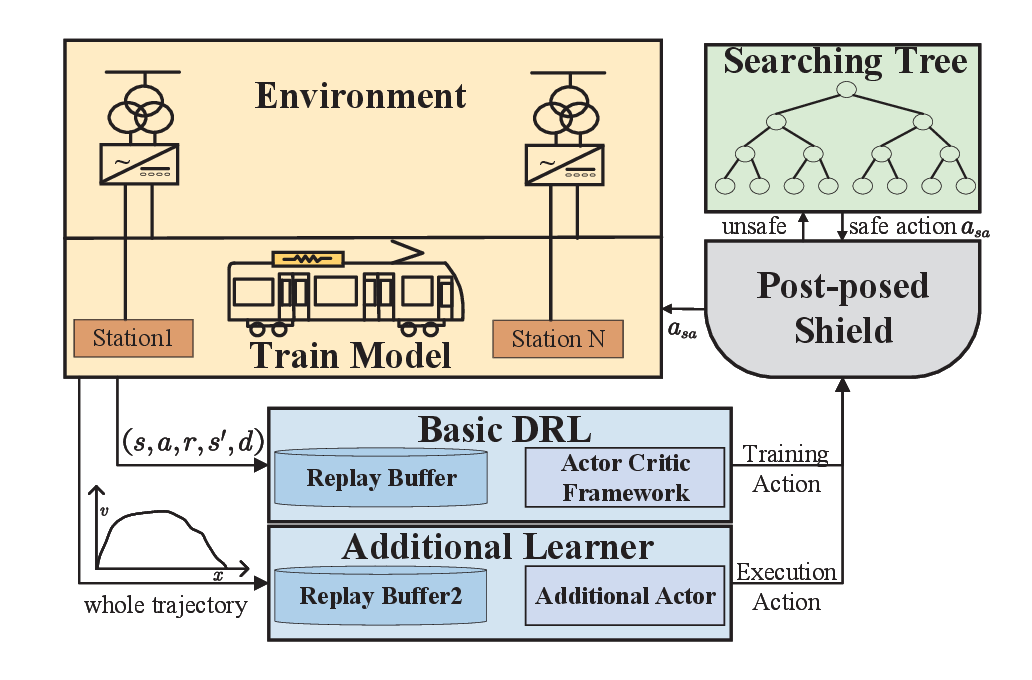}
\caption{Framework of SSA-DRL.}
\label{framework}
\end{figure*}

\subsection{A Post-Posed Shield}
The Shield in this paper comes directly from \cite{alshiekh2018safe}. The Shield consists of finite-state reactive system, safety specification, an observer function, a label and other components. Finite-State Reactive System is usually denoted by a tuple $\mathcal{FS}=\left\{ \mathcal{Q},q_0,\varSigma _I,\varSigma _O,\delta ,\lambda \right\} $, where $\mathcal{Q}$ is the finite set of states, $q_0 \in \mathcal{Q}$ is the initial state, $\varSigma_I=\varSigma_I^1\times\varSigma_I^2,\varSigma_O$ are the input and output alphabets. Then, $\delta:\mathcal{Q}\times\varSigma_I \rightarrow\mathcal{Q},\lambda:\mathcal{Q}\times\varSigma_I^1\rightarrow\varSigma_O$ are the transition and output function respectively. Specification $\psi$ defines the set of all allowed traces, and once a system satisfies all the properties of a specification, we can say a system satisfies $\psi$. Moreover, safety specification is used to construct the Shield. Formally speaking, a safety specification is a specification that if every trace is not in the language represented by the specification with a prefix such that all words starting with the same prefix are also not in the language\cite{alshiekh2018safe}. The above expression may be difficult to understand, readers of this paper can simply recognize a safety specification holds that "bad things will never happen" and a safety automaton can be used to represent a safety specification\cite{2001Model}. Observer function $f:\mathcal{S}\rightarrow L$ is usually a mapping for an MDP $\mathcal{M}=(\mathcal{S},s_0,\mathcal{A},p,\mathcal{R})$ to describe some information at state $s$ and $L$ is a finite set of label. Then, once an RL task can be formulated as $\mathcal{M}=(\mathcal{S},s_0,\mathcal{A},p,\mathcal{R})$ while satisfying a safety specification $\psi^{S}$ with an observer function $f:\mathcal{S}\rightarrow L$, a Shield can be modeled by a reactive system.

\begin{figure}[!t]
\centering
\includegraphics[scale=0.7]{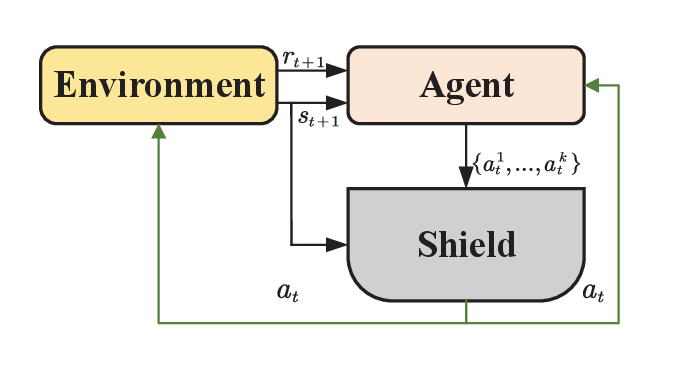}
\caption{Structure of post-posed Shield.}
\label{shield}
\end{figure}

Post-Posed Shield is a specific form of Shield that is set after the learning algorithm as depicted in Fig.~\ref{shield}. It is clear that the actions chosen by the agent are monitored by the Shield and the action violating safety specification $\psi^{\mathcal{S}}$ will be replayed by a safe action $a^{'}$. A new safe action set consists of $a^{'}$ is denoted as $\mathcal{A}_{s}^{'}$. Then the reactive system can be re-written as $\mathcal{FS}=(\mathcal{Q_{\mathcal{S}}},q_{0,\mathcal{S}},\varSigma_{I,\mathcal{S}}^{1} \times \varSigma_{I,\mathcal{S}}^{2},\varSigma_{O,\mathcal{S}},\delta_{\mathcal{S}},\lambda_{\mathcal{S}})$.


A simple example is made here to show how to build a post-posed Shield for the safe control of urban transit. Considering the train is running in a section with only one speed limitation which is 120km/h. The action set is $\mathcal{A}=\{\text{acceleration},\text{coasting},\text{braking}\}$. In the operation process, the speed must hold in the range of 1-119 km/h and the working condition cannot directly changed from acceleration to braking so as the braking to acceleration. Firstly, the safety specifications for the speed controller can be formulated by the temporal logic formula as follow \eqref{safetyspecification}.
\begin{equation}\label{safetyspecification}
\begin{aligned}
    	&\text{G}\left( \text{speed}>1 \right)\\
	\land &\text{G}\left( \text{speed}<119 \right)\\
	\land &\text{G}\left( \text{acceleration}\rightarrow\text{X}\left( \text{coasting} \right) \ \text{U\ braking} \right)\\
	\land &\text{G}\left( \text{braking}\rightarrow\text{X}\left( \text{coasting} \right) \ \text{U\ acceleration} \right)\\
\end{aligned}
\end{equation}
The meaning of LTL formulas \text{G},\text{X()},\text{U} are globally, next time and until respectively, and
the label set can be formulated as $L=\{speed<1,1\le speed \le 119,speed>119\}$. 

Then the component of Shield are discussed. Firstly, the finite state set $Q_{\mathcal{S}}$ can be set as $Q_{\mathcal{S}}=G$, where $G$ is the finite safe set of a safety game satisfies safety specification $\psi^{\mathcal{S}}$\cite{alshiekh2018safe}. Then the initial state is $q_{0,\mathcal{S}}=(q_0,q_{0,\mathcal{M}})$. We make a brief introduction of $q_{0,\mathcal{M}}$ here. As mentioned above, the reactive system to construct Shield should satisfy the safety specification, actually, it should satisfy another specification which is the MDP specification $\psi^{\mathcal{M}}$, and $q_{0,\mathcal{M}}$ is the initial state of $\psi^{\mathcal{M}}$. The input alphabet is $\varSigma_{I,\mathcal{S}}=\varSigma_{I,\mathcal{S}}^{1} \times \varSigma_{I,\mathcal{S}}^{2}=L\times\mathcal{A}=\{\text{acceleration},\text{coasting},\text{braking}\}\times\{speed<1,1\le speed \le 119,speed>119\}$ and the output alphabet is $\varSigma_{O,\mathcal{S}}=\mathcal{A}=\{\text{acceleration},\text{coasting},\text{braking}\}$. The output function can be formulated as \eqref{lambdafun} where $a\in \mathcal{A},a'\in \mathcal{A}_{\mathcal{S}}^{'},g\in G,l\in L$ and $W$ is the set of the winning state of safety game $G$.
\begin{equation}\label{lambdafun}
    \lambda _{\mathcal{S}}\left( g,l,a \right) =\left\{ \begin{array}{l}
	a\ \text{if}\ \delta \left( g,l,a \right) \in W\\
	a'\ \text{if}\ \delta \left( g,l,a \right) \notin W,\text{but}\ \delta \left( g,l,a' \right) \in W\\
\end{array} \right. 
\end{equation}
And the transition function is $\delta_{\mathcal{S}}(g,l,a)=\delta(g,l,\lambda_\mathcal{S}(g,l,a))$.

The above example points out the steps to build a post-posed Shield for urban rail transit control and then a searching tree based module is proposed to better find a safe action $a^{'}$.

\subsection{Safe Action Searching Tree}
In this subsection, a searching tree based module is proposed to output the final safe action. The idea of the searching tree derives from roll out algorithm and is more like a trade-off between MCTS and exhaustive search. Firstly, the Post-Posed Shield provides a safe action set $\mathcal{A}^{'}$ and the searching module using several steps to finally choose the high long-term reward safe action. The framework of the module is depicted in Fig.~\ref{searching tree}.

\begin{figure*}[!t]
\centering
\includegraphics[width=14.5cm]{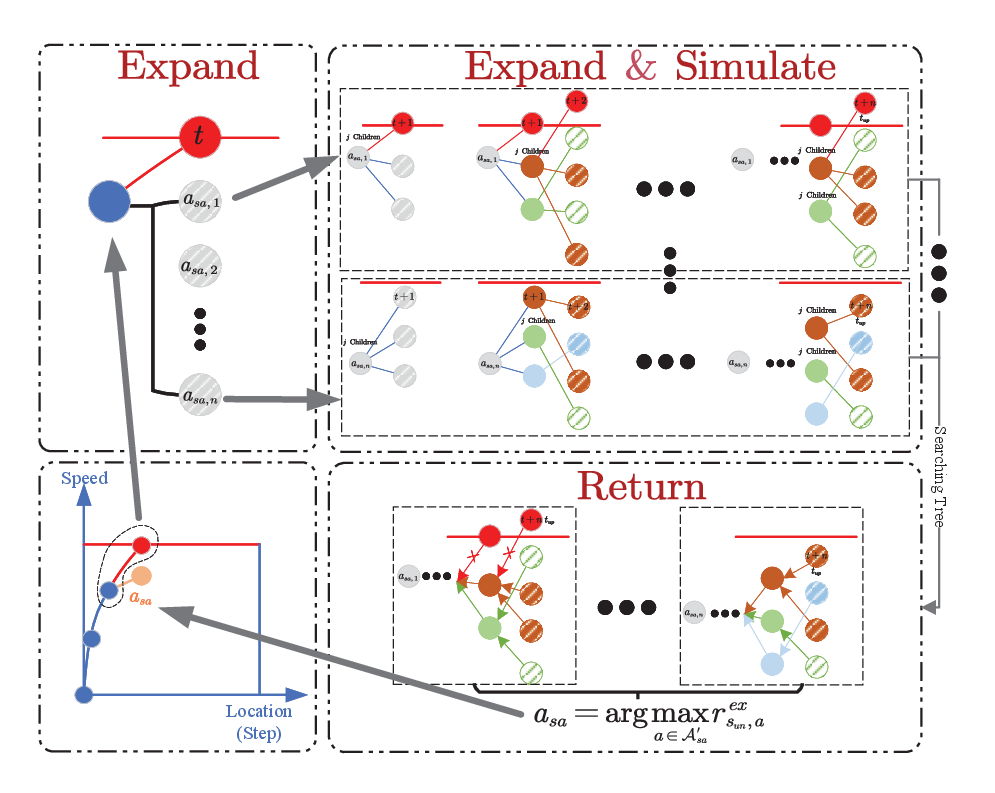}
\caption{Framework of safe action searching tree.}
\label{searching tree}
\end{figure*}

A detailed example is made here to explain how to construct and use the searching tree. Suppose that the initial unsafe state is $s_{un}$ and the safe action set is $\mathcal{A}_{sa}^{'}=(a_{sa,1}^{'},...,a_{sa,n}^{'})$. To output a high long-term reward safe action, each action in $\mathcal{A}_{sa}^{'}$ should be evaluated. $a_{sa,1}$ is chosen firstly then the state will transfer to a new state $s_{sa,1}$, this is actually a roll out or simulation step and $s_{sa,1}$ is the root node. At state $s_{sa,1}$, the DRL agent will output $n_{ex}$ actions, and a simulation step is then executed according to $s_{sa,1}$ and the $n_{ex}$ actions. It is noted here that all these simulation steps are monitored by the Shield then only safe action will be executed and in this step those unsafe actions will not be replaced by a safe one. This means that only safe actions can expand nodes and root node $s_{sa,1}$ will have $n_{ex}^{'}$ children nodes ($n_{ex}^{'}\le n_{ex}$). Then, for these nodes, a roll out step can be executed again. Considering in most DRL algorithms, the policy neural network will be updated in the learning process, thus if the searching tree is a full-depth tree there will be a tricky problem that the action in real training and in expansion at the same state may be different. For example, if the root node is step 3, the update frequency is 5 and the current expansion step is 6. In expansion, the action at step 6 is output by $\mu _{0}^{\theta}$ but the exact action in training at step 6 is output by $\mu _{5}^{\theta}$ where $\mu$ is the parameter of the policy neural network so that the expansion can not deduce a specific future. A trick is used here that the depth of the searching tree will not be fixed but dynamically equal to the remaining step to update the policy net, which means that the step of the leaf nodes will always be the same as the update step. In this case, the depth of the searching tree will not be too large and the each searching tree in training phase can be step-adaptive.

Once the expansion step reaches the update step, the searching tree needs to be pruned. In pruning, all children nodes that are not extended to the update step are deleted. Pruning can help to remove those nodes that will lead to an unsafe state and guarantee that only safe states are returned. After pruning, the searching tree needs to be returned. For the $q_{th}$ node at step $p$, the return $r_{p,q}^{ex}$ is calculated by \eqref{treereturn}
\begin{equation}\label{treereturn}
    \left\{ \begin{array}{l}
	r_{p,q}^{ex}=r_{p,q}^{si}+0.9*\mathbb{E}\left[ r_{p,q,chi}^{ex} \right] ,\text{branch\ node}\\
	r_{p,q}^{ex}=r_{p,q}^{si},\text{leaf\ node}\\
\end{array} \right. 
\end{equation}
where $r_{p,q}^{si}$ is the roll out reward and $\mathbb{E}\left[ r_{p,q,chi}^{ex} \right]$ is the expectation return of all children nodes of $node_{p,q}$. The final safe action $a_{sa}$ of state $s_{un}$ can be chosen by \eqref{safeac}
\begin{equation}\label{safeac}
a_{sa}=\text{arg}\max_{a\in \mathcal{A}_{sa}^{'}}r_{s_{un},a}^{ex}
\end{equation}

The pseudocode of the searching tree is shown in Alg.\ref{alg:searchingtree}.

\begin{algorithm}[htbp]
\caption{Searching Tree Process}\label{alg:searchingtree}
\begin{algorithmic}
\REQUIRE{The unsafe state,$s_{un}^{t}$;The safe action set, $\mathcal{A}_{sa}^{'}$;The policy net;The expansion width, $W$;The current step,$t$;The update frequency,$t_{up}$}
\ENSURE{A safe action, $a_{sa}$}
\WHILE{$m\le|\mathcal{A}_{sa}^{'}|$}
\STATE Get a new state $s_{sa,m}^{t+1}$ by roll out policy with action $a_m^{'}$
\IF{$(t+1)\%t_{up}\ne 0$}
\WHILE{$(t+1)\%t_{up}\ne 0$}
\WHILE{$w\le W$}
\STATE Get action $a_w$ by policy net
\IF{$a_w$ is monitored safe by Shield}
\STATE{Get new state $s_{sa,w}^{t+2}$by roll out policy with $a_m$}
\IF{$(t+2)\%t_{up}=0$}
\STATE Return Searching Tree by \eqref{treereturn}
\ENDIF
\STATE $w=w+1$
\ELSE
\STATE $w=w+1$
\ENDIF
\ENDWHILE
\STATE$t=t+1$
\ENDWHILE
\ELSE
\STATE Return Searching Tree by \eqref{treereturn}
\ENDIF
\STATE $m=m+1$
\ENDWHILE
\STATE $a_{sa}=\text{arg}\max_{a\in \mathcal{A}_{sa}^{'}}r_{s_{un},a}^{ex}$
\end{algorithmic}
\label{searchingtree}
\end{algorithm}

\begin{algorithm}[htbp]
\caption{SSA-DRL algorithm}\label{alg:SSA}
\begin{algorithmic}
\REQUIRE{Policy neural net parameter,$\theta$;Q-function neural net parameter,$\phi$;Additional policy neural net parameter,$\hat{\theta}$;The maximum training episode,$J$}
\ENSURE{Safe policy parameter, $\hat{\theta}_{sa}$}
\STATE Initialize parameter $\theta,\phi,\hat{\theta}$
\STATE Construct Post-posed Shield $\mathcal{FS}_{\text{sh}}$
\WHILE{ $j\le J$}
\REPEAT
\STATE Observe state $s$ and get an action $a$ by \eqref{OUnoise} or \eqref{Ganoise}
\STATE $\mathcal{FS}_{\text{sh}}$ check action $a$
\IF{$a$ is safe}
\STATE $a_{sa}=a$
\ELSE
\STATE Choose a safe action $a_{sa}$ by Alg.\ref{searchingtree}
\ENDIF
\STATE Execute action $a_{sa}$ and observe next state $s^{'}$, done signal $d$ and reward $r$
\STATE $\mathcal{D}\gets \mathcal{D}\cup \left( s,a,r,s',d \right) $
\IF{$s'$ is the terminal step and $R_j\ge \underset{\text{tr}\in \hat{\mathcal{D}}}{\min\text{\ tr}\left[ r \right]}$}
\STATE $\hat{\mathcal{D}}\gets \hat{\mathcal{D}}\cup \text{tr}_{j} $
\ELSE
\STATE Reset the environment
\ENDIF
\IF{update basic neural network}
\STATE Sample transitions $\mathcal{B}=\{(s,a,r,s^{'},d)\}$ from $\mathcal{D}$
\STATE Compute targets by \eqref{ytarget}, update $\phi$ by \eqref{qtarget}
\STATE Update $\theta$ by
\begin{equation*}
\nabla_{\theta} \frac{1}{|\mathcal{B}|}\sum_{s \in \mathcal{B}}Q_{\phi}(s, \mu_{\theta}(s))
\end{equation*}
\IF{there exists target network}
\STATE Update target networks by soft update
\ENDIF
\ENDIF
\IF{update additional policy neural network}
\STATE Sample whole trajectories $\hat{\mathcal{B}}=\{\text{tr}\}$ from $\hat{\mathcal{D}}$
\FOR{many updates}
\STATE Update $\hat{\theta}$ by \eqref{addup}
\ENDFOR
\ENDIF
\UNTIL Environment is reset
\STATE $j=j+1$
\ENDWHILE
\end{algorithmic}
\label{SSA}
\end{algorithm}
\subsection{DRL based guiding learner}
Though a post-posed Shield has a strong ability to prevent the occurance of unsafe actions, it has two disadvantages \cite{alshiekh2018safe}:
\begin{itemize}
    \item Unsafe actions may be part of the final policy, thus the Shield needs to be active even in after learning phase.
    \item Unsafe actions always will be replaced by safe actions, thus the agent will never learn how to avoid unsafe actions by itself.
\end{itemize}

These two disadvantages are both severe for calculating time-critical tasks like urban rail transit control because if the Shield is always active, the solving time of a safe action may be longer than the control cycle. The second disadvantage will also lead to another problem that the agent does not has the self-protection ability. Aiming at these two disadvantages, the SSA-DRL based on the Shield and searching tree is introduced and simple AC algorithm is used here to illustrate how the learner work.

The SSA-DRL seeks to solve the following optimization problem:
\begin{equation}
\begin{array}{c}
	\pi^{*}=\text{arg} \underset{\pi}{\max} \mathbb{E}[\sum_{t=0}^T{\gamma ^tr\left( s_t,a_{sa,t} \right)}]\\
	\text{s.t.\,\,}a_{sa,t}\in \mathcal{A}^{'}\\
\end{array}
\end{equation}
Then the action-value function $Q^{\pi}(s_t,a_t)$ for policy $\pi$ at step $t$ can be calculated by the given policy net $\mu^{\theta}$ or the searching tree $\mathcal{T}(\cdot)$denoted by \eqref{qsa}.
\begin{equation}\label{qsa}
\begin{aligned}
    Q^{\pi}\left( s_t,a_t \right) 
	&=Q\left( s_t,\left( \mu ^{\theta}\left( s_t \right) ,\mathcal{T}\left( s_t \right) \right) \right)\\
	&=\mathbb{E}\left[ R_{t+1}+\gamma Q^{\pi}\left( s_{t+1},a_{t+1} \right) \right] ,a_t,a_{t+1}\in \mathcal{A}^{'}\\
\end{aligned}
\end{equation}

In the learning process of DRL, random noise is always used to  increase exploration. The noise is added to the policy net and facilitates exploration in the action set for more efficient exploration. Ornstein-Uhlenbeck (OU) noise and Guassian noise are widely used since OU noise is autocorrelation and Gaussian noise is easy to design and realize in real world. Thus, the action chosen by the policy net can be represented as:
\begin{equation}\label{OUnoise}
\left\{ \begin{array}{l}
	\mu ^{\theta}\left( s_t \right) =\mu ^{\theta}\left( s_t|\varTheta _{t}^{\upsilon} \right) +\varepsilon\\
	\varepsilon \sim \mathcal{N}_{\text{OU}}\\
\end{array},\text{OU\,\,noise} \right. 
\end{equation}
\begin{equation}\label{Ganoise}
    \left\{ \begin{array}{l}
	\mu ^{\theta}\left( s_t \right) =\mu ^{\theta}\left( s_t \right) +\lambda \beta\\
	\beta \sim \mathcal{N}_{\text{Ga}}\left( 0,1 \right)\\
\end{array},\text{Ga\,\,noise} \right. 
\end{equation}

The parameter of the action-value function $\phi$ can be learned by minimizing the loss function of the critic net as presented by \eqref{qtarget}.
\begin{equation}\label{qtarget}
\nabla_{\phi} \mathbb{E}_{(s,a,r,s',d) \sim \mathcal{B}} \left[ (Q_{\phi}^{\pi}(s,a) - y(r,s',d))^2 \right]
\end{equation}
Where $\mathcal{B}$ is a sample batch of transitions $(s,a,r,s',d)$ stored in replay buffer $\mathcal{D}$. $y(\cdot)$ is called the target and is usually computed by \eqref{ytarget}.
\begin{equation}\label{ytarget}
     y(r,s',d) = r + \gamma (1-d) Q_{\phi}(s', \mu^{\theta}(s'))
\end{equation}
It is noted here that the memory in replay buffer $\mathcal{D}$ follows first in first out principle and the experience batch is randomly sampled thus the sampled experience may not contain a whole trajectory. Then another replay buffer $\hat{\mathcal{D}}$ and an additional policy neural net $\hat{\mathcal{\mu}}$ are introduced.

The replay buffer $\hat{\mathcal{D}}$ is used to save $\hat{N}$ whole trajectories $(S_{\hat{n}},A_{\hat{n}},R_{\hat{n}})$ with highest reward $\hat{R}_{\hat{n}}$ denoted by \eqref{bestmemory}. The experiences in $\hat{\mathcal{D}}$ follow the best in worst out principle and are ranked by the value of the reward.
\begin{equation}\label{bestmemory}
    \begin{aligned}
        \hat{\mathcal{D}}&=\left[ \text{tr}_1,...,\text{tr}_{\hat{n}} \right],\hat{n}\in[1,\hat{N}]\\
        \text{tr}_{\hat{n}}&=\left[ s_t^{\text{tr}_{\hat{n}}},a_t^{\text{tr}_{\hat{n}}},r_t^{\text{tr}_{\hat{n}}}  \right] ,t\in \left[ 0,T \right] 
    \end{aligned}
\end{equation}

The additional net $\hat{\mu}$ with parameter $\hat{\theta}$ is used to study the self-protection ability. Since there exists $a_t^{\text{tr}_{\hat{n}}}\in \mathcal{A}_{sa}^{'}$, thus if $\forall \hat{n}\in \hat{N},\exists \mathbb{E}[| a_{t}^{\text{tr}_{\hat{n}}}-\hat{\mu}^{\hat{\theta}}\left( s_{t}^{\text{tr}_{\hat{n}}} \right) |^{2}] <\hat{\epsilon}$, the additional policy net can be used as the final policy net. The parameter $\hat{\theta}$ can be updated by \eqref{addup}.
\begin{equation}\label{addup}
    \nabla _{\hat{\theta}}\frac{1}{|\hat{\mathcal{B}}|}\sum_{\left( s_{t}^{\text{tr}_{\hat{n}}},a_{t}^{\text{tr}_{\hat{n}}} \right) \in \hat{\mathcal{B}}}{\left( a_{t}^{\text{tr}_{\hat{n}}}-\hat{\mu}^{\hat{\theta}}\left( s_{t}^{\text{tr}_{\hat{n}}} \right) \right)}^2, |\hat{\mathcal{B}}|\le\hat{N}
\end{equation}
Where $\hat{\mathcal{B}}$ is a sample batch of several whole trajectories stored in $\hat{\mathcal{D}}$. Moreover, the structure of $\hat{\mu}$ may be different from the structure of $\mu$, they only have the same dimension and scale of input and output. This method can improve the generalization ability and deployability of the final policy net. 
Then the whole SSA-DRL algorithm is outlined in Alg.\ref{SSA}. It is noted here that the specific steps for updating different DRL algorithms are not exactly the same, the steps in Alg.\ref{SSA} only involves three core ideas in DRL which are policy evaluation, policy improvement and target network. Once other DRL algorithm is used to implement the SSA-DRL algorithm, only some update steps in Alg.\ref{SSA} need to be addressed but the core idea is unchanged. In simulation section, the effectiveness of SAC based SSA-SAC algorithm is a good example to verify the above idea.

\subsection{Optimality and Convergence Analysis}
\subsubsection{Optimality}
In the learning process, there actually exists two policies to get an action, the policy net and the searching tree. The optimality of the policy net does not need to be proved. Then two aspects of the optimality of searching tree are discussed, the first is the original optimality of $a_{sa}$ itself and the second is the policy $\pi_{sa}$ to get $a_{sa}$ is no less than the policy net $\mu^{\theta}$ to get an action once a state is monitored unsafe.
\newtheorem{lemma}{Lemma}
\begin{lemma}
    For policies to get a safe action, $\pi_{sa}$ to get $a_{sa}$ is better than any other policies to get another $\pi_{sa}^{'}$. Moreover, $\pi_{sa}$ is no less than the original policy $\mu^{\theta}$ to get a safe action.
\end{lemma}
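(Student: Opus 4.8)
The plan is to view the searching tree as a truncated, sampled \emph{rollout} of the base policy $\mu^{\theta}$, so that the claim reduces to the classical policy-improvement (rollout) theorem. The first step I would carry out is to make precise the identity that the tree return $r^{ex}_{s_{un},a'}$ of \eqref{treereturn} is (an estimate of) the action value of playing the safe action $a'$ at the unsafe state $s_{un}$ and thereafter following the shielded base policy $\mu^{\theta}$; write this value $Q^{\mu^{\theta}}(s_{un},a')$. Indeed, \eqref{treereturn} is exactly a finite-horizon Bellman backup with discount $0.9$: each leaf contributes its simulated one-step reward $r^{si}$, and each branch node adds its own simulated reward to the discounted average over its surviving children, where the average over the $n_{ex}^{'}$ safe children plays the role of the expectation over the noisy action distribution of $\mu^{\theta}$ together with the environment transition $p$. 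Under the standing assumptions that the simulation model agrees with $p$, that the tree discount equals the MDP discount $\gamma$, and that the expansion width $W$ is large enough that these empirical child-averages approximate the corresponding expectations, $r^{ex}_{s_{un},a'}$ equals the $H$-horizon action value $Q^{\mu^{\theta}}_{H}(s_{un},a')$ with $H$ the number of steps to the next policy update.

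Granting this, the first assertion --- that among all policies which choose a safe action at $s_{un}$ and then follow $\mu^{\theta}$, the one choosing $a_{sa}$ is best --- is immediate from \eqref{safeac}. For any competing safe action $a'\in\mathcal{A}_{sa}^{'}$ and the policy $\pi_{sa}^{'}$ that plays $a'$ at $s_{un}$ and $\mu^{\theta}$ afterwards, $V^{\pi_{sa}^{'}}(s_{un})=Q^{\mu^{\theta}}(s_{un},a')=r^{ex}_{s_{un},a'}\le r^{ex}_{s_{un},a_{sa}}=Q^{\mu^{\theta}}(s_{un},a_{sa})=V^{\pi_{sa}}(s_{un})$ because $a_{sa}$ is the argmax; since $\pi_{sa}$ and $\pi_{sa}^{'}$ coincide on every state reached after $s_{un}$, this inequality propagates to all states by the usual dominance argument, so $\pi_{sa}\ge\pi_{sa}^{'}$.

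For the second assertion I would invoke the policy improvement theorem (Sutton \& Barto). Let $a_{\mu}$ be the safe action obtained by projecting the base output $\mu^{\theta}(s_{un})$ onto $\mathcal{A}_{sa}^{'}$ --- this is exactly the action the shielded base policy takes at $s_{un}$. Since $a_{\mu}\in\mathcal{A}_{sa}^{'}$, the argmax in \eqref{safeac} gives $Q^{\mu^{\theta}}(s_{un},a_{sa})=r^{ex}_{s_{un},a_{sa}}\ge r^{ex}_{s_{un},a_{\mu}}=Q^{\mu^{\theta}}(s_{un},a_{\mu})=V^{\mu^{\theta}}(s_{un})$. Hence $\pi_{sa}$ satisfies $Q^{\mu^{\theta}}(s,\pi_{sa}(s))\ge V^{\mu^{\theta}}(s)$ at every state (at $s_{un}$ as just shown, and with equality wherever $\pi_{sa}=\mu^{\theta}$), and the policy improvement theorem then yields $V^{\pi_{sa}}(s)\ge V^{\mu^{\theta}}(s)$ for all $s$, i.e.\ $\pi_{sa}$ is no less than $\mu^{\theta}$.

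I expect the first step to be the main obstacle: rigorously identifying the pruned, finite-depth, finitely-sampled tree return with the action value $Q^{\mu^{\theta}}$. Three approximations must be handled --- (i) the horizon is truncated at the update step rather than run to termination; (ii) only $W$ children are sampled and the unsafe ones are discarded, so each child average is a Monte Carlo estimate over the shielded noise distribution rather than an exact expectation; and (iii) the literal discount $0.9$ in \eqref{treereturn} must be taken to be $\gamma$. I would dispatch (i) either by stating the lemma for the $H$-horizon value functions or by bounding the truncation tail by $\gamma^{H}R_{\max}/(1-\gamma)$ uniformly in the candidate action (so it leaves the argmax ordering intact), (ii) by passing to the $W\to\infty$ idealization in which the child averages become the intended expectations, and (iii) by convention. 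Everything downstream is then the textbook rollout-improvement argument.
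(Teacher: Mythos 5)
Your proposal follows essentially the same route as the paper: the first claim is obtained from the greedy selection in \eqref{safeac}, and the second from the standard policy-improvement expansion applied at the single deviation state $s_{un}$, with the truncation at the update step ($\left(t+n\right)\%t_{up}\ne 0$) acknowledged as the non-rigorous part, exactly as the paper itself concedes. You are somewhat more careful in two places the paper glosses over --- you justify the base inequality $q_{\mu^{\theta}}\left(s_{un},\pi_{sa}\left(s_{un}\right)\right)\ge v_{\mu^{\theta}}\left(s_{un}\right)$ by comparing against the projected safe action of the shielded base policy rather than declaring it obvious, and you explicitly flag the finite expansion width and the $0.9$-versus-$\gamma$ discount mismatch in \eqref{treereturn} --- but the skeleton of the argument is identical.
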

\begin{proof}
    The concept of policy improvement is used here to make the proof. In Alg.\ref{searchingtree} obviously, the final safe action $a_{sa}$ is actually chosen by the greedy strategy, thus $a_{sa}$ is the action with the highest long-term reward  and then $\pi_{sa}$ is better than other $\pi_{sa}^{'}$.
    Then, the idea of policy improvement is used to prove that when choosing a safe action, $\pi_{sa}$ is no less than $\mu^{\theta}$. Since the actions used in searching tree are all generated by $\mu^{\theta}$, thus they are identical equal besides the initial unsafe state $\pi_{sa}(s_{un})\ne \mu^{\theta}(s_{un})$. Then if $q_{\mu_{\theta}}(s_{un},\pi_{sa}(s_{un}))\ge v_{\mu^{\theta}}(s_{un})$, the policy $\pi_{sa}$ is no less than policy $\mu^{\theta}$. Keep expanding $q_{\mu^{\theta}}$ we can get \eqref{optimalproof}. It is obvious $q_{\mu_{\theta}}(s_{un},\pi_{sa}(s_{un}))\ge v_{\mu^{\theta}}(s_{un})$, thus $\pi_{sa}(s_{un})$ is no less than $\mu^{\theta}$. We must admit here this proof is not very rigorous, since the original policy improvement method requires $t+n$ to be infinite, but in this paper a clipped form is used, which is $\left( t+n \right) \%t_{up}\ne 0$. 
\begin{equation}\label{optimalproof}
\begin{array}{l}
	v_{\mu^\theta}\left(s_{un}\right)\leq q_{\mu ^{\theta}}\left( s_{un},\pi _{sa}\left( s_{un} \right) \right) \vspace{1ex}\\
	=\mathbb{E}\left[ R_{t+1}+\gamma v_{\mu ^{\theta}}\left( S_{t+1} \right) \mid S_t=s_{un},A_t=\pi _{sa}\left( s_{un} \right) \right] \vspace{1ex}\\
	=\mathbb{E}_{\pi _{sa}}\left[ R_{t+1}+\gamma v_{_{\mu ^{\theta}}}\left( S_{t+1} \right)\mid S_t=s_{un} \right] \vspace{1ex}\\
	\leq \mathbb{E}_{\pi _{sa}}\left[ R_{t+1}+\gamma q_{_{\mu ^{\theta}}}\left( S_{t+1},\pi _{sa}\left( S_{t+1} \right) \right) \mid S_t=s_{un}\right] \vspace{1ex}\\
	=\mathbb{E}_{\pi _{sa}}\left[ R_{t+1}+\gamma \mathbb{E}\left[ R_{t+2}+\gamma v_{\mu ^{\theta}}\left( S_{t+2} \right) \right] \mid S_t=s_{un}\right] \vspace{1ex}\\
	=\mathbb{E}_{\pi _{sa}}\left[ R_{t+1}+\gamma R_{t+2}+\gamma ^2v_{\mu ^{\theta}}\left( S_{t+2} \right) \mid S_t=s_{un}\right] \vspace{1ex}\\
	\leq \mathbb{E}_{\pi _{sa}}\left[ R_{t+1}+\gamma R_{t+2}+\gamma ^2R_{t+3}+\gamma ^3v_{\mu ^{\theta}}\left( S_{t+3} \right) \mid S_t=s_{un}\right] \vspace{1ex}\\
	\vdots\\
	\leq \mathbb{E}_{\pi _{sa}}\left[ \underset{\text{eq}.\left( 11 \right)}{\underbrace{R_{t+1}+\cdots +\gamma ^{n-1}R_{t+n}}} \mid S_t=s_{un}\right] \vspace{1ex}\\
	=v_{\pi _{sa}}\left( s_{un} \right)\vspace{1ex}\\
\end{array}
\end{equation}
\hfill \qedhere
\end{proof}

    
\subsubsection{Convergence}
The convergence of the post-posed Shield has been verified in \cite{alshiekh2018safe} and the feasibility of using MDP specification and safety specification to protect the operation of train has been verified in \cite{zhao2022safe}, thus the post-posed Shield in this paper satisfies the convergence analysis in \cite{alshiekh2018safe}. The simulation results in Section V also verifies the convergence of SSA-DRL algorithm.
\subsection{Algorithm Implementation}
In this subsection, the state set, action set, reward function and the relationship between action and the acceleration are discussed to complete the SSA-DRL based urban rail transit autonomous operation algorithm.
\subsubsection{State Set}
The location $\textit{loc}$, velocity $\textit{vel}$, running time $\textit{time}$ are used to formulated the state set. Thus the state of the agent at step $t$ can be formulated as \eqref{stateset}.
\begin{equation}\label{stateset}
    s_{t}=(\textit{loc}_{t},\textit{vel}_{t},\textit{time}_{t})
\end{equation}
\subsubsection{Action Set}
The percentage of the traction braking control command output to the motor of train is used as the action. The action set is continuous and ranges from -1 to 1. If the value is less than 0, the command is braking otherwise the command is traction. Then the action at step $t$ can be formulated as \eqref{actionset}.
\begin{equation}\label{actionset}
    a_t\in[-1,1]
\end{equation}
\subsubsection{Reward Function}
Since the operation has been protected and the main purpose is autonomous driving, then operation energy consumption $E$, operation time difference $D^T$ and the comfort of passengers $C$ are used to build the reward function. In each transition step $t$, $E_t,D^T_t,C_t$ are calculated by \eqref{energy},\eqref{time} and \eqref{comfort}. 
\begin{equation}\label{energy}
    E_t=\left\{ \begin{array}{l}
	\alpha _{E^{tr}}*E^{tr},a>0\\
	\alpha _{E^{re}}*E^{re},a\le 0\\
\end{array} \right. 
\end{equation}
\begin{equation}\label{time}
    D^T_t=\left\{ \begin{array}{l}
	\alpha _{D^T}*|T^{total}-T^{sch}|,\text{terminal}\\
	\alpha _{D^T}^{'}*|\bar{v}_t-\bar{v}|,\text{mid\,\,step}\\
\end{array} \right. 
\end{equation}
\begin{equation}\label{comfort}
    C_t=\left\{ \begin{array}{l}
	\kappa ,\Delta acc>\sigma\\
	0,\Delta acc<\sigma\\
\end{array} \right. 
\end{equation}
Where $E^{tr}$ is the traction energy consumption, $E^{re}$ is the recovered regenerative braking energy, $\alpha _{E^{tr}},\alpha _{E^{re}},\alpha_{D^{T}},\alpha_{D^{T}}^{'}$ are the weights of energy reward and time reward, $T^{total},T^{sch}$ are the total operation time and scheduled operation time, $\bar{v_{t}},\bar{v}$ are the average speed in step $t$ and the overall average speed, $\kappa$ is a punishment indicator, $\Delta acc$ is the rate of change of acceleration and $\sigma$ is a threshold. The reward of transition step $t$ is then defined as \eqref{reward}. 
\begin{equation}\label{reward}
    R_t=-E_t-D^{T}_t-C_t
\end{equation}
It is also noted that the value of $E^{re}$ is set negative then the train can learn to reduce total energy consumption.
\subsubsection{Relationship between control command and train operation}
The operation of train is restricted by the traction braking characteristic thus the control command cannot directly represent the movement of train. Suppose a given command $a^{tr}_{t}>0$ at step $t$, the acceleration output by the traction motor is $acc^{motor}_{t}=F^{tr}_{+}(v_{t})*a^{tr}/m^{train}$ where $F^{tr}_{+}(v_{t})$ denotes the max traction force at speed $v_{t}$ is a function of speed and $m^{train}$ is the weight of train. Likely, the acceleration of braking can be computed in the same way. Then the actual acceleration of train can be computed by \eqref{acctrain}.
\begin{equation}\label{acctrain}
    acc^{train}=\frac{\left( acc^{motor}-acc^r+g\left( x \right) \right)}{v}
\end{equation}
Where $acc^{r}$ is the resistance acceleration calculated by the Davis function and $g(x)$ is the gravity acceleration at location $x$. Moreover, there exists no steep slope in this paper thus \eqref{gravitycondition} always holds.
\begin{equation}\label{gravitycondition}
    0\le acc^r-g\left( x \right) \le \max \left( acc^{motor} \right) 
\end{equation}

\section{Simulation Results and Performance Evaluation}
\subsection{Simulation Environment}
The simulation is based on Chengdu urban rail transit line 17, there are a total of sixteen sections in up and down directions. The value of the Davis parameters are $r_1=8.4,r_2=0.1071,r_3=0.00472$ respectively, the weight of the train is 337.8 ton, the max and min acceleration are $\pm 1.2\text{m/s}^2$ and the traction braking characteristic is shown in Fig.~\ref{character}. The main parameters used to construct the SSA-DRL algorithm are shown in Table\ref{table1}.

\begin{figure}[!t]
\centering
\includegraphics[width=3in]{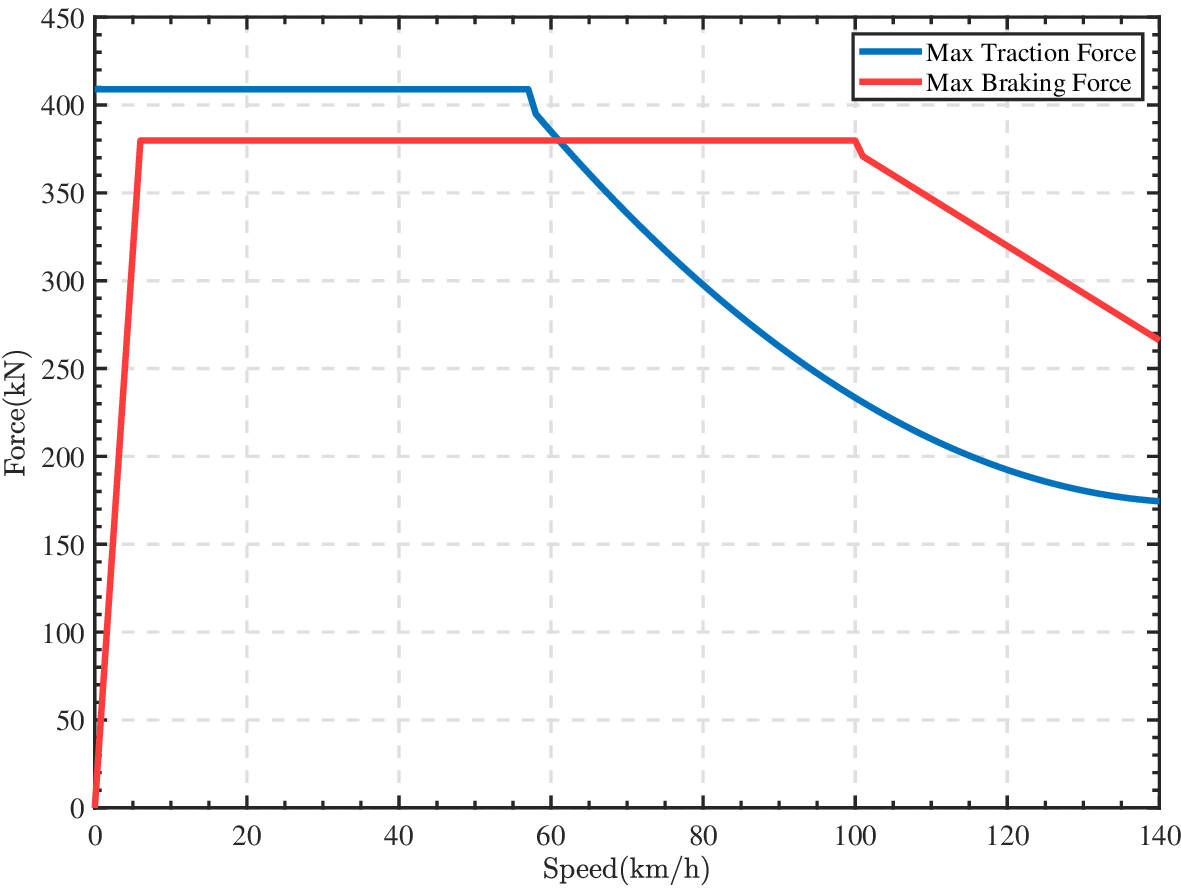}
\caption{Traction and braking characteristic.}
\label{character}
\end{figure}

\begin{table}[htbp]
        \renewcommand{\arraystretch}{1.2}
        \setlength{\abovecaptionskip}{-0.2cm} 
	\caption{The Main Parameters of the Algorithm}
	\label{table1}
	\begin{center}
		\begin{tabular}{cccc}
			\hline
			\textbf{Parameters} & \textbf{Value} & \textbf{Parameters} & \textbf{Value} \\
			\hline
		Traction Weight	$\alpha _{E^{tr}}$  & 3  & Regenerative Weight $\alpha _{E^{re}}$  & 3 \\
	Time Weight	$\alpha _{D^T}$   & 15  & Time Weight $\alpha _{D^T}^{'}$   & 25   \\
		Comfort Punishment	$\kappa$    & 10     & Change Threshold $\sigma$    & 3   \\
		Minibatch $|\mathcal{B}|$ & 256     & Minibatch $|\hat{\mathcal{B}}|$  & 10   \\
	Max Episode $J$   & 500    & Update Frequency $t_{up}$    & 5   \\
			\hline
		\end{tabular}
	\end{center}
	\vspace{-0.45cm}
\end{table}

\begin{table}[htbp]
        \renewcommand{\arraystretch}{1.2}
        \setlength{\abovecaptionskip}{-0.2cm} 
	\caption{The Main Hyperparameters of the Algorithm}
	\label{table2}
	\begin{center}
		\begin{tabular}{cccc}
			\hline
			\textbf{Parameters (DDPG)} & \textbf{Value} & \textbf{Parameters (SAC)} & \textbf{Value} \\
			\hline
		Actor learning rate	 & $1\text{e}^{-5}$  & Policy learning rate   & $1\text{e}^{-5}$ \\
	Critic learning rate	& $1\text{e}^{-3}$  & Value learning rate    & $1\text{e}^{-3}$  \\
		Discount factor & 0.99     & Discount factor  & 0.99   \\
	Soft update factor   & $1\text{e}^{-2}$    & Soft update factor    & $1\text{e}^{-2}$   \\
         /& /     & Soft-Q learning rate    & $3\text{e}^{-5}$   \\
			\hline
		\end{tabular}
	\end{center}
	\vspace{-0.45cm}
\end{table}

The basic DDPG and SAC are used as the baseline to implement autonomous operation in this paper, and the AC networks are both designed through four fully connected hidden layer, the activation functions in hidden layers are Relu, the size of hidden layers is 256 and the final activation function of the actor net is tanh to ensure the output ranges in $\left[ -1,1 \right] $. The optimizers are all Adam. The main hyperparameters of these two algorithms are shown in Table.\ref{table2}. It is noted that in the simulation SSA-DRL algorithm shares the same hyperparameters with the baseline. Moreover, the originally additional actor is designed the same with the actor, but with a five times learning rate. The influence caused by the design of additional actor will be discussed by the ablation experiment. The proposed algorithm is implemented in Matlab and Python on a computer with an AMD Ryzen 7 5800X CPU @ 3.80Ghz and 32GB RAM running Windows 10 x64 Edition. 
\subsection{Basic Simulation} 
The basic simulation aims to verify that the proposed SSA-DRL can control the train complete the operation plan with higher reward and better performance under less protect times in both training and execution process. Here,the protect times is the number of counts the algorithm re-chooses a safe action to correct the original action in training or execution process. Fig.~\ref{fig:speedprofile} shows the speed profiles of the SSA-DRL algorithm in one simulation. Fig.~\ref{fig:uprewardcurve} shows the reward curves of SSA-DRL, Shield-DRL and common DRL algorithms. It can be clearly seen that in most scenarios SSA-DRL can achieve a higher reward than Shield-DRL and the reward of Shield-DRL is also higher than common DRL. Also, SSA-DRL can achieve convergence at a earlier step. It is noted that the reward curves are all smoothed by the moving average and the size of the window is 8. The detailed numerical results are shown in Table.\ref{table4}. The data in time and energy columns are acquired from one operation plan. Since the data of regenerative braking energy in real world can not be acquired, thus it is not considered in energy column but in simulation column. Moreover, the operation time is not fixed and a margin of thirty seconds is allowed. And in the simulation columns, the data are recorded by $ave\pm std$. Rreaders may think that the speed profiles do not match Table.\ref{table4}, it should be made clear that the speed profiles are only results of one simulation but the data in Table.\ref{table4} are numerical results after several simulation times.

Fig.~\ref{fig:protectcountsbox} shows the distribution of the protect times of SSA-DRL algorithms in both training process and execution process. For the SSA-SAC algorithm, the data are acquired by five different seeds with 500 training episodes and 10 execution episodes in each simulation. For the SSA-DDPG algorithm, the only difference is that 400 episodes in each simulation are used because the DDPG algorithm needs to fill the replay buffer. In this case, the data capacity are 2500, 50 for SSA-SAC and 2000, 50 for SSA-DDPG. To better illustrate that the proposed SSA-DRL can effectively reduce the protect times compared with Shield-DRL, we get the same amount of Shield-DRL data and the detailed results are shown in  Table.\ref{tablecompare}. In the Protect Times row, average protect times are shown and in the comparison row, the number is calculated by $|\frac{|\text{SSA-DRL}|-|\text{Sh-DRL}|}{|\text{Sh-DRL}|}|\times 100$. Thus if the number in comparison row is positive, it means that compared with the Shield-DRL algorithm, SSA-DRL algorithm has a decline in $\%$. The bold and underline in each column show the most and lest decline in the same process with the same basic DRL algorithm. It can be acquired from Table.\ref{tablecompare} that 1) the comparison rows are all positive thus SSA-DRL can always reduce the protect times compared with Shield-DRL, 2) except two special situations (Section1 up direction training and Section6 down direction execution), the decline in training and execution are large and the max decline are 80.84$\%$ and 100$\%$ in training and execution respectively, 3) except one special situation (Section8 up direction training), once the process is same, the protect times of Shield-DRL are always larger than the SSA-DRL no matter the basic algorithm is. Fig.~\ref{fig:protectcounts} shows bar graphs of the data in Table.\ref{tablecompare}. It is more clear that compared with the Shield-DRL algorithms, the protect times of SSA-DRL algorithms has greatly reduced.

\begin{table*}[htbp]
\renewcommand{\arraystretch}{1.5}
\setlength{\abovecaptionskip}{-0.2cm} 
\caption{The Original Simulation Results}
\label{table4}
\begin{center}
\begin{tabular}{cccccccccc}
\hline
\multirow{2}{*}{Section} & \multicolumn{1}{c}{\multirow{2}{*}{Direction}} & \multicolumn{1}{c}{\multirow{2}{*}{Time(s)}} & \multicolumn{2}{c}{Simulation Time(s)} & \multicolumn{1}{c}{\multirow{2}{*}{Energy(kw$\cdot$h)}} & \multicolumn{2}{c}{Simulation Energy(kw$\cdot$h)} & \multicolumn{2}{c}{Overspeed Counts} \\ \cline{4-5} \cline{7-10} 
                     & \multicolumn{1}{c}{}                           & \multicolumn{1}{c}{}                      & SSA-SAC           & SSA-DDPG          & \multicolumn{1}{c}{}                        & SSA-SAC            & SSA-DDPG            & SSA-SAC            & SSA-DDPG           \\ \hline
\multirow{2}{*}{1}   & Up                                             & 86                                        & 135.66$\pm$57.90        & 106.17$\pm$11.38               & 52                                          & 29.38$\pm$1.12                & 21.98$\pm$2.02                 & 0                & 0                \\ \cline{2-10}
                     & Down                                           & 92                                 & 107.80$\pm$11.45               & 109.74$\pm$6.07               & 45                                          & 21.61$\pm$1.66                & 24.12$\pm$5.25                 & 0                & 0                \\ \hline
\multirow{2}{*}{2}   & Up                                             & 183                                        & 215.46$\pm$21.24          & 228.30$\pm$27.35            & 116                                          & 53.52$\pm$2.04                &     47.67$\pm$7.34             & 0                & 0                \\ \cline{2-10}
                     & Down                                           & 204                             & 208.24$\pm$6.34               & 206.38$\pm$2.53               & 88                                          & 23.41$\pm$1.25                & 42.37$\pm$9.78                 & 0                & 0                \\ \hline
\multirow{2}{*}{3}   & Up                                             & 276                                        &   298.88$\pm$6.63  & 288.76$\pm$10.32               & 160                                          &   83.85$\pm$11.91              &   55.93$\pm$3.09               & 0                & 0                \\ \cline{2-10}
                     & Down                                           & 259                                        & 267.76$\pm$14.16               & 293.65$\pm$6.95               & 142                                          &  85.10$\pm$15.43             &  31.53$\pm$3.01               & 0                & 0                \\ \hline
\multirow{2}{*}{4}   & Up                                             & 104                                        & 130.45$\pm$15.27               & 137.77$\pm$12.90        & 56                                          & 30.54$\pm$1.87                & 36.73$\pm$6.63                 & 0                & 0                \\ \cline{2-10}
                     & Down                                           & 101                             & 107.75$\pm$1.89               & 104.71$\pm$3.88               & 61                                          & 16.44$\pm$1.32                & 31.01$\pm$6.85                 & 0                & 0                \\ \hline
\multirow{2}{*}{5}   & Up                                             & 104                                        & 121.53$\pm$1.63               & 130.28$\pm$21.16         & 68                                          & 21.80$\pm$0.71                & 33.77$\pm$6.63                 & 0                & 0                \\ \cline{2-10}
                     & Down                                           & 103                             & 107.64$\pm$2.58               & 106.03$\pm$3.49               & 67                                          & 31.60$\pm$2.68                & 39.84$\pm$4.87                 & 0                & 0                \\ \hline
\multirow{2}{*}{6}   & Up                                             & 105                                        & 126.78$\pm$5.87               & 117.11$\pm$3.34       & 63                                          & 42.10$\pm$2.70                & 47.08$\pm$4.69                 & 0                & 0                \\ \cline{2-10}
                     & Down                                           & 106                             & 110.03$\pm$4.31               & 109.46$\pm$3.63               & 48                                          & 20.19$\pm$9.13                & 16.32$\pm$5.46                 & 0                & 0                \\ \hline
\multirow{2}{*}{7}   & Up                                             & 105                                        & 113.42$\pm$5.09      & 114.54$\pm$10.82               & 63                                          & 40.07$\pm$1.40                & 43.86$\pm$3.86                 & 0                & 0                \\ \cline{2-10}
                     & Down                                           & 138                             & 138.82$\pm$4.74               & 134.46$\pm$1.36               & 48                                          & 13.90$\pm$3.59                & 16.31$\pm$4.11                 & 0                & 0                \\ \hline
\multirow{2}{*}{8}   & Up                                             & 172                                        &     170.64$\pm$4.30            & 180.07$\pm$3.70          & 63                                          &  75.15$\pm$6.27               & 60.23$\pm$4.26                & 0                & 0                \\ \cline{2-10}
                     & Down                                           & 171                            & 176.52$\pm$2.28               & 172.50$\pm$2.59               & 48                                          & 39.28$\pm$3.04                & 33.75$\pm$3.70                 & 0                & 0                \\ \hline
\end{tabular}
\end{center}
\vspace{-0.45cm}
\end{table*}

\begin{figure*}
\begin{minipage}{1\linewidth}
\centerline{\includegraphics[width=18cm]{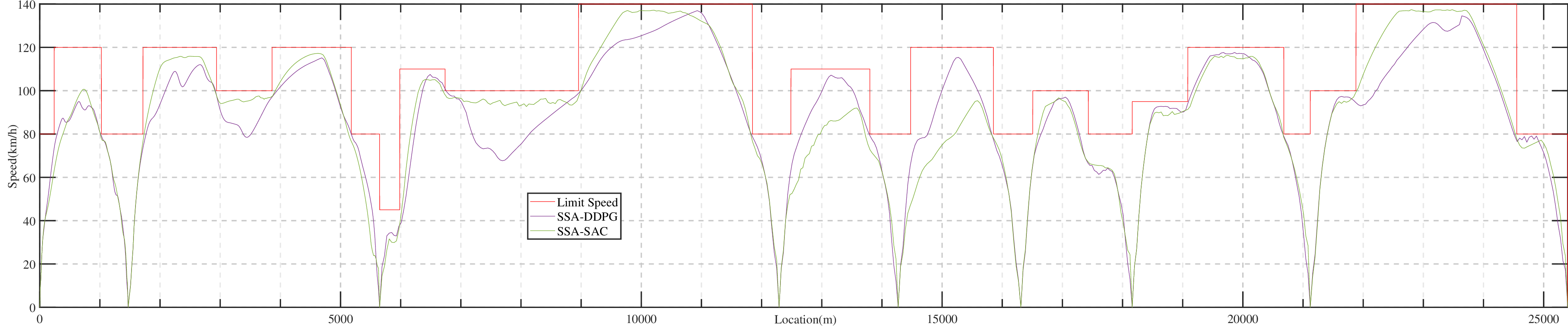}}
\centerline{(a) Speed profiles in up direction}
\end{minipage}
\hfill
\begin{minipage}{1\linewidth}
\centerline{\includegraphics[width=18cm]{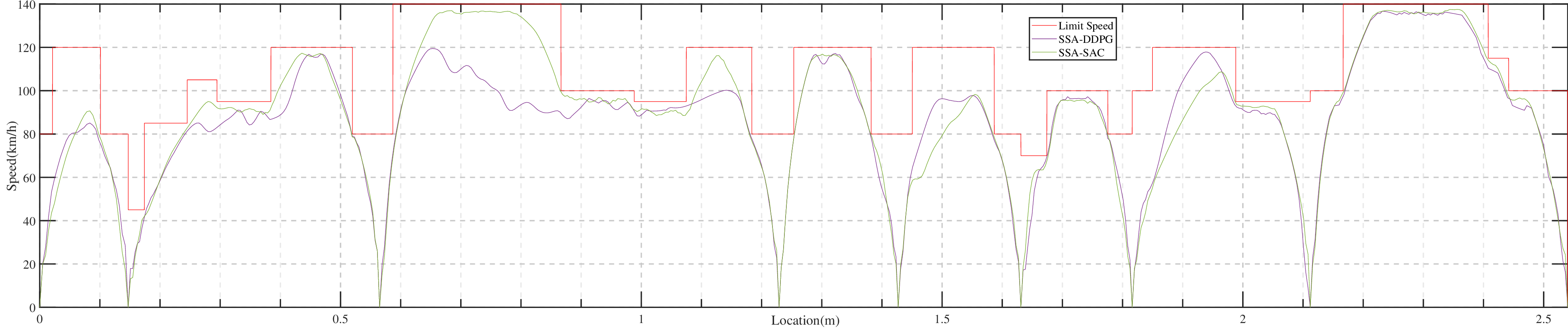}}
\centerline{(b) Speed profiles in down direction}
\end{minipage}
\hfill
\caption{Speed profiles in different sections.}
\label{fig:speedprofile}
\end{figure*}

\begin{table*}[htbp]
\renewcommand{\arraystretch}{1.5}
\setlength{\abovecaptionskip}{-0.2cm} 
\caption{The Protect Times Comparison between SSA-DRL and Shield-DRL}
\label{tablecompare}
\begin{center}
\begin{tabular}{cclllllllll}
\hline
\multirow{2}{*}{Section} & \multirow{2}{*}{Direction} & \multicolumn{1}{c}{\multirow{2}{*}{Content}} & \multicolumn{4}{c}{Train}  & \multicolumn{4}{c}{Execution}            \\ \cline{4-11} 
                         &                            & \multicolumn{1}{c}{}                         & SSA-SAC     & Sh-SAC    & SSA-DDPG    & Sh-DDPG& SSA-SAC     & Sh-SAC    & SSA-DDPG    & Sh-DDPG   \\
\hline
\multirow{4}{*}{1}       & \multirow{2}{*}{Up}        & Protect Times                      &    24.18    &  24.69    &  16.55        &  82.07 & 0.00 & 31.24& 5.40& 92.00   \\ \cline{3-11} 
                         &                            & Comparison($\%$)                                   & \multicolumn{2}{c}{$\underline{2.06}$} & \multicolumn{2}{c}{\textbf{79.83}}& \multicolumn{2}{c}{\textbf{100}} & \multicolumn{2}{c}{94.13} \\ \cline{2-11}
                         & \multirow{2}{*}{Down}      & Protect Times    &  12.77     &  16.66   &      6.60   &   31.90&0.04&29.86&1.80&19.00  \\ \cline{3-11}
                         &                            & Comparison($\%$)                                   & \multicolumn{2}{c}{19.32} & \multicolumn{2}{c}{$\underline{23.31}$}& \multicolumn{2}{c}{99.27} & \multicolumn{2}{c}{90.53} \\
\hline
\multirow{4}{*}{2}       & \multirow{2}{*}{Up}        & Protect Times             &    40.51   &  72.28    &   33.62    &  83.88&0.04&64.96&8.40&58.40   \\ \cline{3-11}
                         &                            & Comparison($\%$)                                   & \multicolumn{2}{c}{43.96} & \multicolumn{2}{c}{59.92}& \multicolumn{2}{c}{99.94} & \multicolumn{2}{c}{85.62} \\ \cline{2-11}
                         & \multirow{2}{*}{Down}      & Protect Times  &   12.02    &  62.75   & 24.33        &   45.00&0.06&19.40&16.00&86.75  \\ \cline{3-11}
                         &                            & Comparison($\%$)                                   & \multicolumn{2}{c}{\textbf{80.84}} & \multicolumn{2}{c}{45.93}& \multicolumn{2}{c}{99.69} & \multicolumn{2}{c}{81.56}\\ 
\hline
\multirow{4}{*}{3}       & \multirow{2}{*}{Up}        & Protect Times         &   36.20    &  62.98    &  29.96   & 79.75&0.04&68.92&24.20&136.00   \\ \cline{3-11}
                         &                            & Comparison($\%$)                                   & \multicolumn{2}{c}{42.52} & \multicolumn{2}{c}{62.43}& \multicolumn{2}{c}{99.94} & \multicolumn{2}{c}{82.21} \\ \cline{2-11}
                         & \multirow{2}{*}{Down}      & Protect Times      &     24.49     &  79.13    &     45.13     &   99.74&0.04&103.65&9.40&126.40  \\ \cline{3-11}
                         &                            & Comparison($\%$)                                   & \multicolumn{2}{c}{69.04} & \multicolumn{2}{c}{54.76}& \multicolumn{2}{c}{99.96} & \multicolumn{2}{c}{92.56}\\ 
\hline
\multirow{4}{*}{4}       & \multirow{2}{*}{Up}        & Protect Times    &      9.67   &   23.70  &   15.69      &  35.54&0.00&24.80&5.80&38.60   \\ \cline{3-11}
                         &                            & Comparison($\%$)                                   & \multicolumn{2}{c}{59.18} & \multicolumn{2}{c}{55.85}& \multicolumn{2}{c}{\textbf{100}} & \multicolumn{2}{c}{84.97} \\ \cline{2-11}
                         & \multirow{2}{*}{Down}      & Protect Times       &   15.85  & 23.88     &       21.79   &   27.78&0.10&18.54&1.80&39.80    \\ \cline{3-11}
                         &                            & Comparison($\%$)                                   & \multicolumn{2}{c}{33.61} & \multicolumn{2}{c}{36.01}& \multicolumn{2}{c}{99.46} & \multicolumn{2}{c}{95.48}\\ 
\hline
\multirow{4}{*}{5}       & \multirow{2}{*}{Up}        & Protect Times      &   10.83       &  25.20   &   22.21   &  29.21&0.08&28.72&0.20&35.00   \\ \cline{3-11}
                         &                            & Comparison($\%$)                                   & \multicolumn{2}{c}{57.04} & \multicolumn{2}{c}{23.98}& \multicolumn{2}{c}{99.72} & \multicolumn{2}{c}{\textbf{99.43}} \\ \cline{2-11}
                         & \multirow{2}{*}{Down}      & Protect Times   &   11.16  &  23.75 &  20.40        &  27.78&0.14&18.28&7.40&34.00  \\ \cline{3-11}
                         &                            & Comparison($\%$)                                   & \multicolumn{2}{c}{53.03} & \multicolumn{2}{c}{26.57}& \multicolumn{2}{c}{99.23} & \multicolumn{2}{c}{72.24}\\ 
\hline
\multirow{4}{*}{6}       & \multirow{2}{*}{Up}        & Protect Times       &    8.54    &   33.74     &   15.44       & 30.86&0.02&33.75&8.20&38.00    \\ \cline{3-11}
                         &                            & Comparison($\%$)                                   & \multicolumn{2}{c}{74.68} & \multicolumn{2}{c}{49.97}& \multicolumn{2}{c}{99.94} & \multicolumn{2}{c}{78.42} \\ \cline{2-11}
                         & \multirow{2}{*}{Down}      & Protect Times   &   12.09      &  25.55    &     18.99    &  26.02&0.12&30.25&18.60&18.80   \\ \cline{3-11}
                         &                            & Comparison($\%$)                                   & \multicolumn{2}{c}{52.66} & \multicolumn{2}{c}{27.01}& \multicolumn{2}{c}{99.60} & \multicolumn{2}{c}{$\underline{1.06}$}\\ 
\hline
\multirow{4}{*}{7}       & \multirow{2}{*}{Up}        & Protect Times     &    18.02     &  52.31    &   21.41    &  31.87&0.02&53.74&27.40&58.80   \\ \cline{3-11}
                         &                            & Comparison($\%$)                                   & \multicolumn{2}{c}{65.56} & \multicolumn{2}{c}{32.81}& \multicolumn{2}{c}{99.96} & \multicolumn{2}{c}{53.40} \\ \cline{2-11}
                         & \multirow{2}{*}{Down}      & Protect Times     &   15.36   &  49.30    &      37.36    &     59.83&0.28&62.65&10.2&66.00   \\ \cline{3-11}
                         &                            & Comparison($\%$)                                   & \multicolumn{2}{c}{68.84} & \multicolumn{2}{c}{37.55}& \multicolumn{2}{c}{99.55} & \multicolumn{2}{c}{54.55}\\ 
\hline
\multirow{4}{*}{8}       & \multirow{2}{*}{Up}        & Protect Times      &     14.02   &  23.61    &  23.20  &  49.19&0.34&29.5&4.7&37.00   \\ \cline{3-11}
                         &                            & Comparison($\%$)                                   & \multicolumn{2}{c}{39.55} & \multicolumn{2}{c}{51.99}& \multicolumn{2}{c}{$\underline{98.84}$} & \multicolumn{2}{c}{81.08} \\ \cline{2-11}
                         & \multirow{2}{*}{Down}      & Protect Times   &  17.27   &  27.25   &  34.81       &  70.41&0.14&32.77&3.4&83.60   \\ \cline{3-11}
                         &                            & Comparison($\%$)                                   & \multicolumn{2}{c}{36.62} & \multicolumn{2}{c}{50.42}& \multicolumn{2}{c}{99.57} & \multicolumn{2}{c}{95.93}\\ 
\hline
\end{tabular}
\end{center}
\vspace{-0.45cm}
\end{table*}

\begin{figure*}
\begin{minipage}{0.2\linewidth}
\centerline{\includegraphics[width=4.5cm]{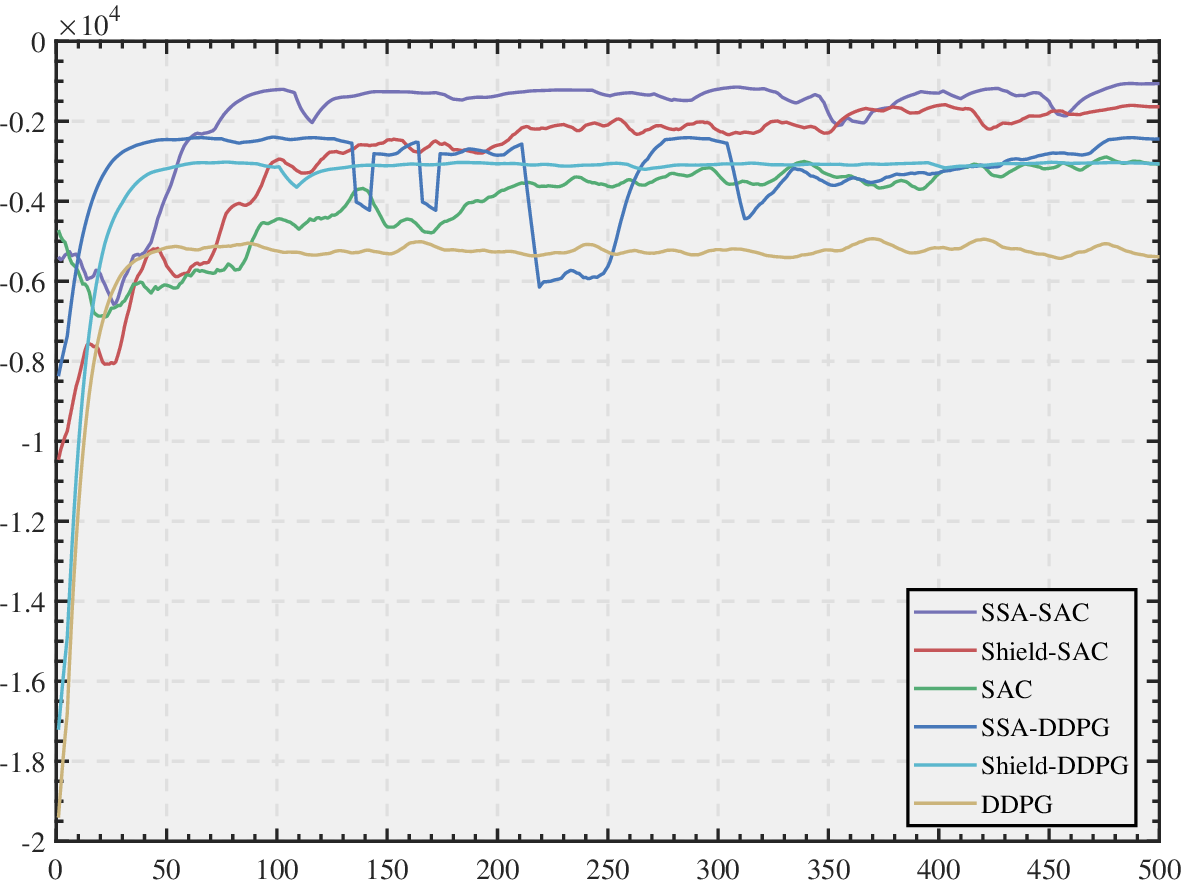}}
\centerline{(a) Sec1}
\end{minipage}
\hfill
\begin{minipage}{0.2\linewidth}
\centerline{\includegraphics[width=4.5cm]{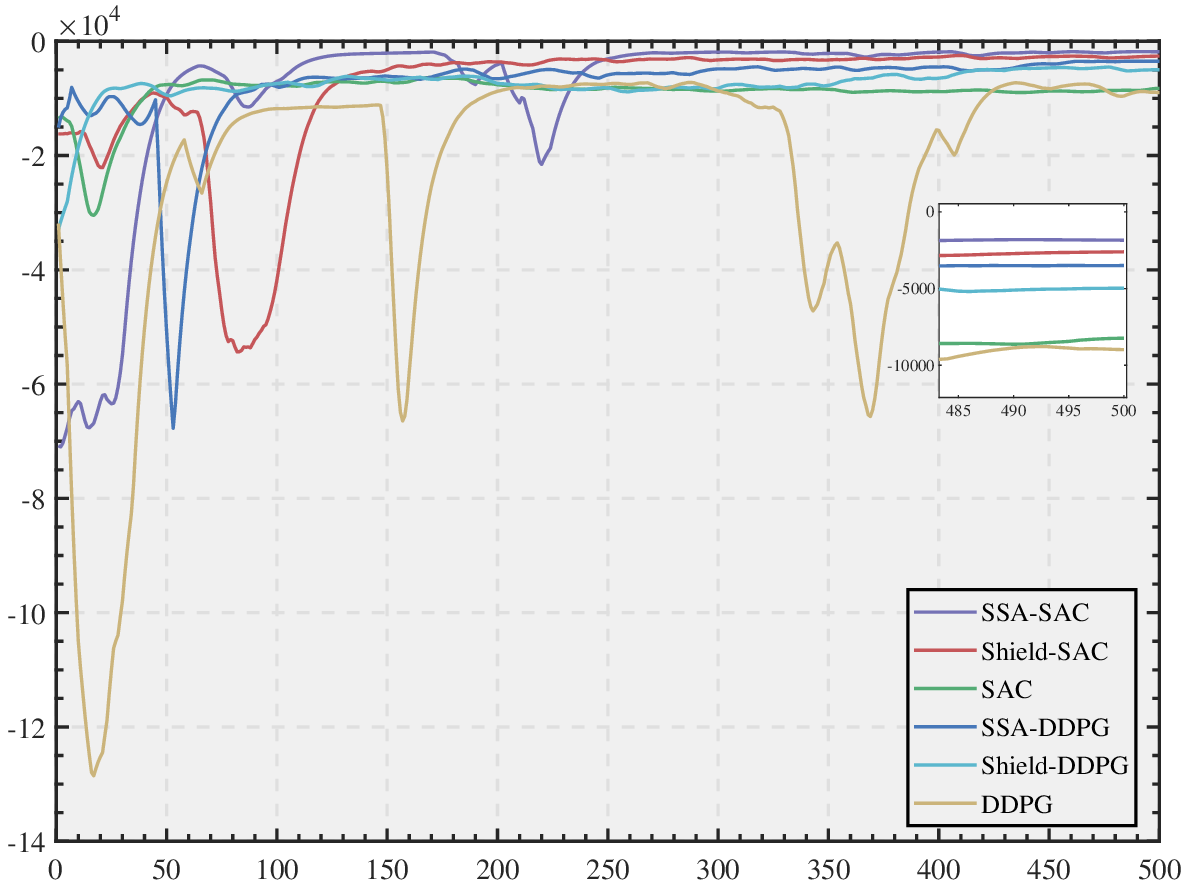}}
\centerline{(b) Sec2}
\end{minipage}
\hfill
\begin{minipage}{0.2\linewidth}
\centerline{\includegraphics[width=4.5cm]{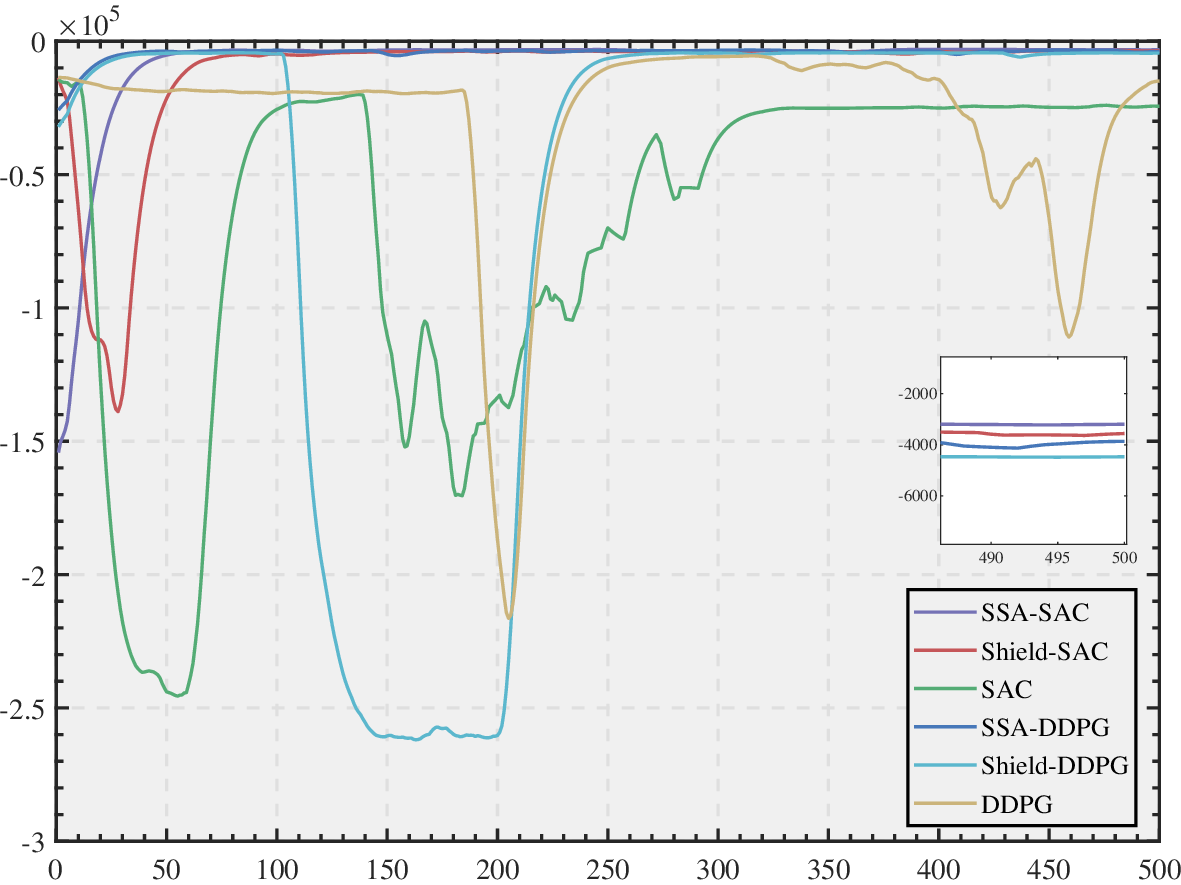}}
\centerline{(c) Sec3}
\end{minipage}
\hfill
\begin{minipage}{0.2\linewidth}
\centerline{\includegraphics[width=4.5cm]{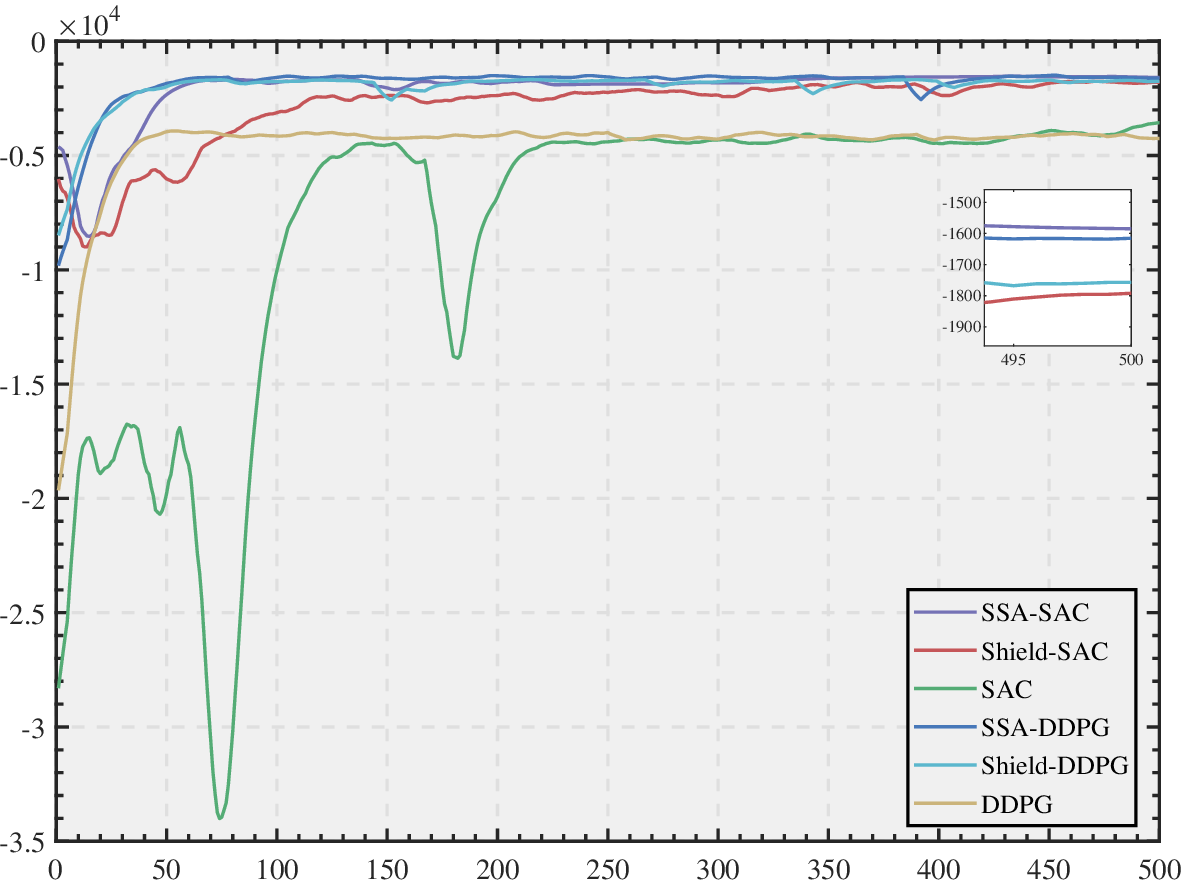}}
\centerline{(d) Sec4}
\end{minipage}
\hfill
\begin{minipage}{0.2\linewidth}
\centerline{\includegraphics[width=4.5cm]{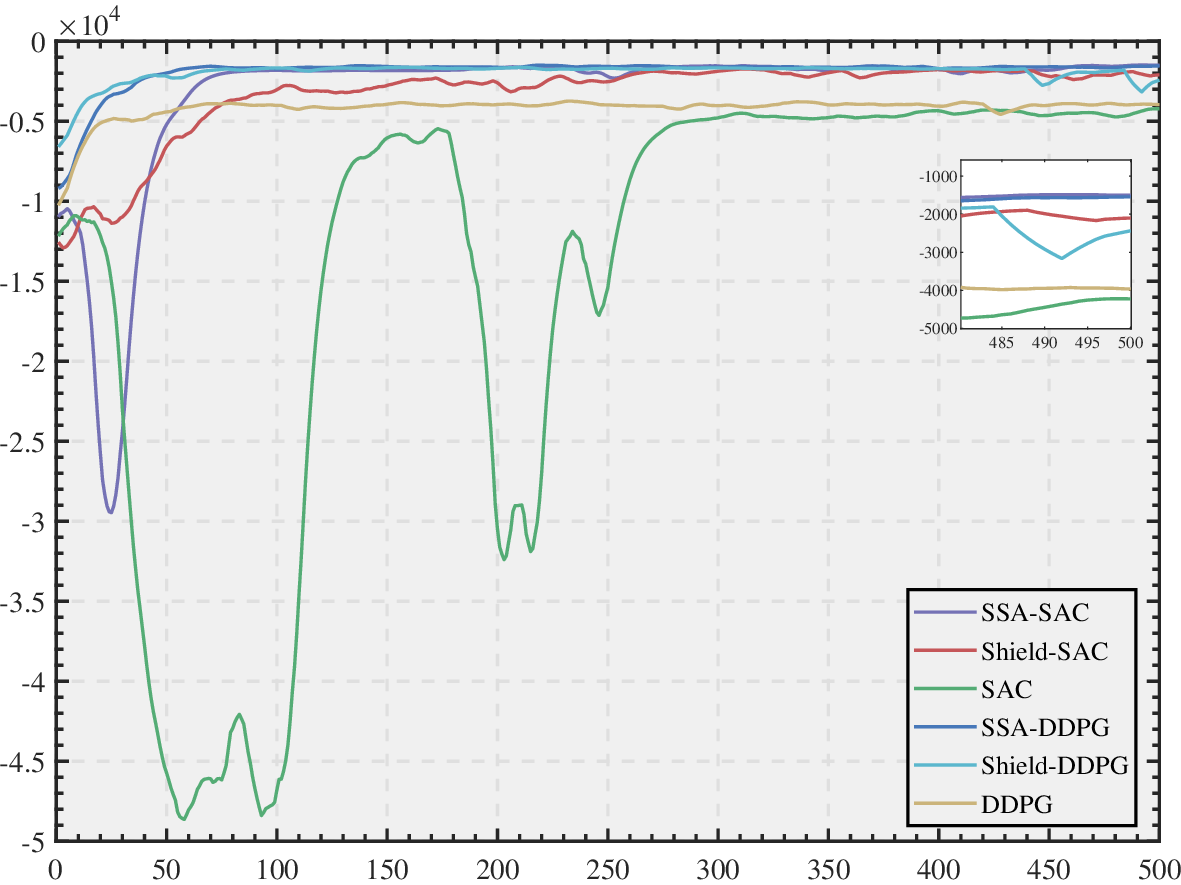}}
\centerline{(e) Sec5}
\end{minipage}
\hfill
\begin{minipage}{0.2\linewidth}
\centerline{\includegraphics[width=4.5cm]{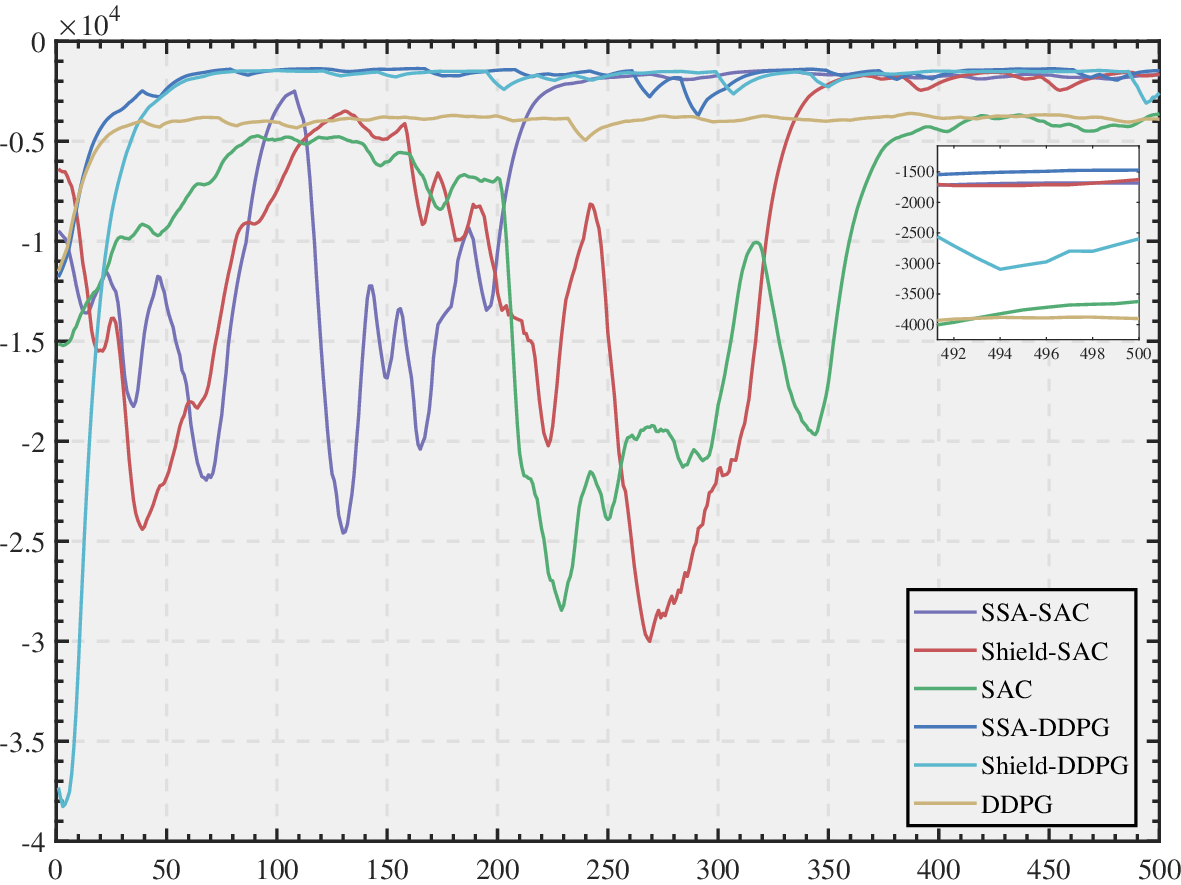}}
\centerline{(f) Sec6}
\end{minipage}
\hfill
\begin{minipage}{0.2\linewidth}
\centerline{\includegraphics[width=4.5cm]{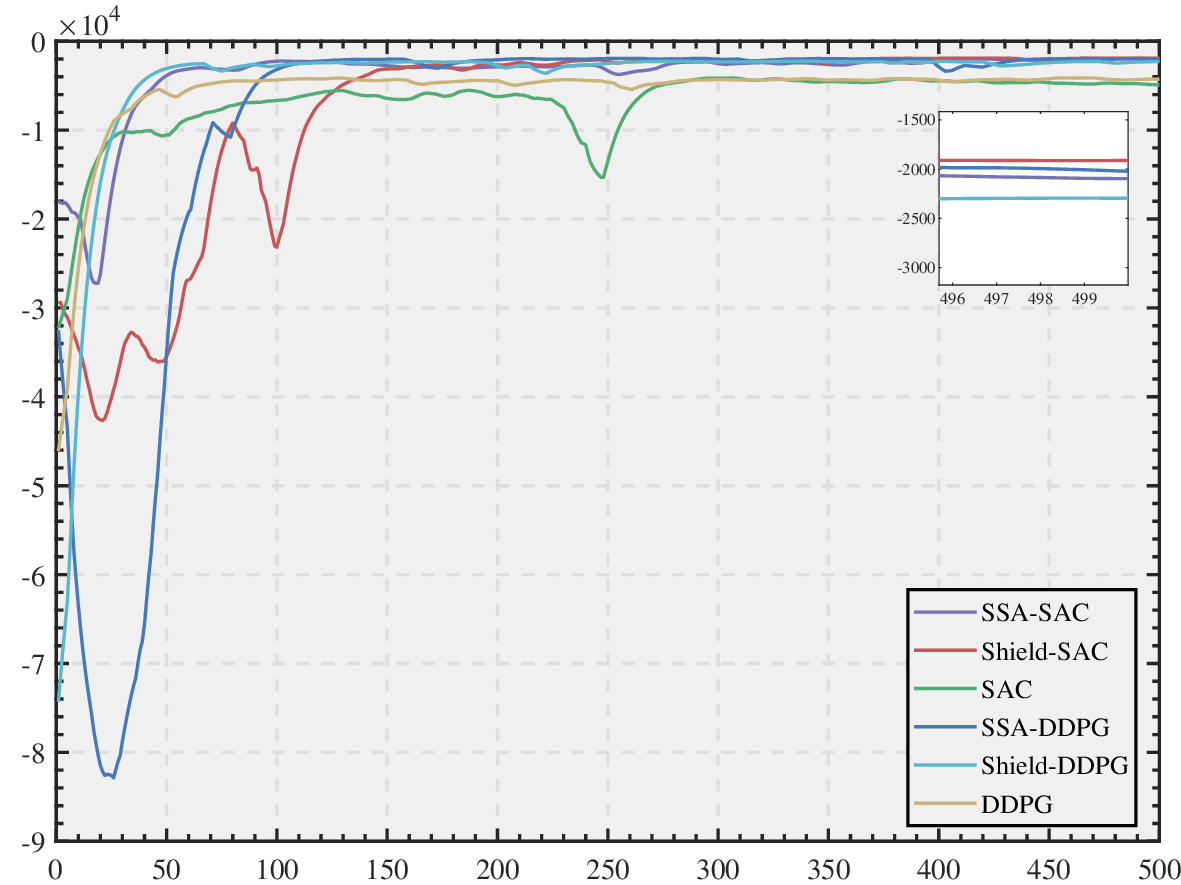}}
\centerline{(g) Sec7}
\end{minipage}
\hfill
\begin{minipage}{0.2\linewidth}
\centerline{\includegraphics[width=4.5cm]{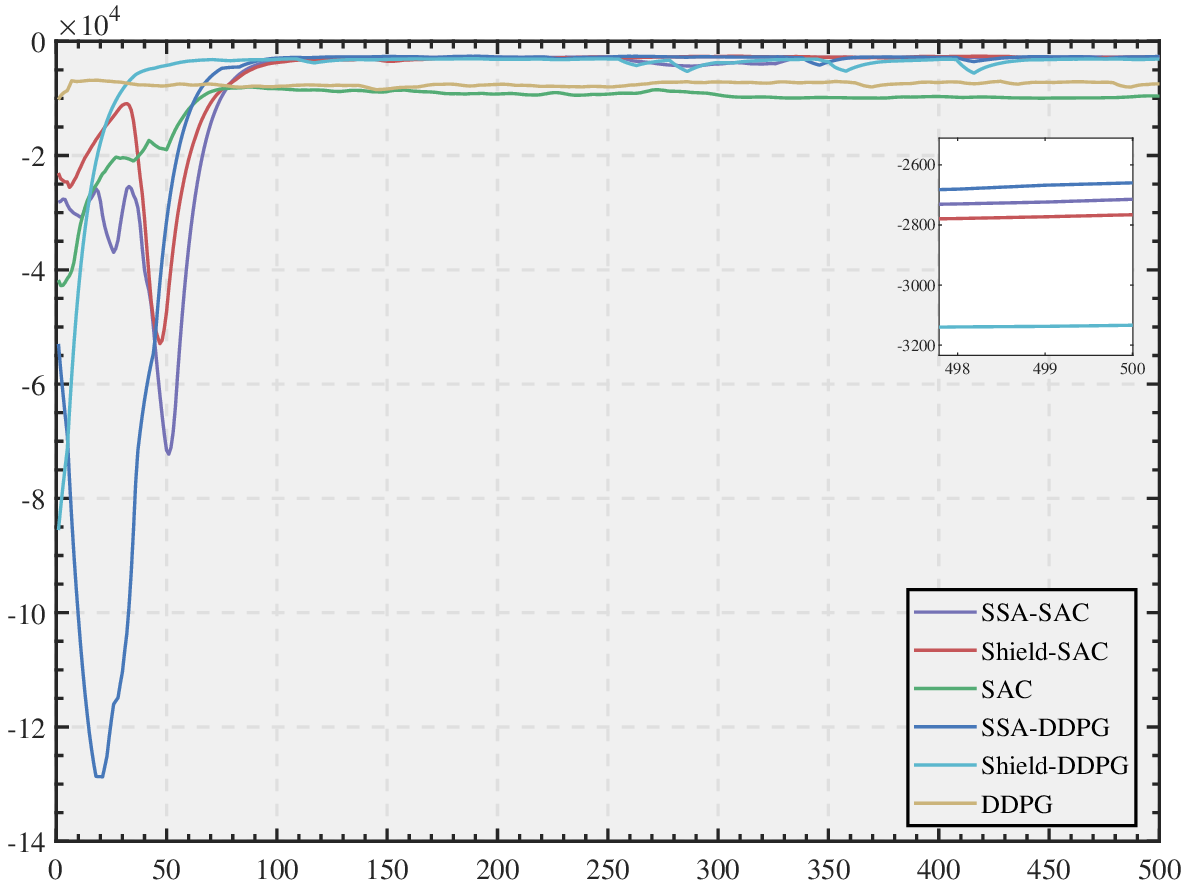}}
\centerline{(h) Sec8}
\end{minipage}
\hfill
\begin{minipage}{0.2\linewidth}
\centerline{\includegraphics[width=4.5cm]{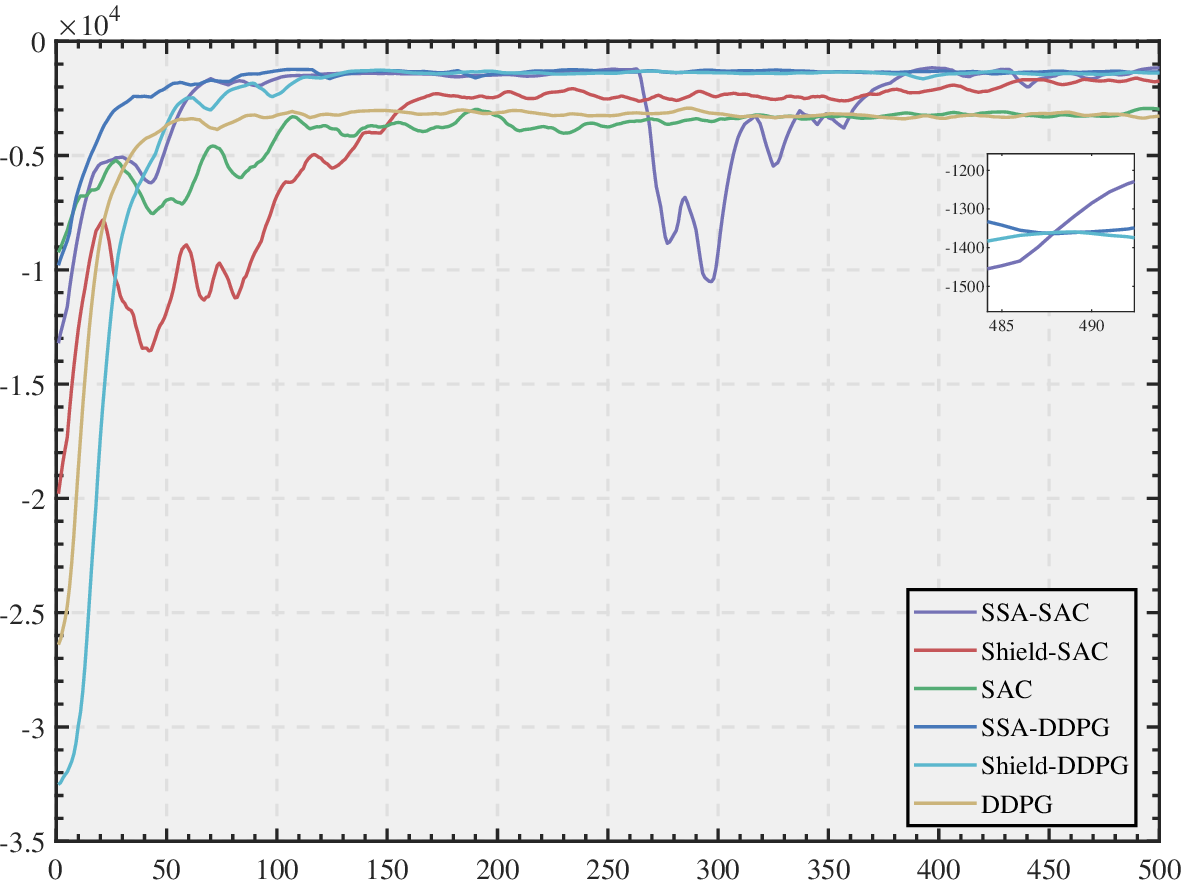}}
\centerline{(i) Sec1}
\end{minipage}
\hfill
\begin{minipage}{0.2\linewidth}
\centerline{\includegraphics[width=4.5cm]{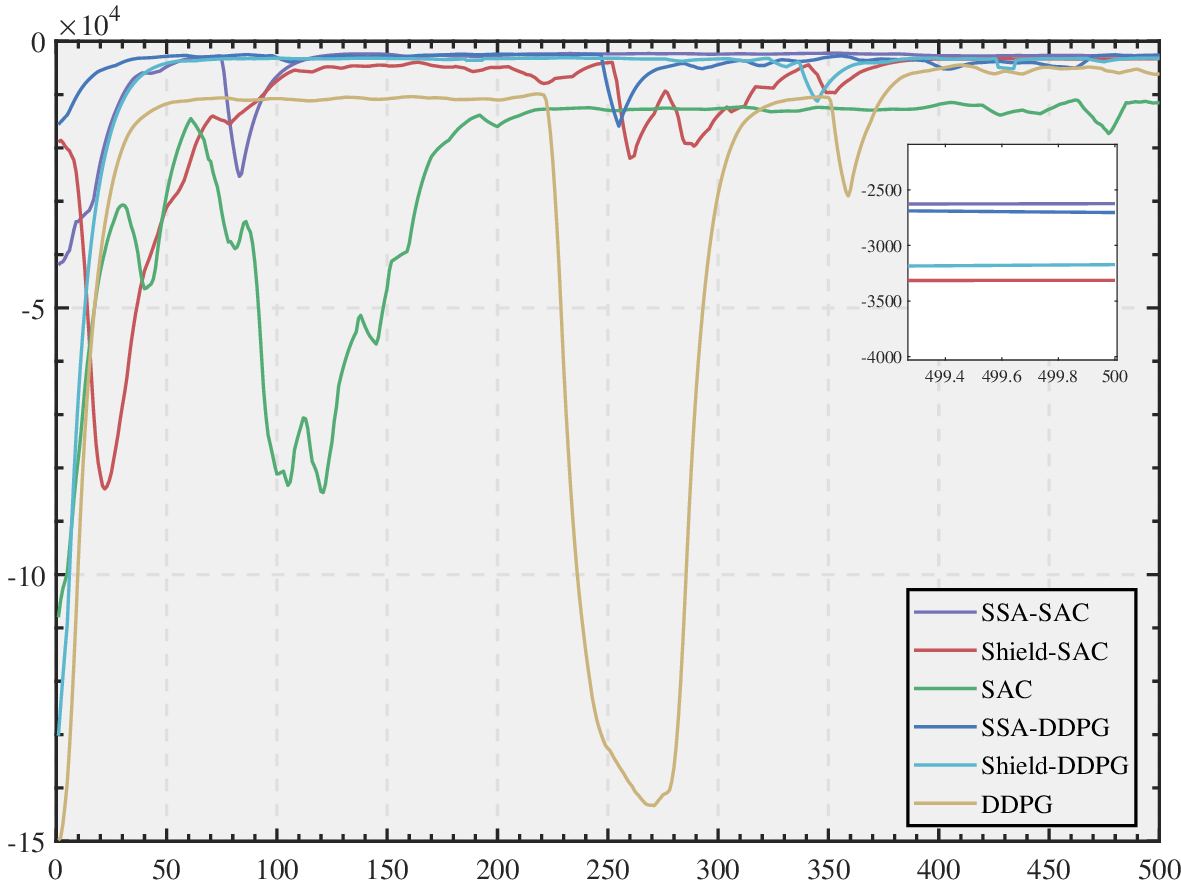}}
\centerline{(j) Sec2}
\end{minipage}
\hfill
\begin{minipage}{0.2\linewidth}
\centerline{\includegraphics[width=4.5cm]{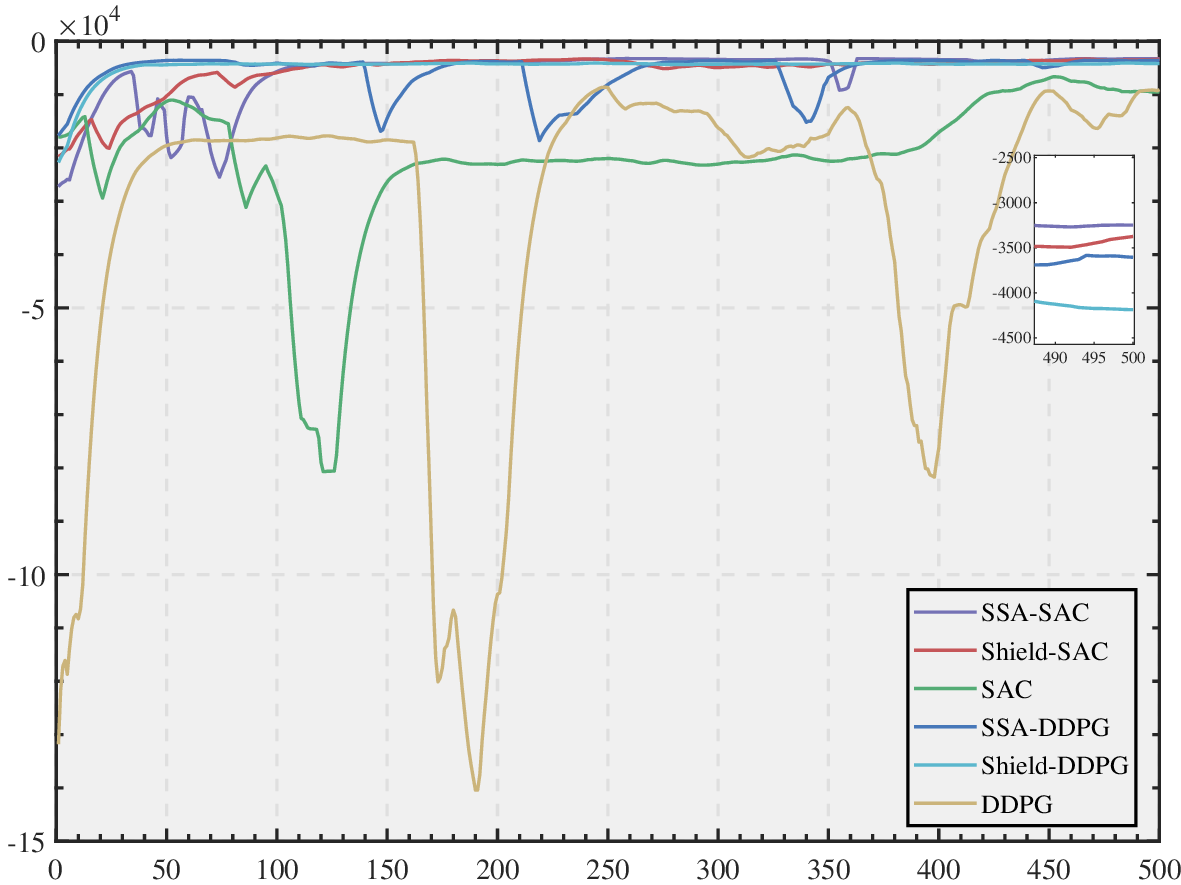}}
\centerline{(k) Sec3}
\end{minipage}
\hfill
\begin{minipage}{0.2\linewidth}
\centerline{\includegraphics[width=4.5cm]{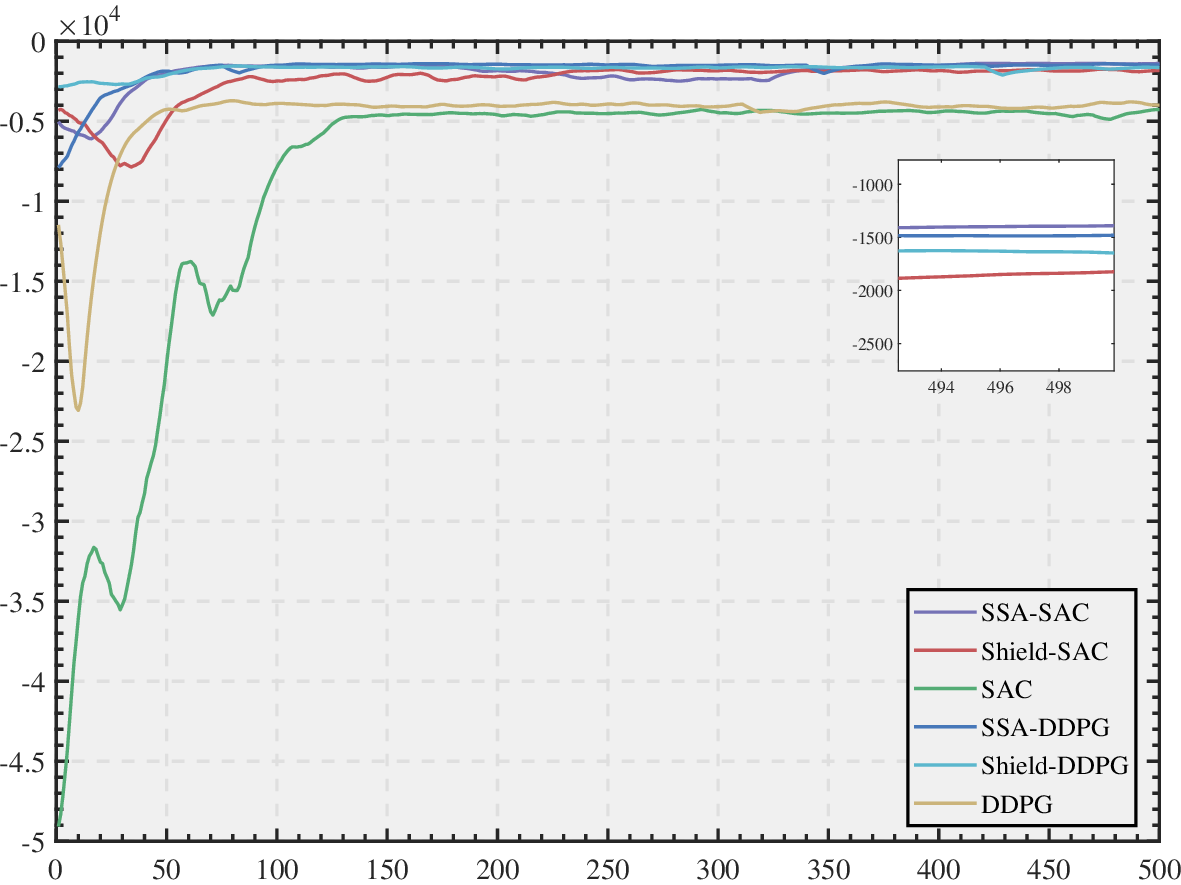}}
\centerline{(l) Sec4}
\end{minipage}
\hfill
\begin{minipage}{0.2\linewidth}
\centerline{\includegraphics[width=4.5cm]{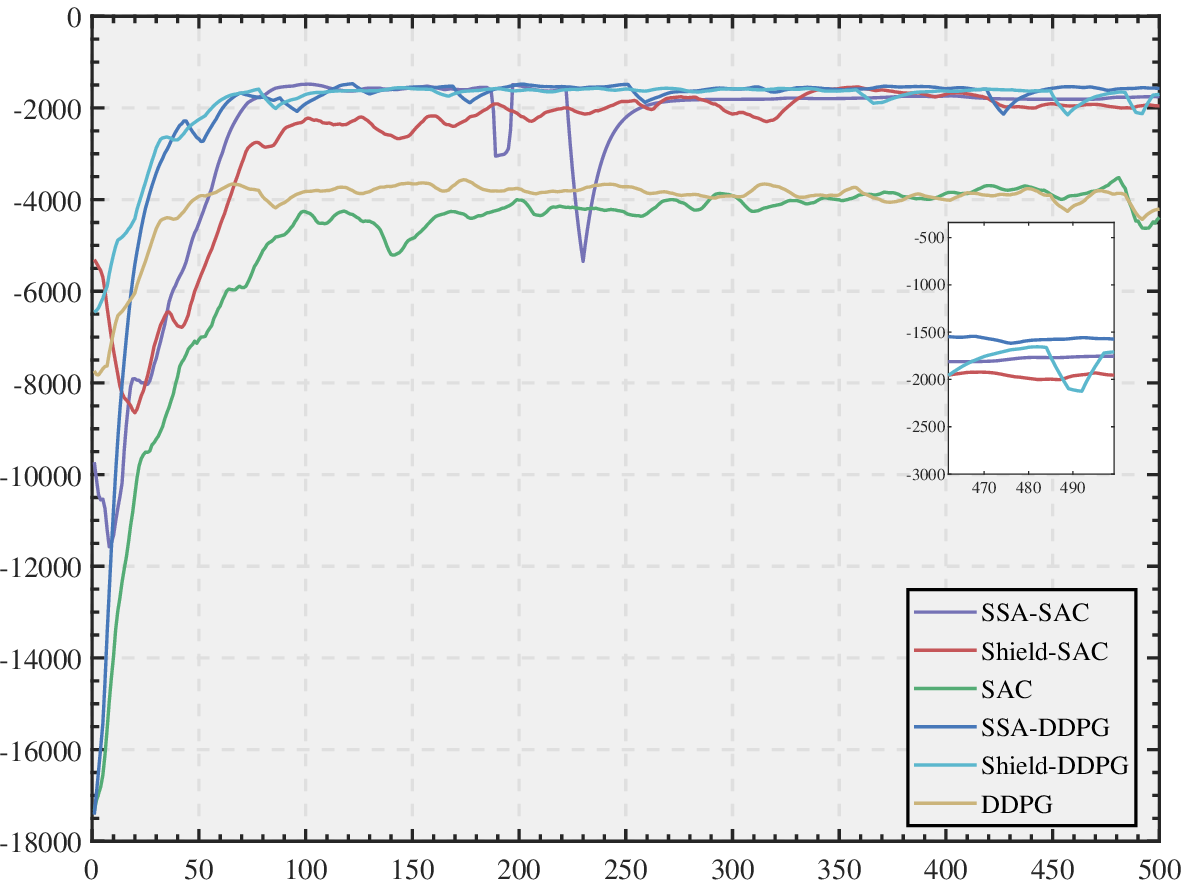}}
\centerline{(m) Sec5}
\end{minipage}
\hfill
\begin{minipage}{0.2\linewidth}
\centerline{\includegraphics[width=4.5cm]{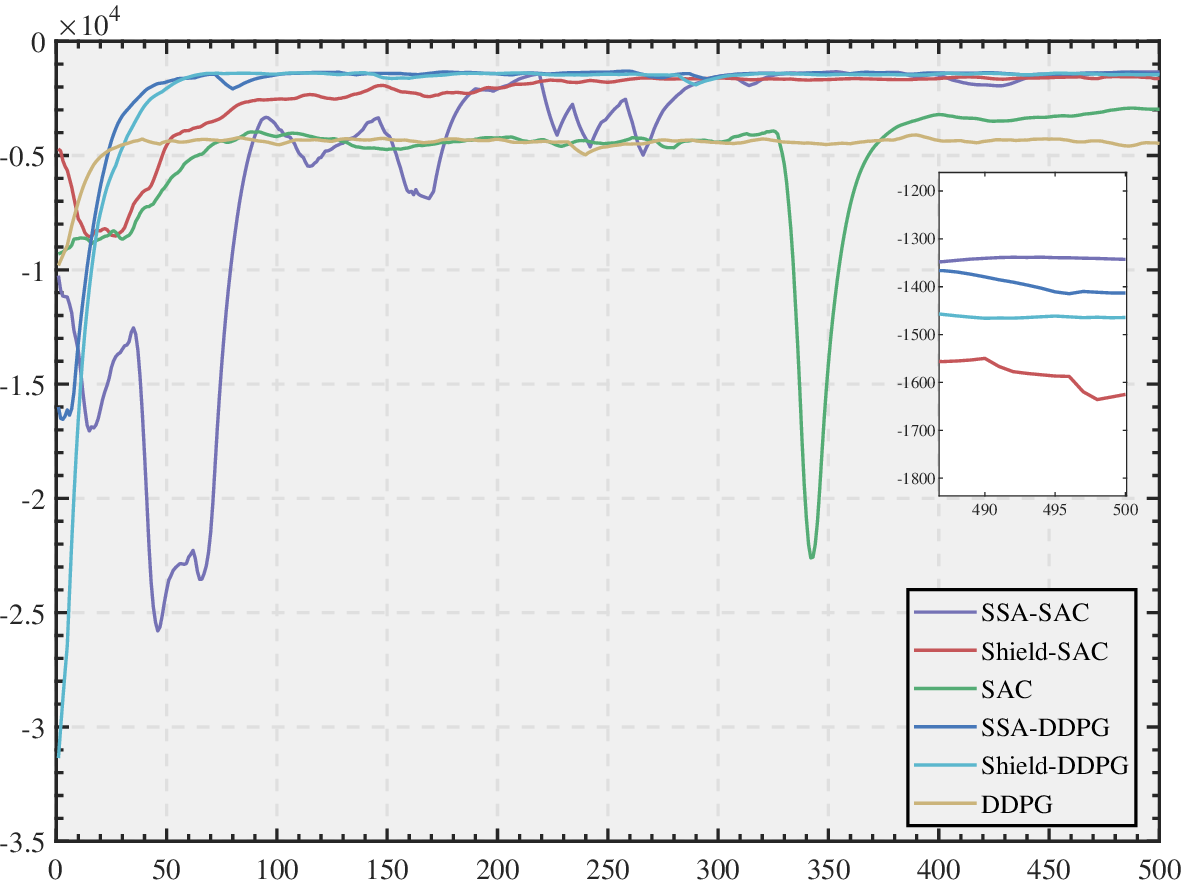}}
\centerline{(n) Sec6}
\end{minipage}
\hfill
\begin{minipage}{0.2\linewidth}
\centerline{\includegraphics[width=4.5cm]{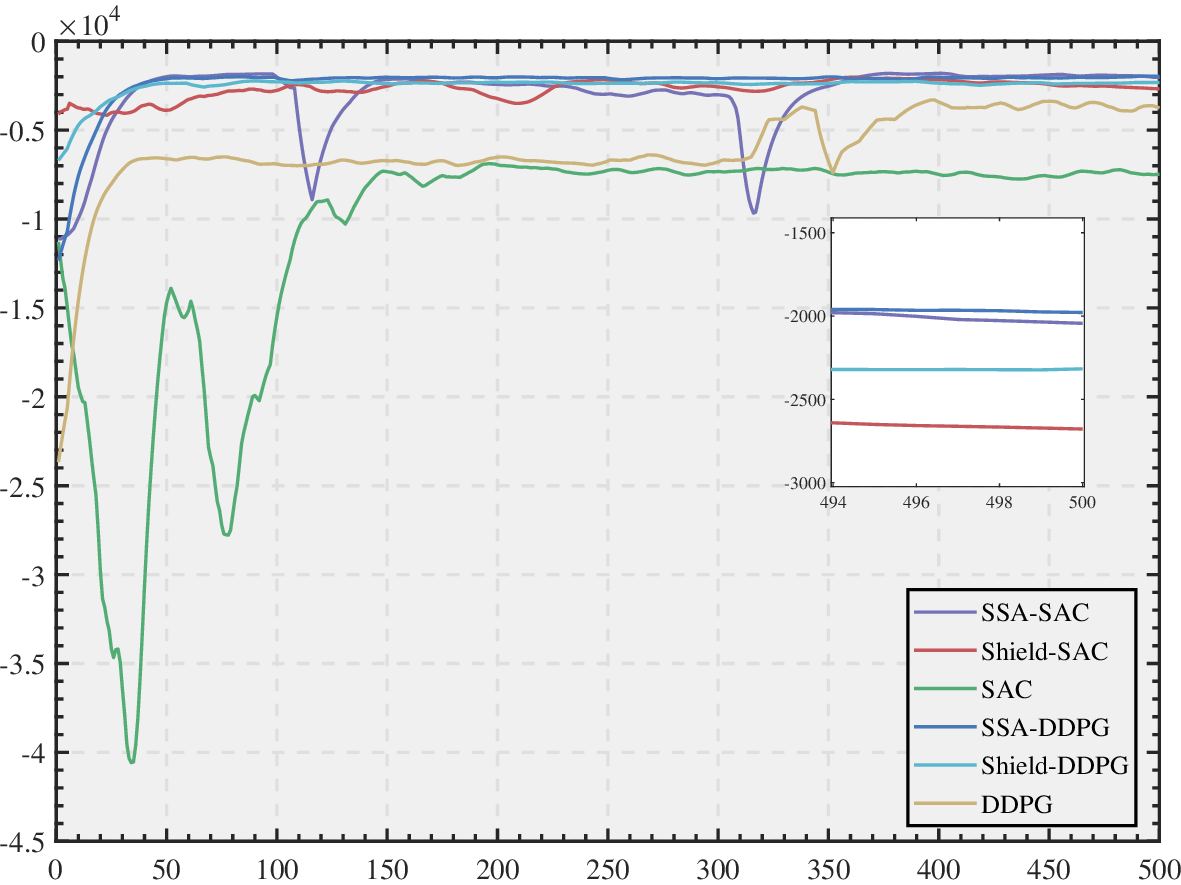}}
\centerline{(o) Sec7}
\end{minipage}
\hfill
\begin{minipage}{0.2\linewidth}
\centerline{\includegraphics[width=4.5cm]{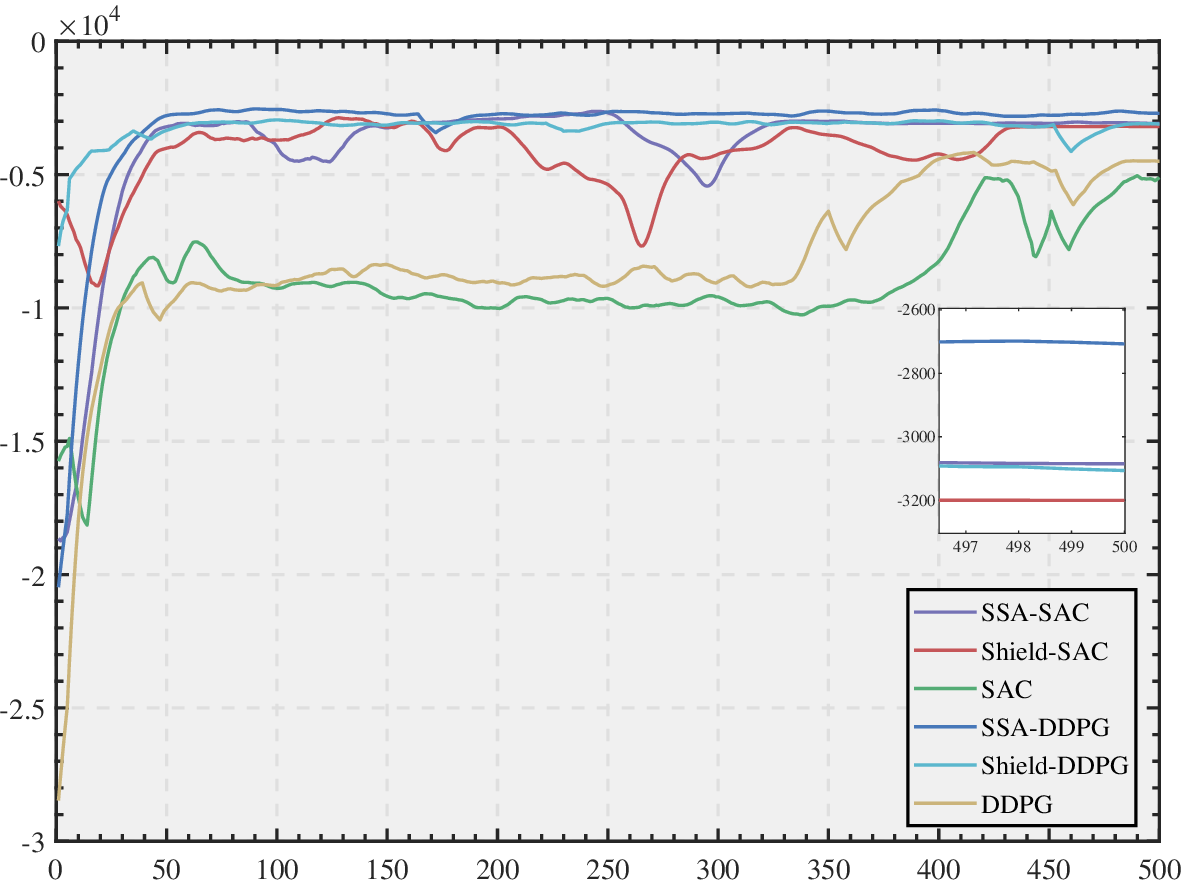}}
\centerline{(p) Sec8}
\end{minipage}
\hfill
\caption{Reward curves. The first tow rows are the reward curves of up direction and the last two rows are of down direction.}
\label{fig:uprewardcurve}
\end{figure*}

\begin{figure*}
\begin{minipage}{0.23\linewidth}
\centerline{\includegraphics[width=4.5cm]{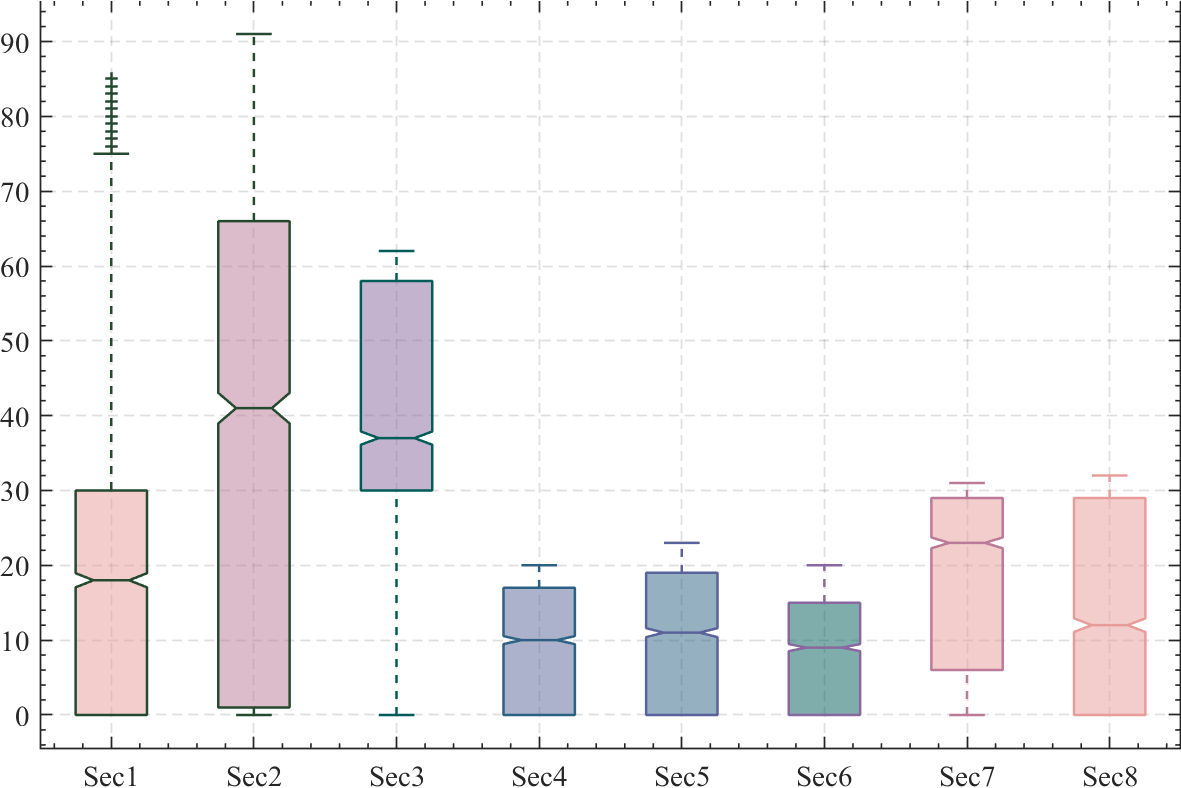}}
\centerline{(a) Up Direction (Training)}
\end{minipage}
\hfill
\begin{minipage}{0.23\linewidth}
\centerline{\includegraphics[width=4.5cm]{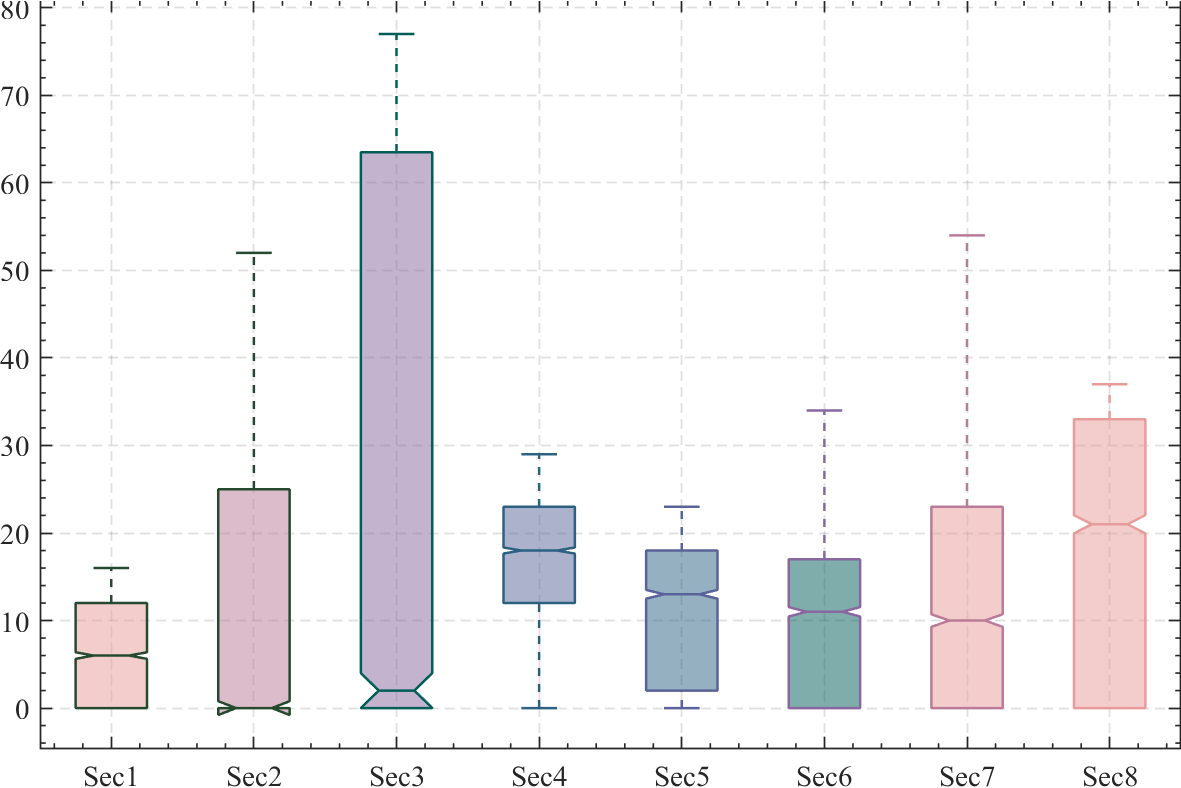}}
\centerline{(b) Down Direction (Training)}
\end{minipage}
\hfill
\begin{minipage}{0.23\linewidth}
\centerline{\includegraphics[width=4.5cm]{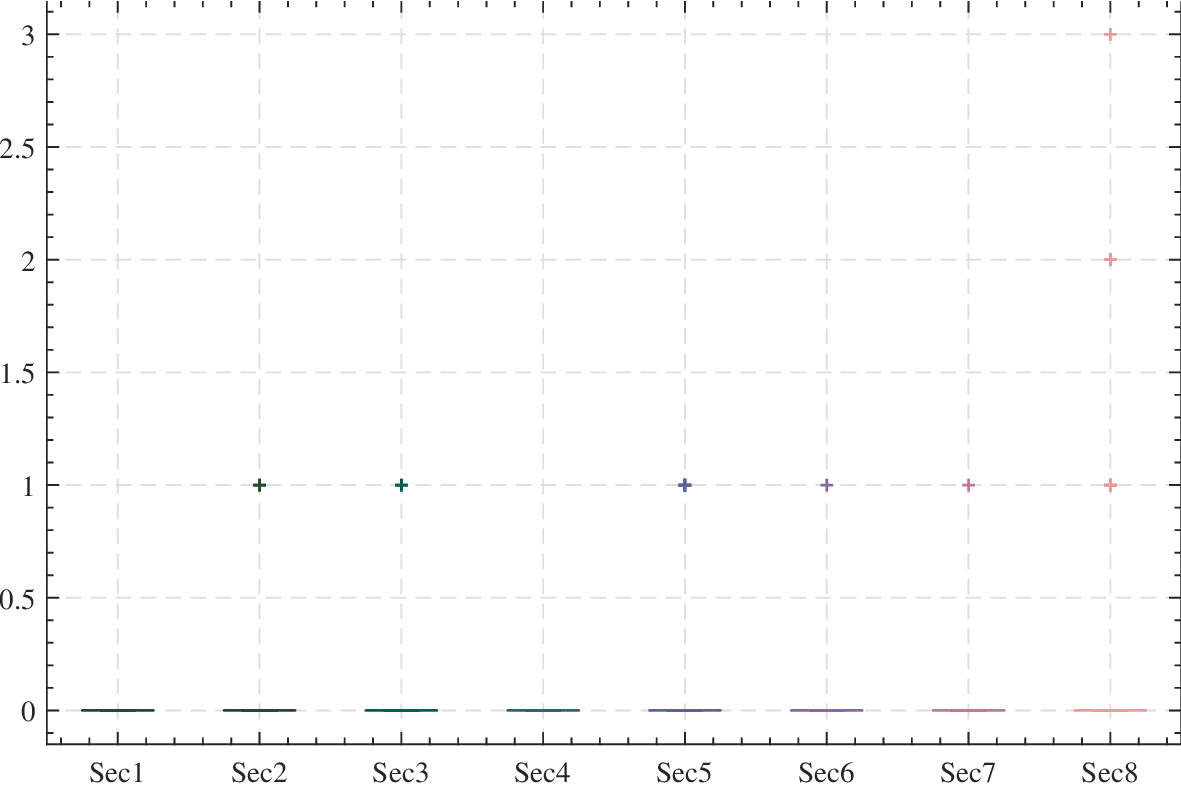}}
\centerline{(c) Up Direction (Execution)}
\end{minipage}
\hfill
\begin{minipage}{0.23\linewidth}
\centerline{\includegraphics[width=4.5cm]{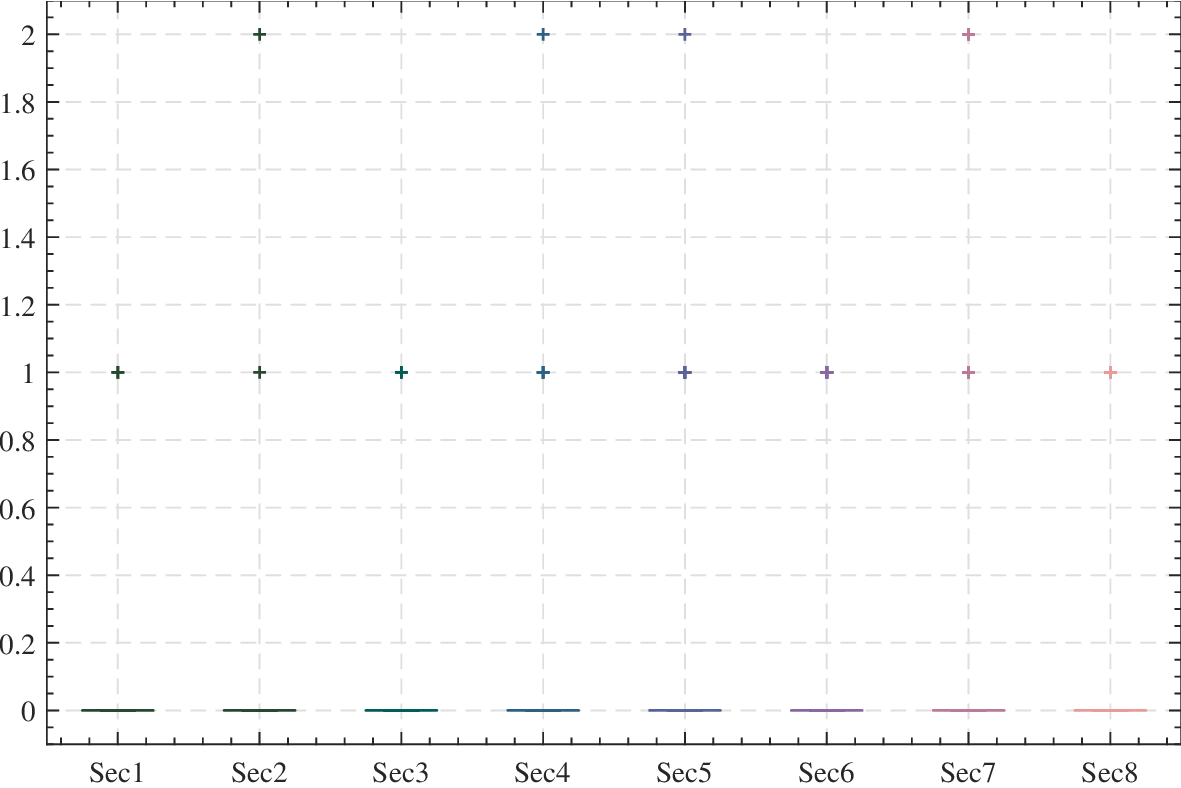}}
\centerline{(d) Down Direction (Execution)}
\end{minipage}
\hfill
\begin{minipage}{0.23\linewidth}
\centerline{\includegraphics[width=4.5cm]{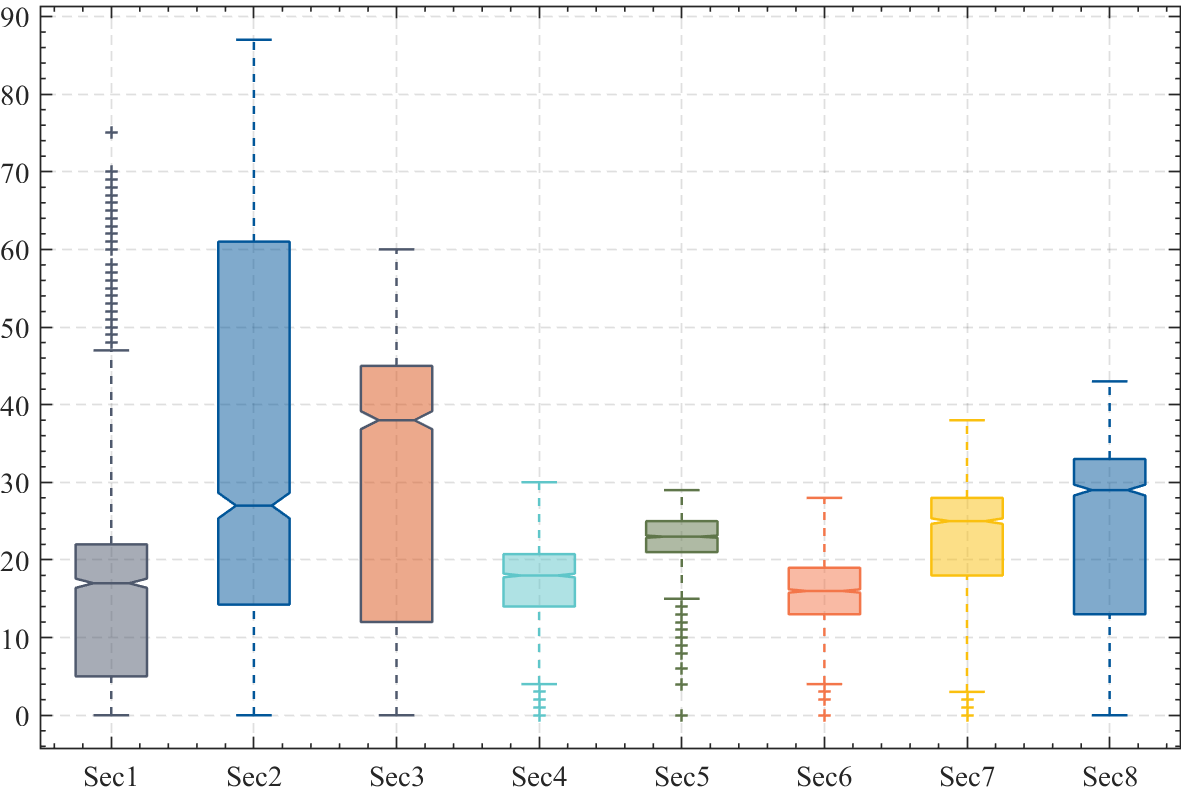}}
\centerline{(e) Up Direction (Training)}
\end{minipage}
\hfill
\begin{minipage}{0.23\linewidth}
\centerline{\includegraphics[width=4.5cm]{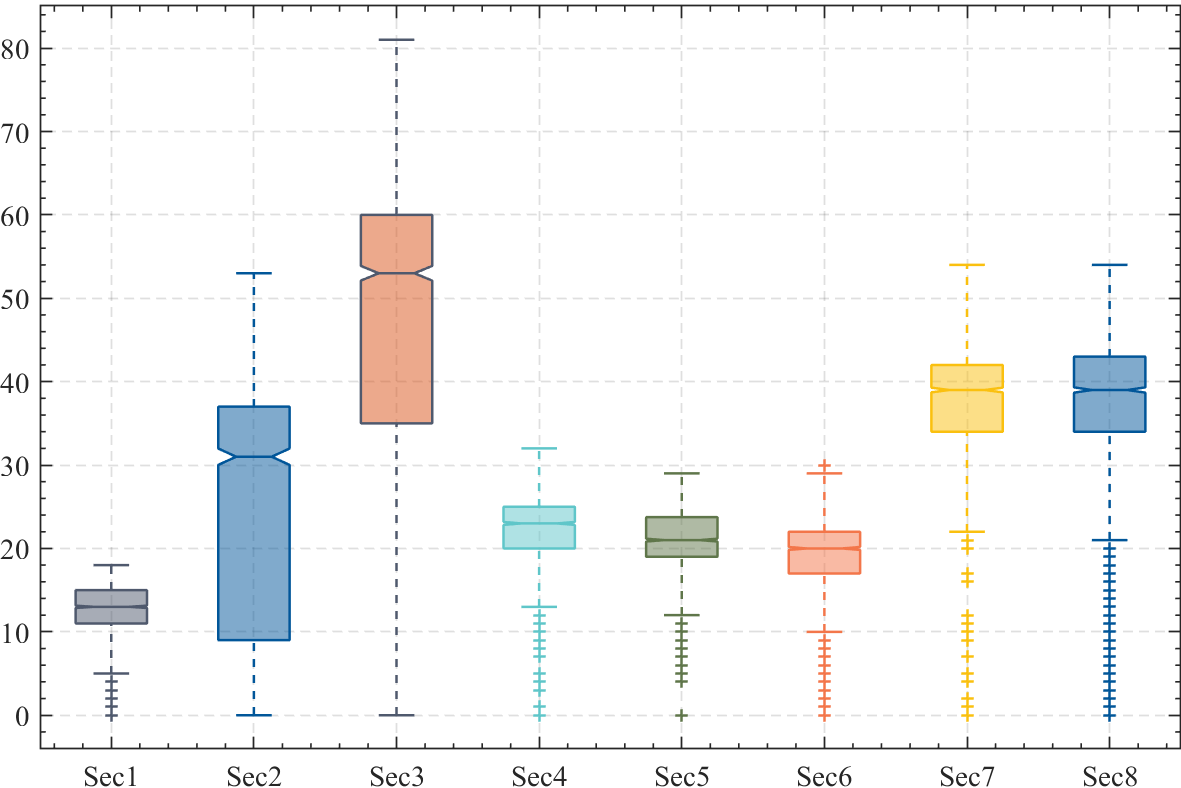}}
\centerline{(f) Down Direction (Training)}
\end{minipage}
\hfill
\begin{minipage}{0.23\linewidth}
\centerline{\includegraphics[width=4.5cm]{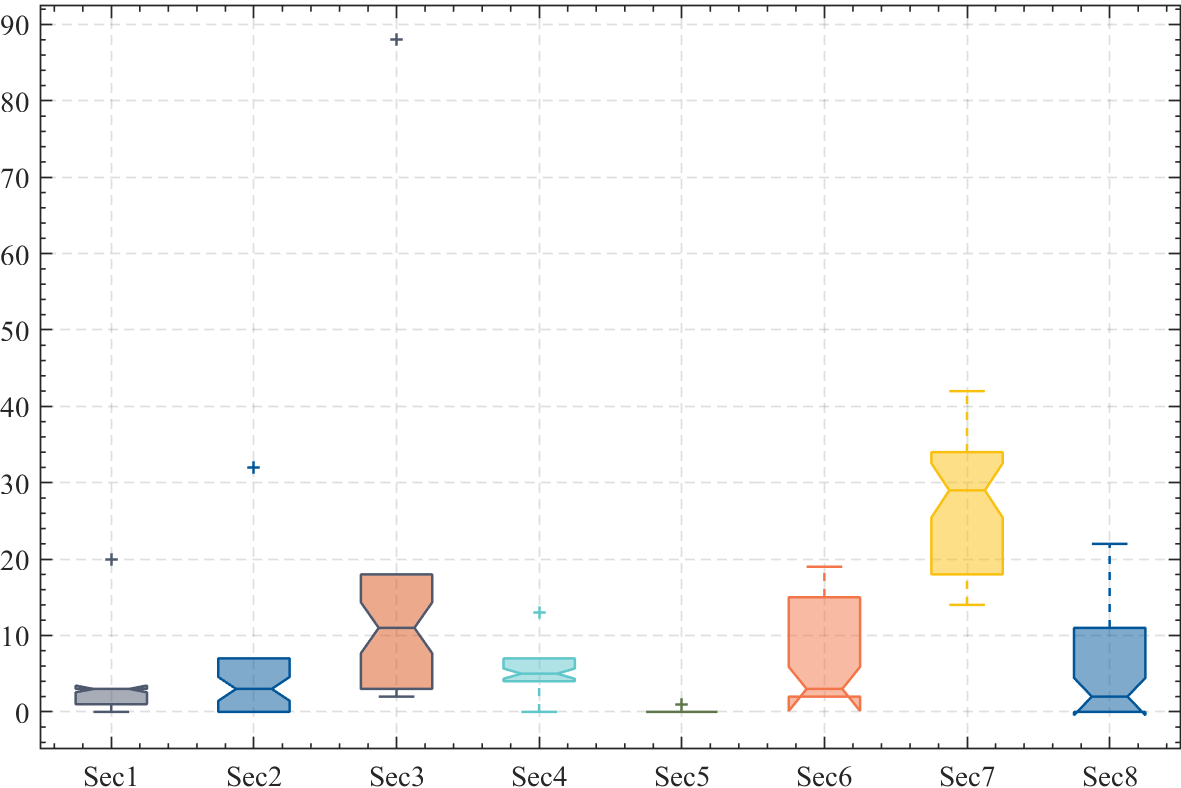}}
\centerline{(g) Up Direction (Execution)}
\end{minipage}
\hfill
\begin{minipage}{0.23\linewidth}
\centerline{\includegraphics[width=4.5cm]{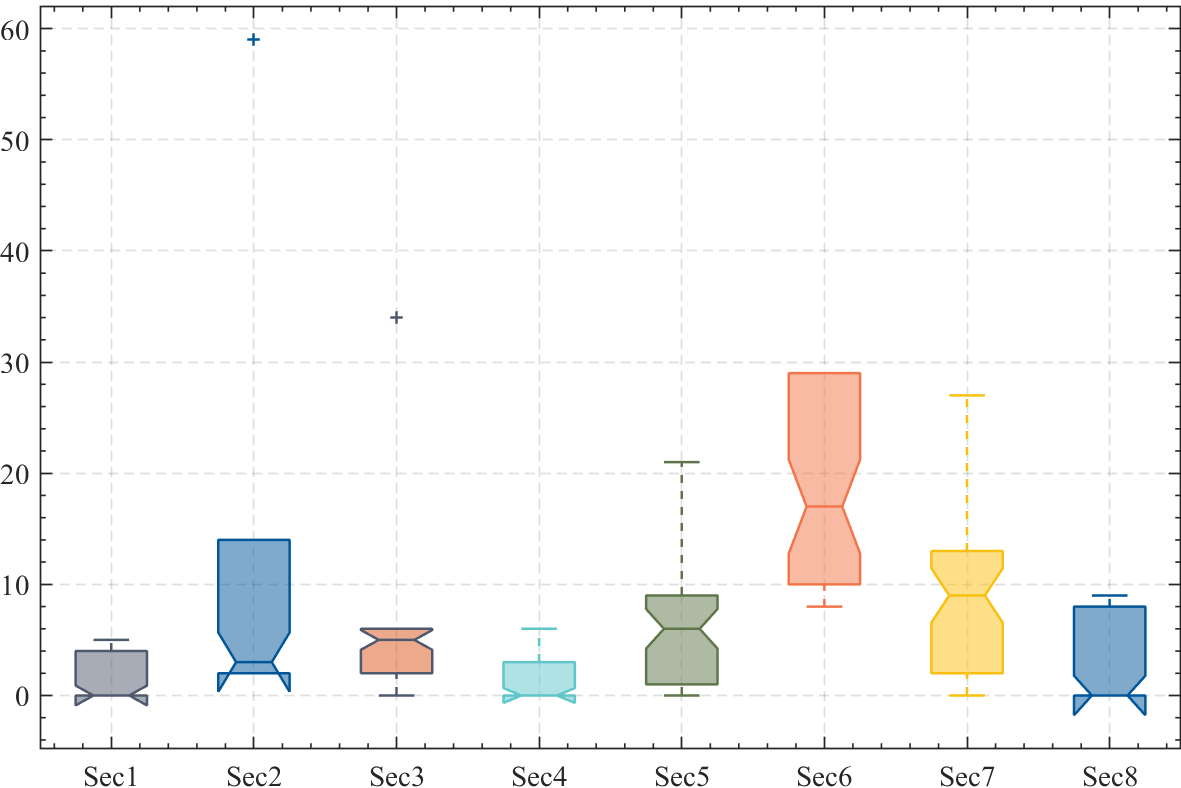}}
\centerline{(h) Down Direction (Execution)}
\end{minipage}
\hfill
\caption{Protect times. The first row is the result of SSA-SAC and the second row is the result of SSA-DDPG.}
\label{fig:protectcountsbox}
\end{figure*}

\begin{figure*}
\begin{minipage}{0.23\linewidth}
\centerline{\includegraphics[width=4.5cm]{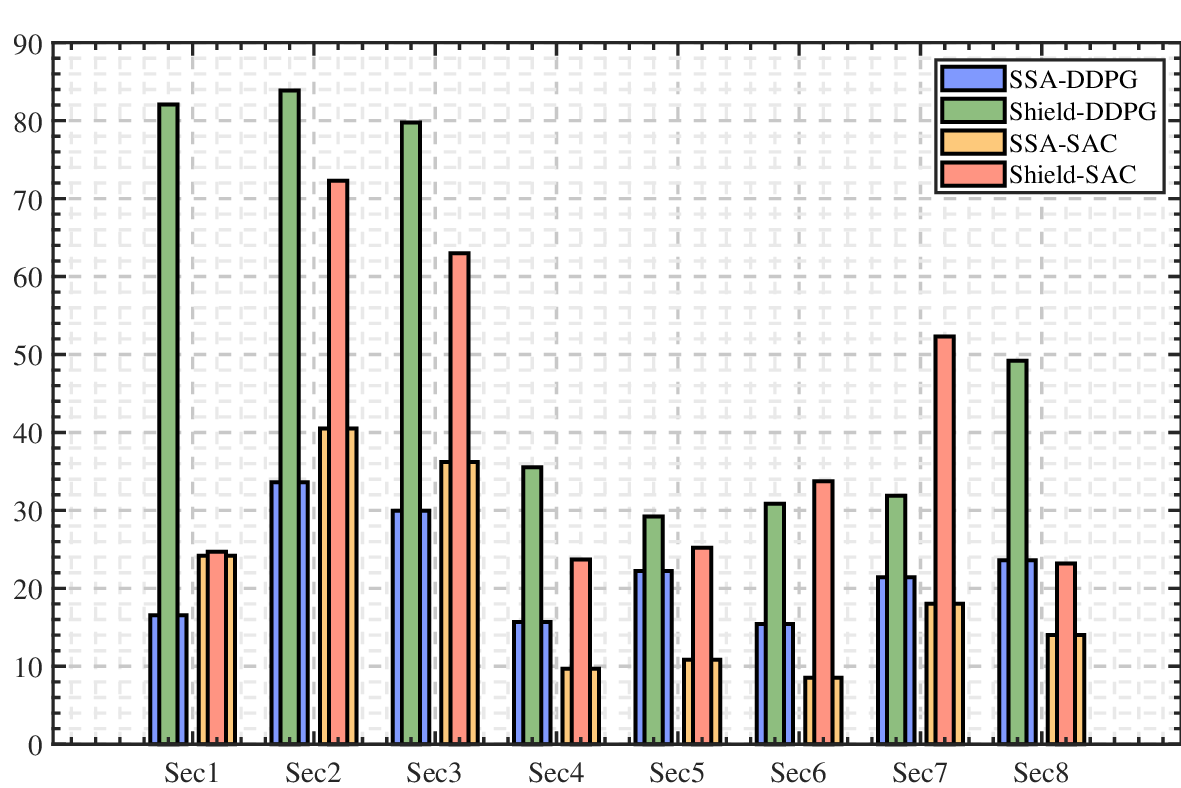}}
\centerline{(a) Up Direction (Training)}
\end{minipage}
\hfill
\begin{minipage}{0.23\linewidth}
\centerline{\includegraphics[width=4.5cm]{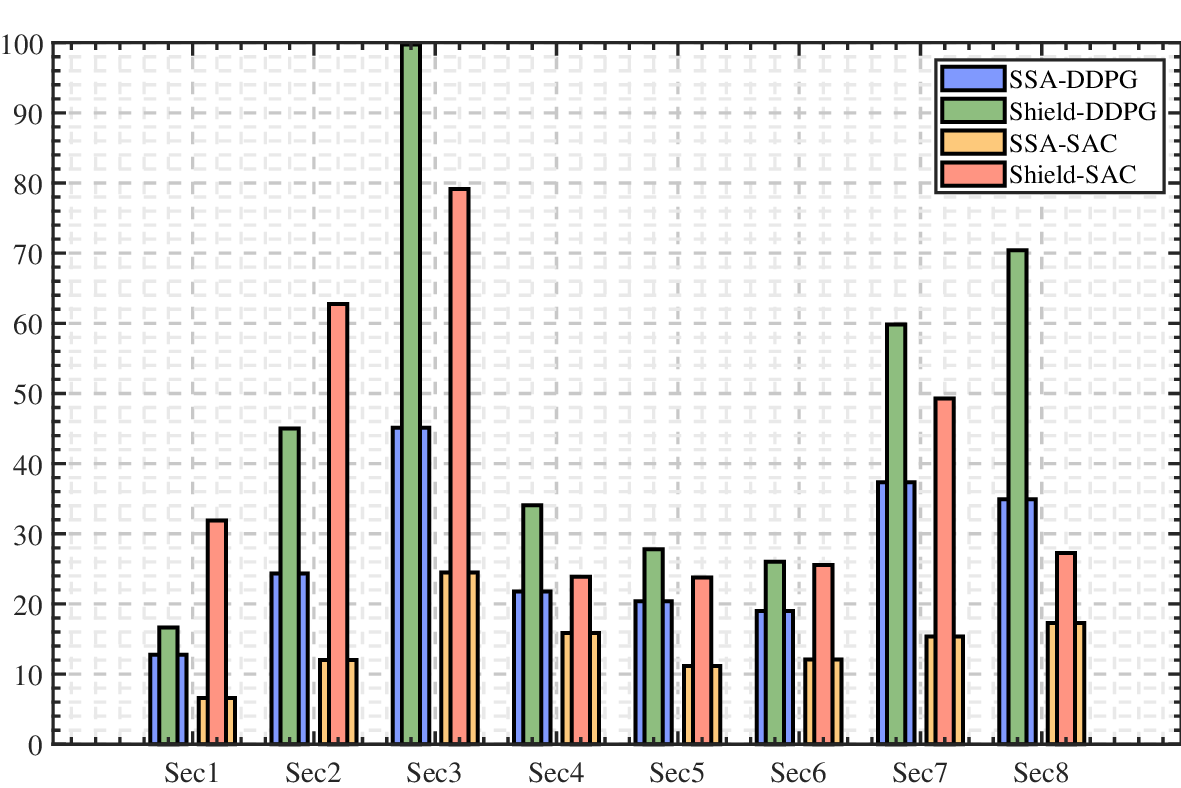}}
\centerline{(b) Down Direction (Training)}
\end{minipage}
\hfill
\begin{minipage}{0.23\linewidth}
\centerline{\includegraphics[width=4.5cm]{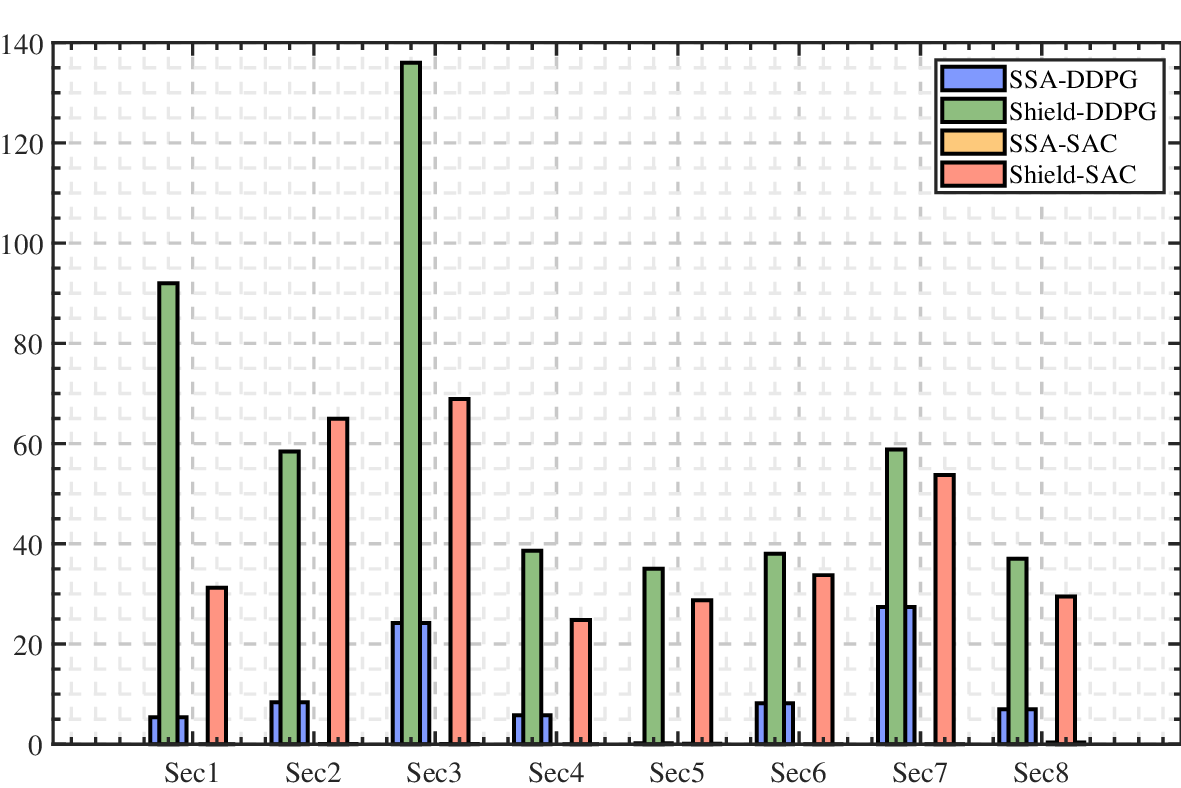}}
\centerline{(c) Up Direction (Execution)}
\end{minipage}
\hfill
\begin{minipage}{0.23\linewidth}
\centerline{\includegraphics[width=4.5cm]{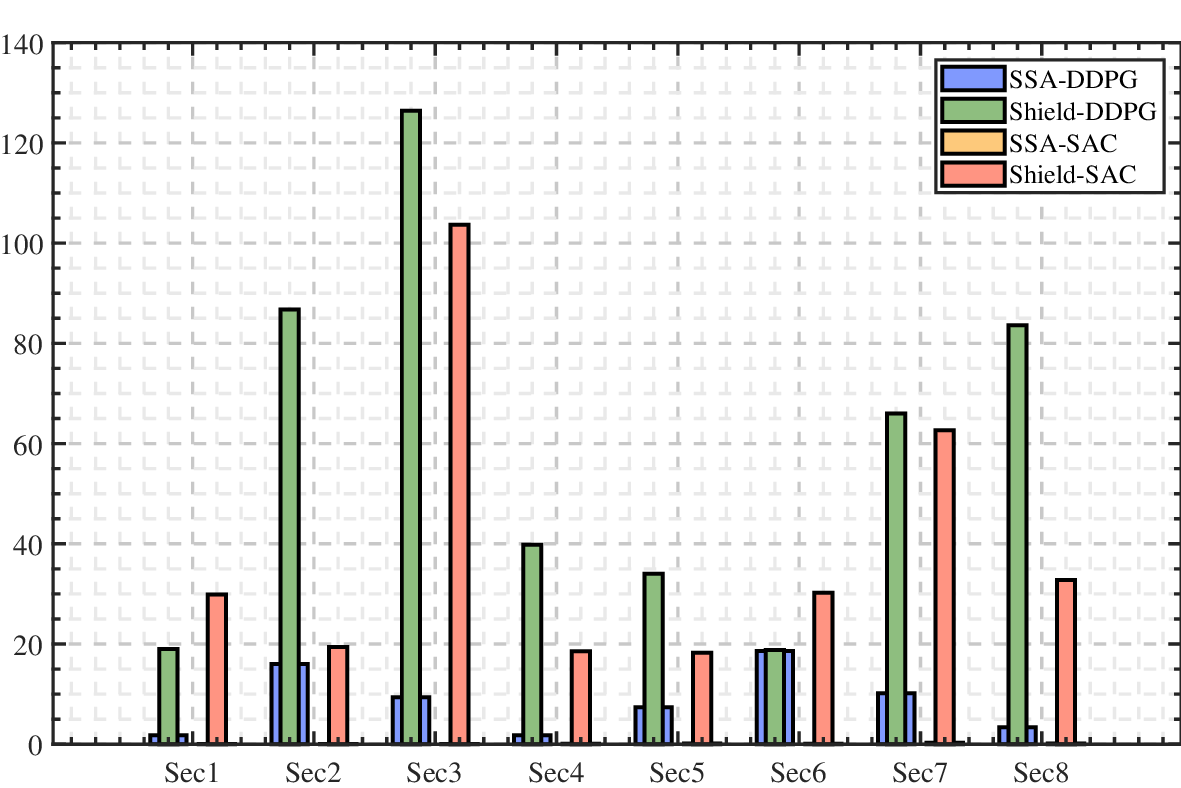}}
\centerline{(d) Down Direction (Execution)}
\end{minipage}
\hfill
\caption{Protect times. Comparison between SSA-DRL and Shield-DRL.}
\label{fig:protectcounts}
\end{figure*}

The distribution of calculation time in execution are shown in Fig.~\ref{fig:time_distribution}. The calculation time consists of getting a action and state transition two parts. If the original action is checked safe by the Shield, getting a action time means the time of the neural network to complete the forward pass. If not, then it is the time of completing a searching process and output a final safe action. The transition time means the time of one step state transition which is simply denoted by $s_t,a_t,r_{t+1},s_{t+1}$. The X-axis in Fig.~\ref{fig:time_distribution} represents the simulation section, the Y-axis represents the range of the calculation time and the Z-axis represents the percentage of the corresponding calculation time range. For example, the bar at the northwest corner in Fig.~\ref{fig:time_distribution}(a) means the percentage of getting an action time in range 0-0.05s of Section1 up direction is 60$\%$-70$\%$. Then from Fig.~\ref{fig:time_distribution}, it can be get that most of the calculation time no matter getting action or state transition is less than 0.02s meanwhile 0-0.005s accounts the biggest proportion that satisfies the time requirement of the control cycle in real world which is usually 200ms.

Then, combine Fig.~\ref{fig:speedprofile}-Fig.~\ref{fig:time_distribution} with the detailed numerical results shown in Table.\ref{table4} and Table.\ref{tablecompare}, a conclusion can be drawn is that SSA-DRL can control the train complete the operation plan and reduce traction energy consumption without overspeed danger which we also say the proposed SSA-DRL ensures a safe control strategy.

\begin{figure*}
\begin{minipage}{0.23\linewidth}
\centerline{\includegraphics[width=4.5cm]{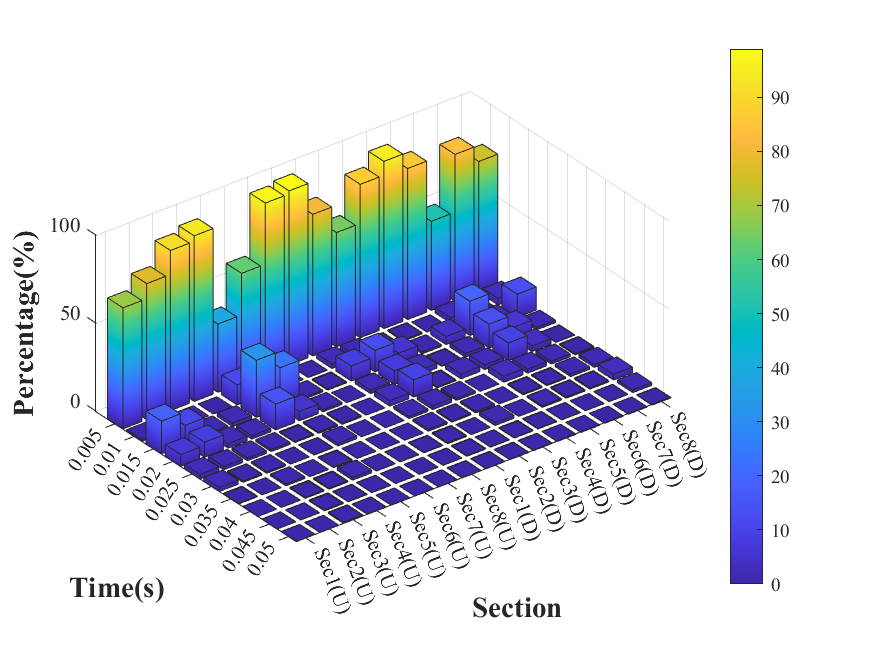}}
\centerline{(a) SSA-DDPG get action}
\end{minipage}
\hfill
\begin{minipage}{0.23\linewidth}
\centerline{\includegraphics[width=4.5cm]{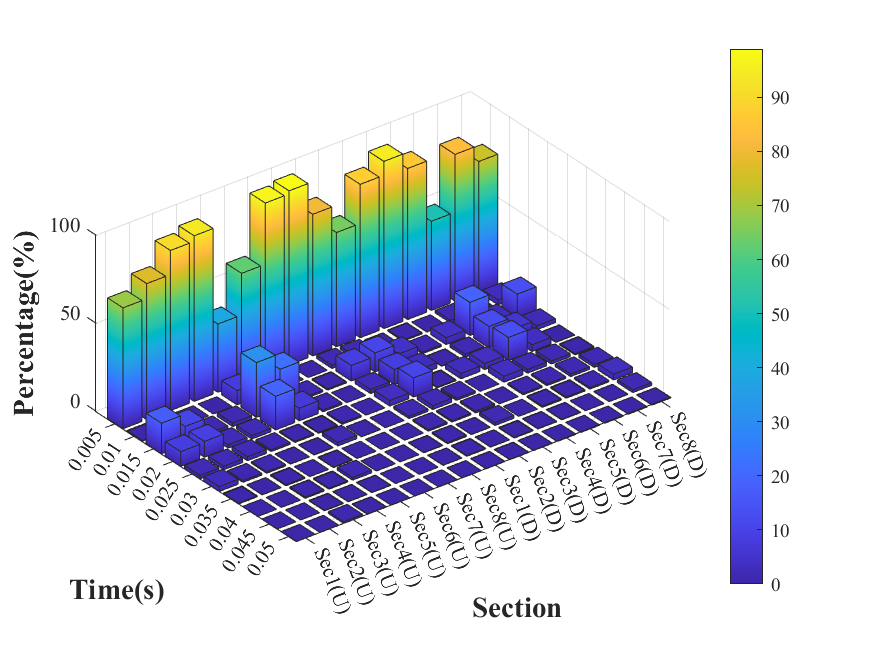}}
\centerline{(c) SSA-DDPG state transition}
\end{minipage}
\hfill
\begin{minipage}{0.23\linewidth}
\centerline{\includegraphics[width=4.5cm]{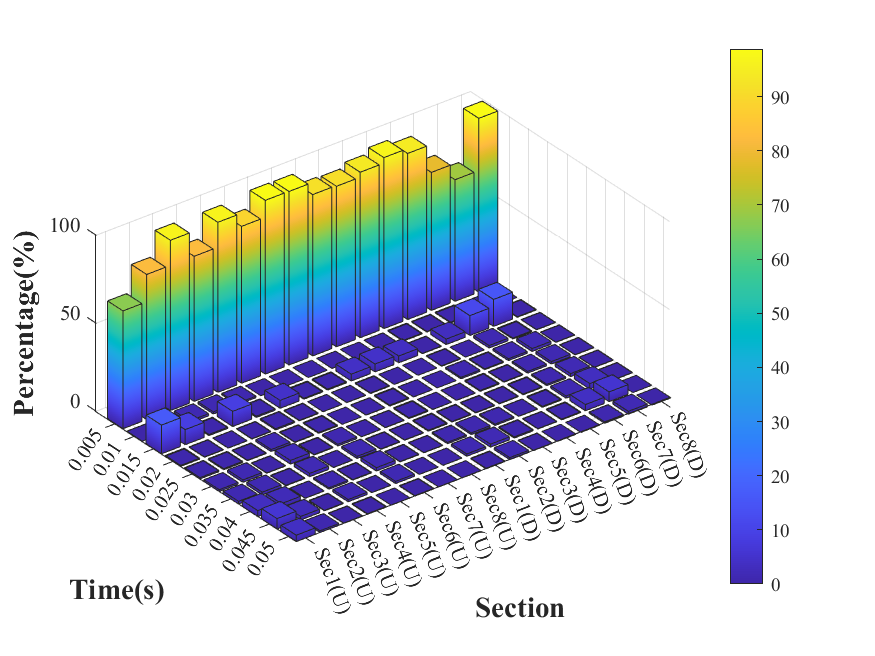}}
\centerline{(e) SSA-SAC get action}
\end{minipage}
\hfill
\begin{minipage}{0.23\linewidth}
\centerline{\includegraphics[width=4.5cm]{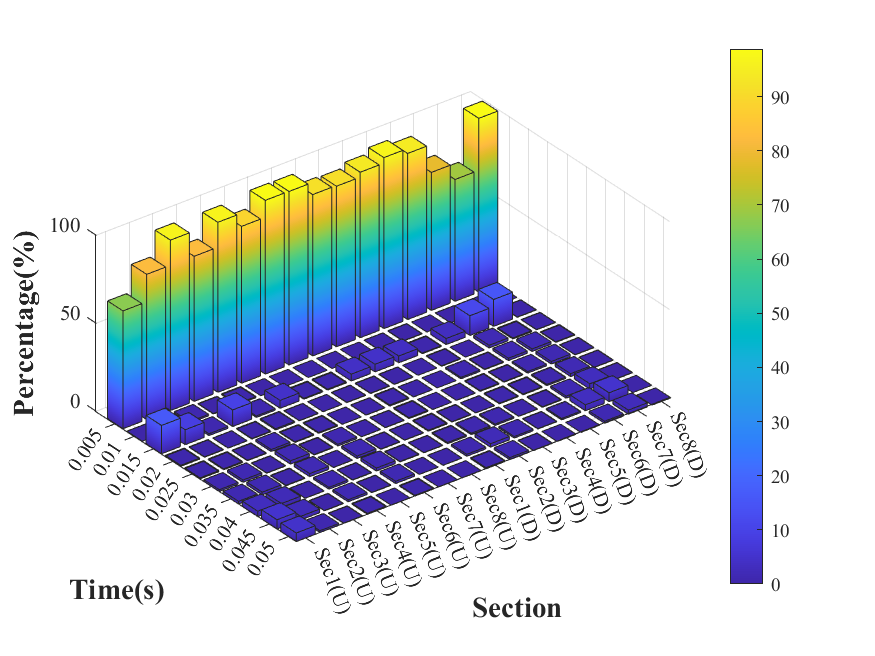}}
\centerline{(g) SSA-SAC state transition}
\end{minipage}
\hfill

\begin{minipage}{0.23\linewidth}
\centerline{\includegraphics[width=4.5cm]{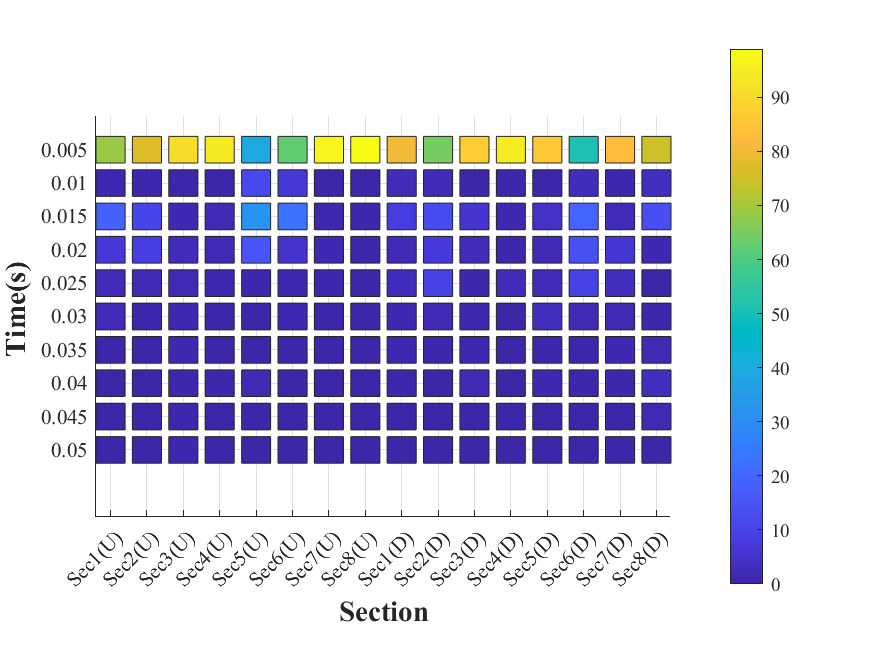}}
\centerline{(b) Top view}
\end{minipage}
\hfill
\begin{minipage}{0.23\linewidth}
\centerline{\includegraphics[width=4.5cm]{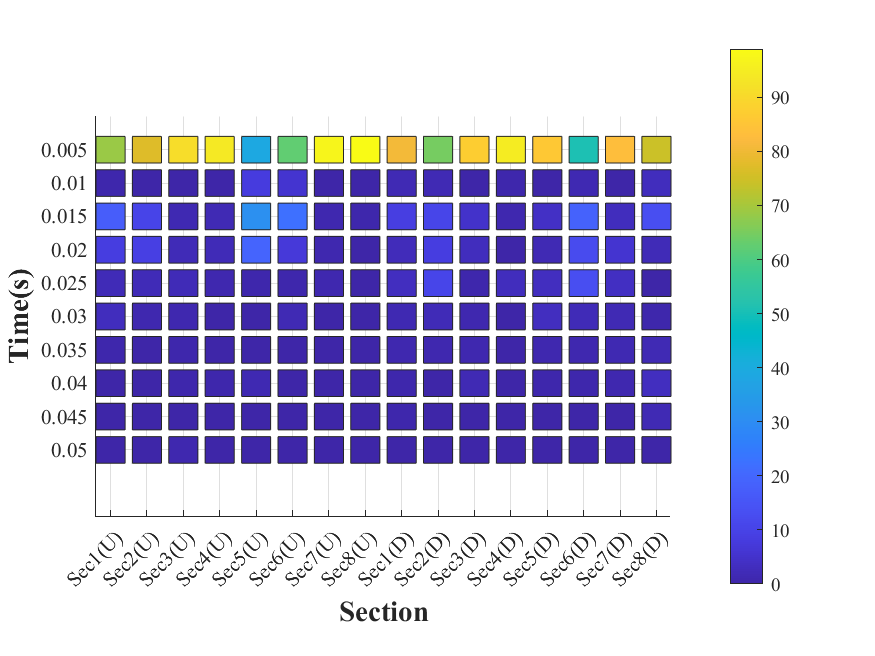}}
\centerline{(d) Top view}
\end{minipage}
\hfill
\begin{minipage}{0.23\linewidth}
\centerline{\includegraphics[width=4.5cm]{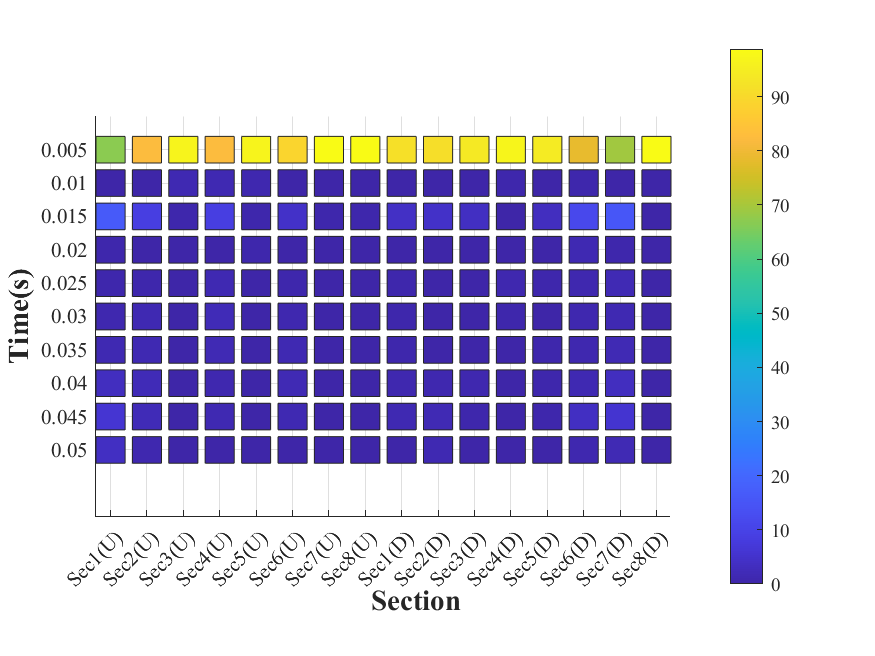}}
\centerline{(f) Top view}
\end{minipage}
\hfill
\begin{minipage}{0.23\linewidth}
\centerline{\includegraphics[width=4.5cm]{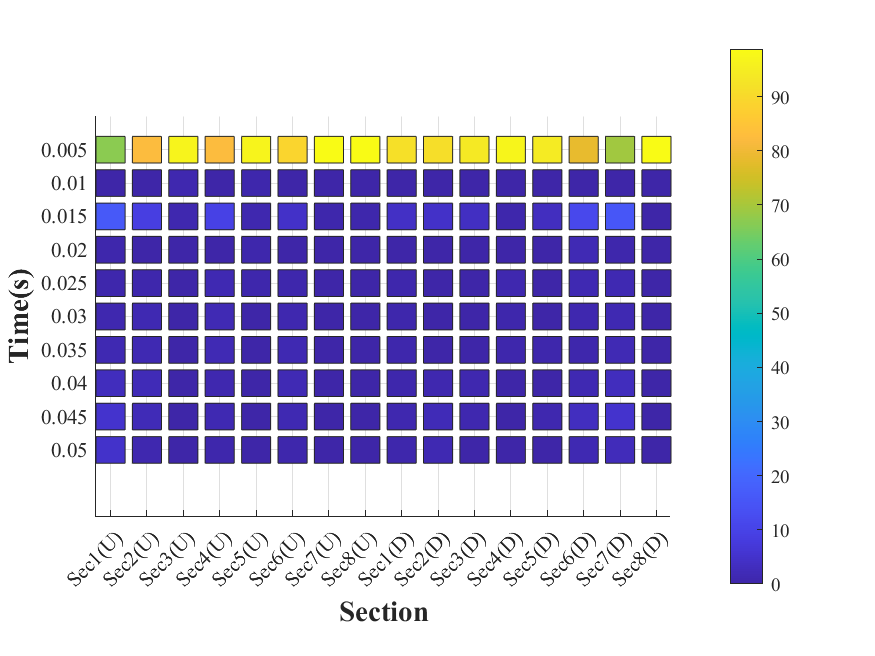}}
\centerline{(h) Top view}
\end{minipage}

\caption{Time distribution of getting action and state transition}
\label{fig:time_distribution}
\end{figure*}

\begin{table*}[htbp]
\renewcommand{\arraystretch}{1.5}
\setlength{\abovecaptionskip}{-0.2cm} 
\caption{The Results of Transferability Experient}
\label{table5}
\begin{center}
\begin{tabular}{cccccccc}
\hline
\multicolumn{1}{l}{Sec} & Direction & Actor              & Time    & Real Energy & Protect Counts & Noise Test & Maximum Step \\
\hline
\multirow{2}{*}{1}      & Up        & Section8(SSA-DDPG) & \textbf{95.20}    & \textbf{22.61}       & 48            & 102         & 146          \\
                        & Down      & Section8(SSA-SAC) & \textbf{97.05 }  & \textbf{25.67}       & 0             & 93         & 146           \\
\hline
\multirow{2}{*}{2}      & Up        & Section8(SSA-DDPG) & \textbf{199.54}  & \textbf{44.92}       & 56            & 163        & 208          \\
                        & Down      & Section8(SSA-SAC) & 391.35  & 85.48       & 0             & 158         & 208          \\
\hline
\multirow{2}{*}{3}      & Up        & Section8(SSA-DDPG) & 1146.95 & 64.46       & 38            & 143        & 165          \\
                        & Down      & Section8(SSA-SAC) & 499.53  & 144.69      & 0             & 133        & 165          \\
\hline
\multirow{2}{*}{4}      & Up        & Section8(SSA-DDPG) & \textbf{120.14}  & \textbf{32.95}       & 18            & 42         & 54           \\
                        & Down      & Section8(SSA-SAC) & 138.85  & 36.15       & 0             & 40         & 54           \\
\hline
\multirow{2}{*}{5}      & Up        & Section8(SSA-DDPG) & \textbf{124.87}  & \textbf{22.8}        & 18            & 35         & 50           \\
                        & Down      & Section8(SSA-SAC) & 134.32  & 42.61       & 0             & 34         & 50           \\
\hline
\multirow{2}{*}{6}      & Up        & Section8(SSA-DDPG) & \textbf{122.53}  & \textbf{29.41}       & 21            & 38         & 49           \\
                        & Down      & Section8(SSA-SAC) & \textbf{117.11}  & \textbf{20.98}       & 0             & 40         & 49           \\
\hline
\multirow{2}{*}{7}      & Up        & Section8(SSA-DDPG) & \textbf{151.16}  & \textbf{40.49}       & 39            & 60         & 79           \\
                        & Down      & Section8(SSA-SAC) & 292.52  & 45.55       & 0             & 66         & 79       \\
\hline
\end{tabular}
\end{center}
\vspace{-0.45cm}
\end{table*}

\subsection{Transferability Experiment}
This experiment aims to test whether the trained neural network of SSA-DRL can be deployed to a new environment. The trained SSA-DDPG and SSA-SAC algorithms in section8 of up and down direction are deployed to section1-section7 of the same direction and the results are shown in Table\ref{table5}. 

The meaning of noise test is briefly explained here. Since in this experiment the trained network is deployed in a new environment, there is a possibility that the network cannot work and the output will always be a noise. Considering the characteristic of tanh function, once the network cannot work, there may exist two special noise action sequences, all -1 or all 1, which indicates that the train will always brake or accelerate. Obviously, the all -1 action sequence cannot complete the operation plan. However, once the action sequence is all 1, it may complete the operation plan because the original action 1 always forces the train to accelerate and when the train is overspeed, the Shield and the searching tree will provide a safe action help the train to slow down. In this case, though the train completes the operation plan, it cannot be regarded as the trained network is transferability. In order to distinguish this case, the noise test is designed. In this test, we directly provide a noise action sequence with all 1 to record the protect times. Once the trained network can complete the operation plan and the protect times is far less than the noise test, the trained network is transferability.

Then from Table\ref{table5}, the SSA-DDPG is transferability in Section1,2,4,5,6,7 and the SSA-SAC is transferability in Section1,6. It is noted that this experiment is not rigorous and the experiment result can only verify the SSA-DRL may be transferability.

\begin{figure*}
\begin{minipage}{0.23\linewidth}
\centerline{\includegraphics[width=4.35cm]{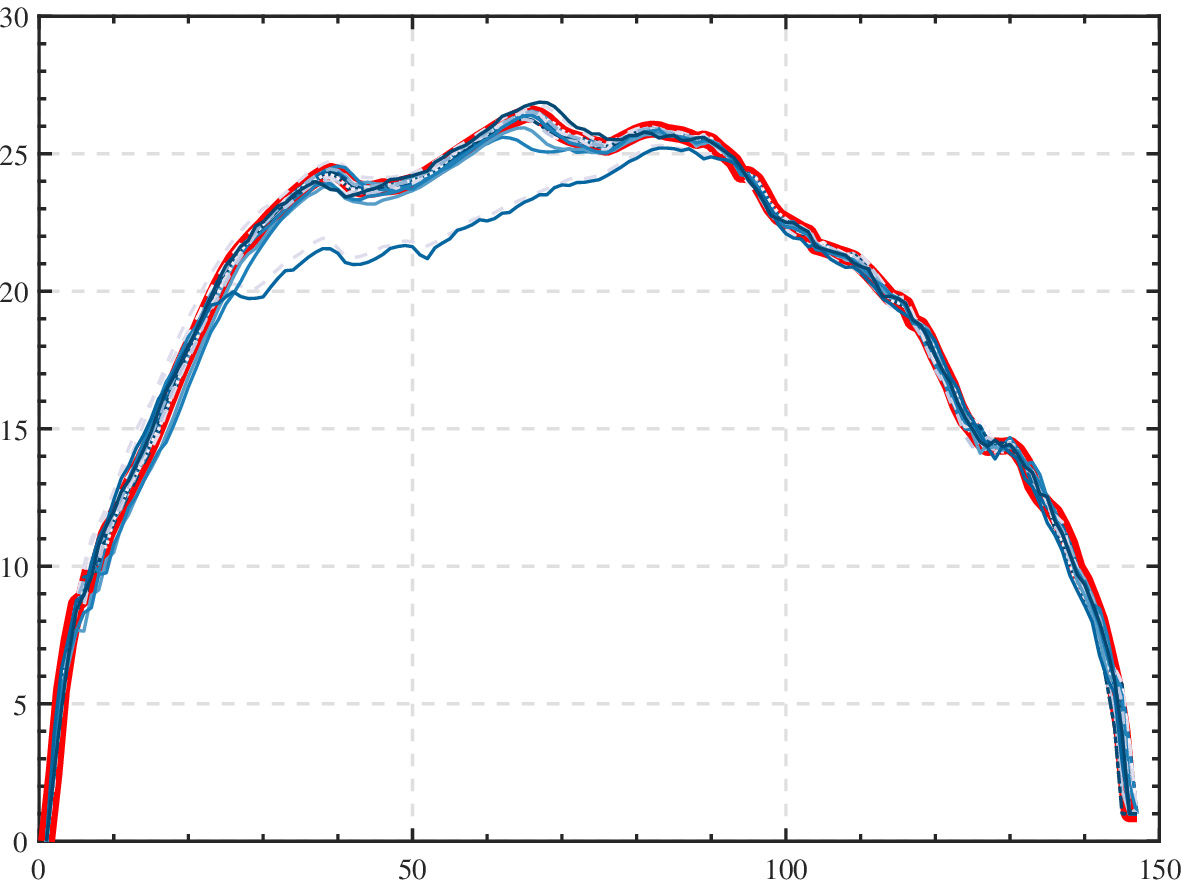}}
\centerline{(a) SSA-DDPG speed}
\end{minipage}
\hfill
\begin{minipage}{0.23\linewidth}
\centerline{\includegraphics[width=4.35cm]{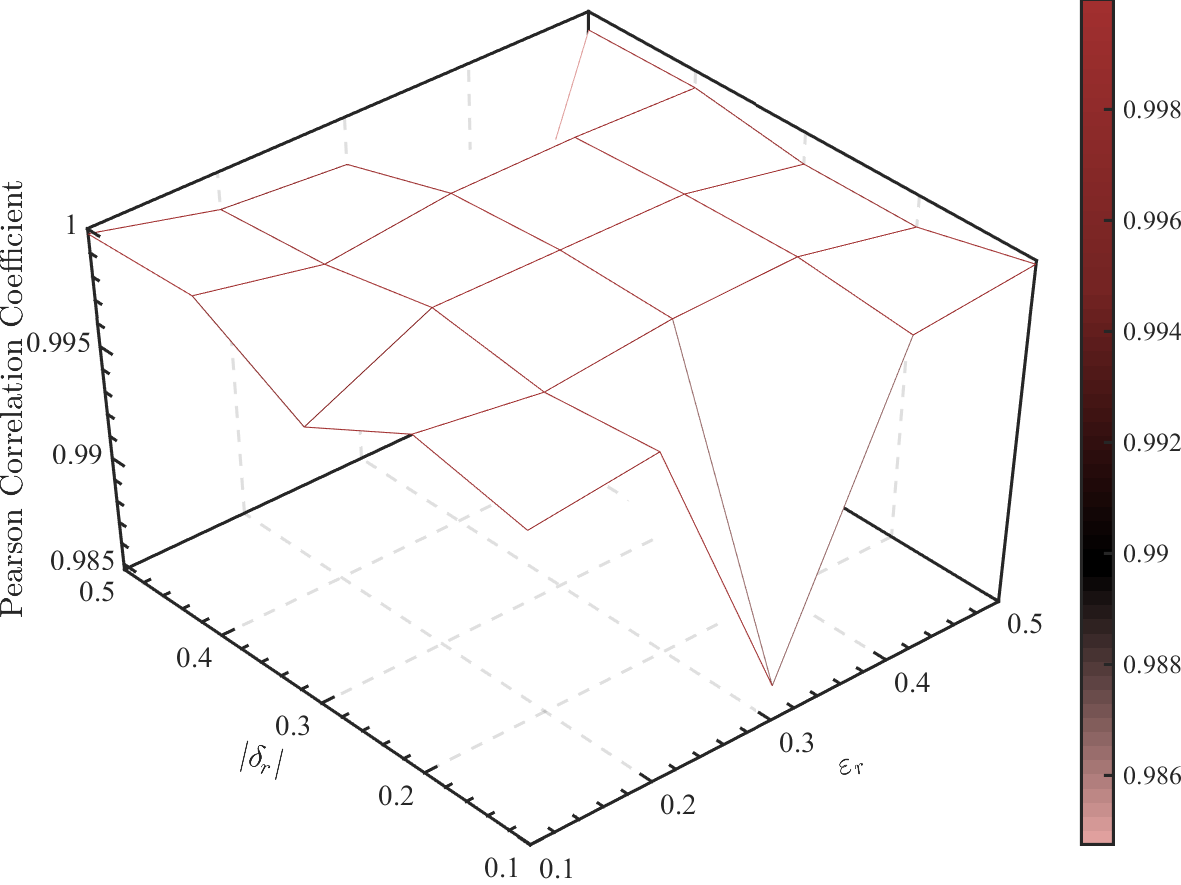}}
\centerline{(b) PCCs of SSA-DDPG speed}
\end{minipage}
\hfill
\begin{minipage}{0.23\linewidth}
\centerline{\includegraphics[width=4.35cm]{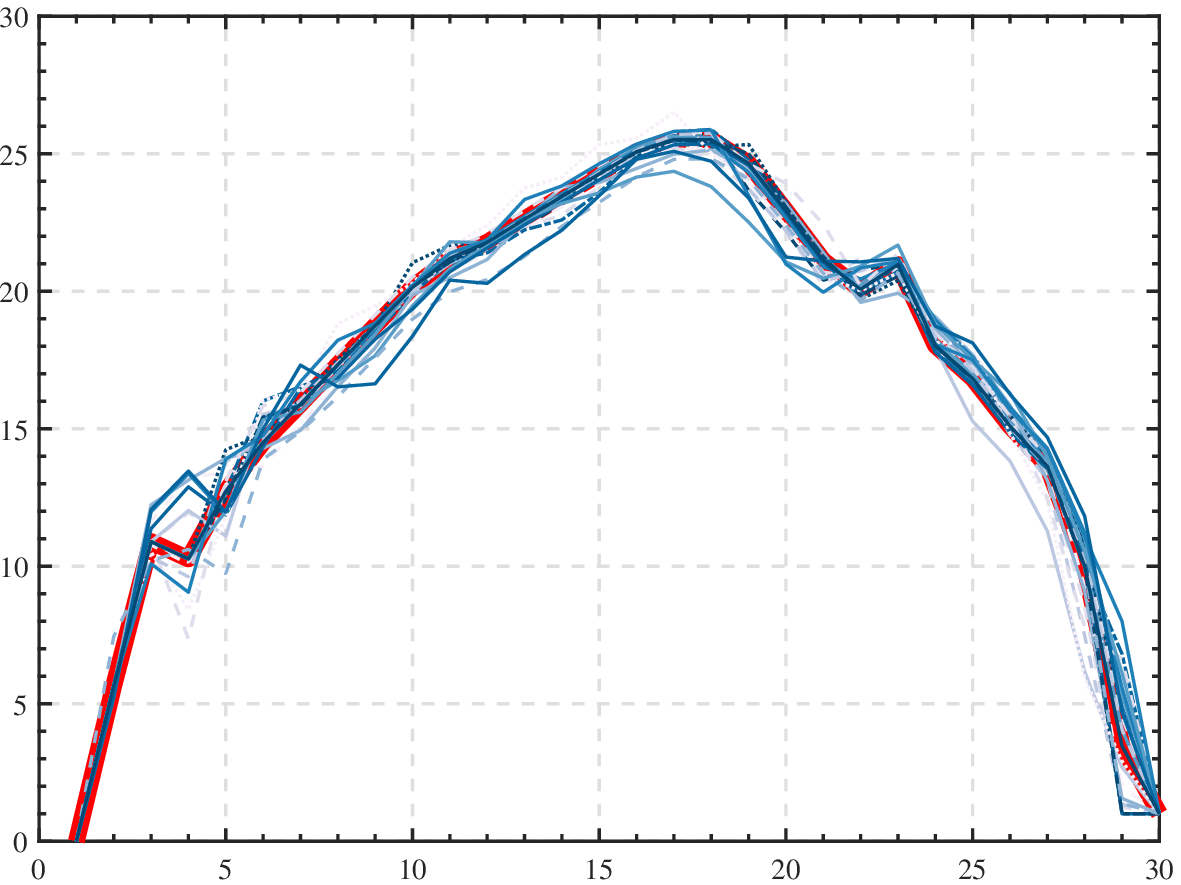}}
\centerline{(c) SSA-SAC speed}
\end{minipage}
\hfill
\begin{minipage}{0.23\linewidth}
\centerline{\includegraphics[width=4.35cm]{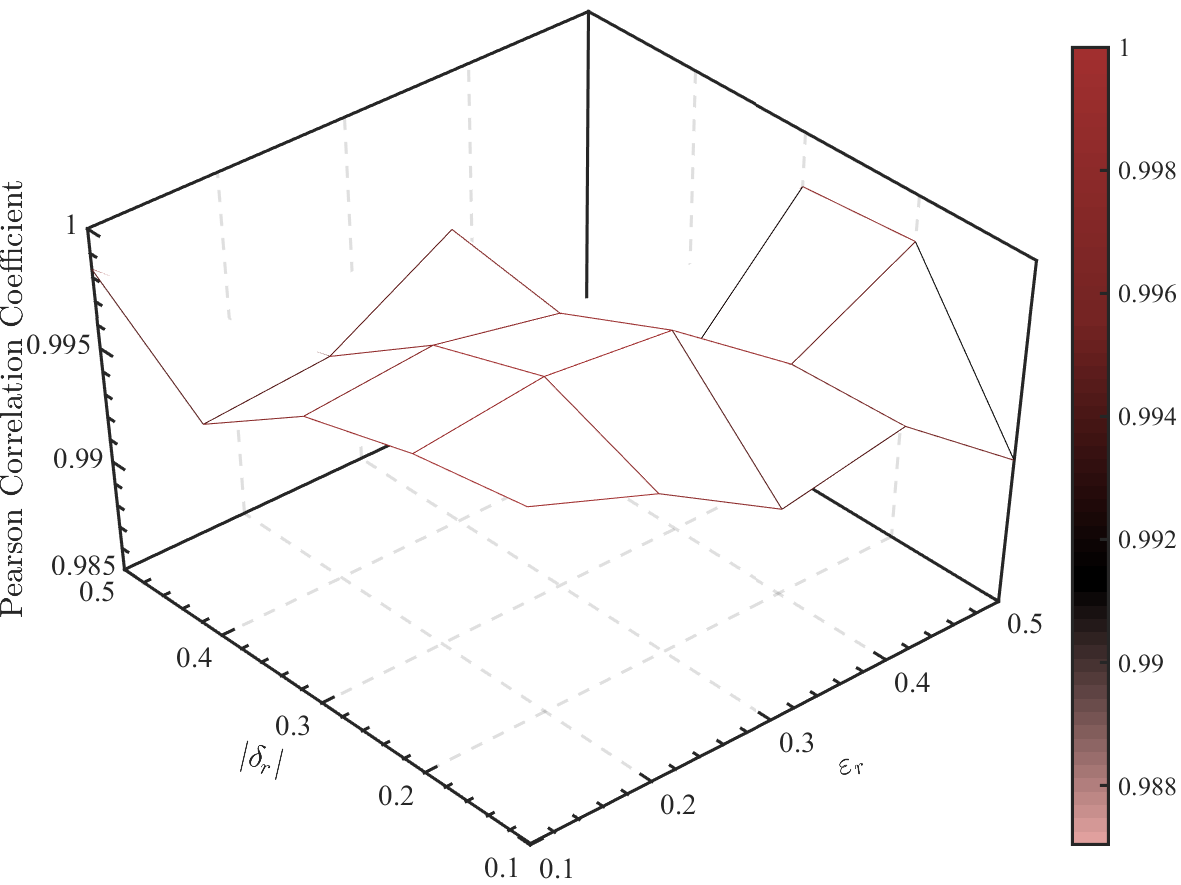}}
\centerline{(d) PCCs of SSA-SAC speed}
\end{minipage}
\hfill
\vspace{1.3cm} 
\begin{minipage}{0.23\linewidth}
\centerline{\includegraphics[width=4.35cm]{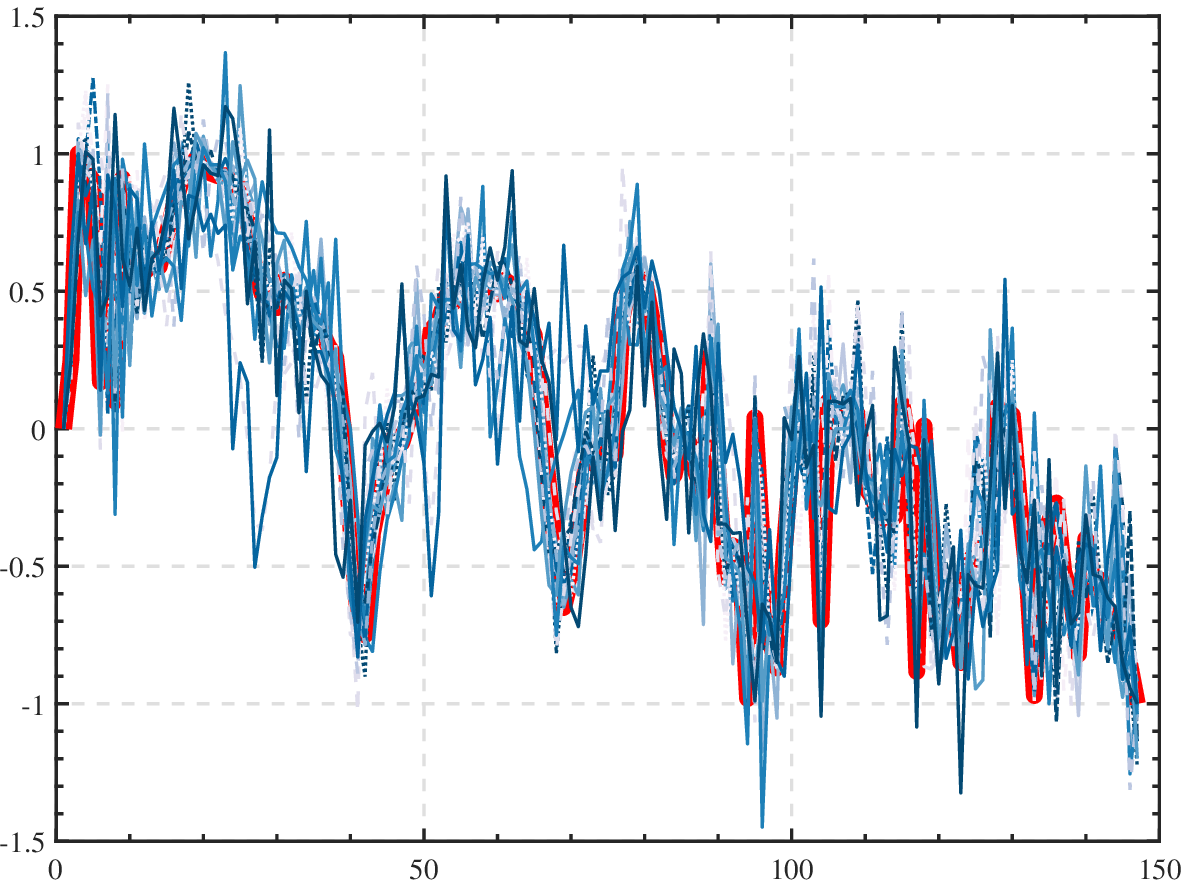}}
\centerline{(e) SSA-DDPG action}
\end{minipage}
\hfill
\begin{minipage}{0.23\linewidth}
\centerline{\includegraphics[width=4.35cm]{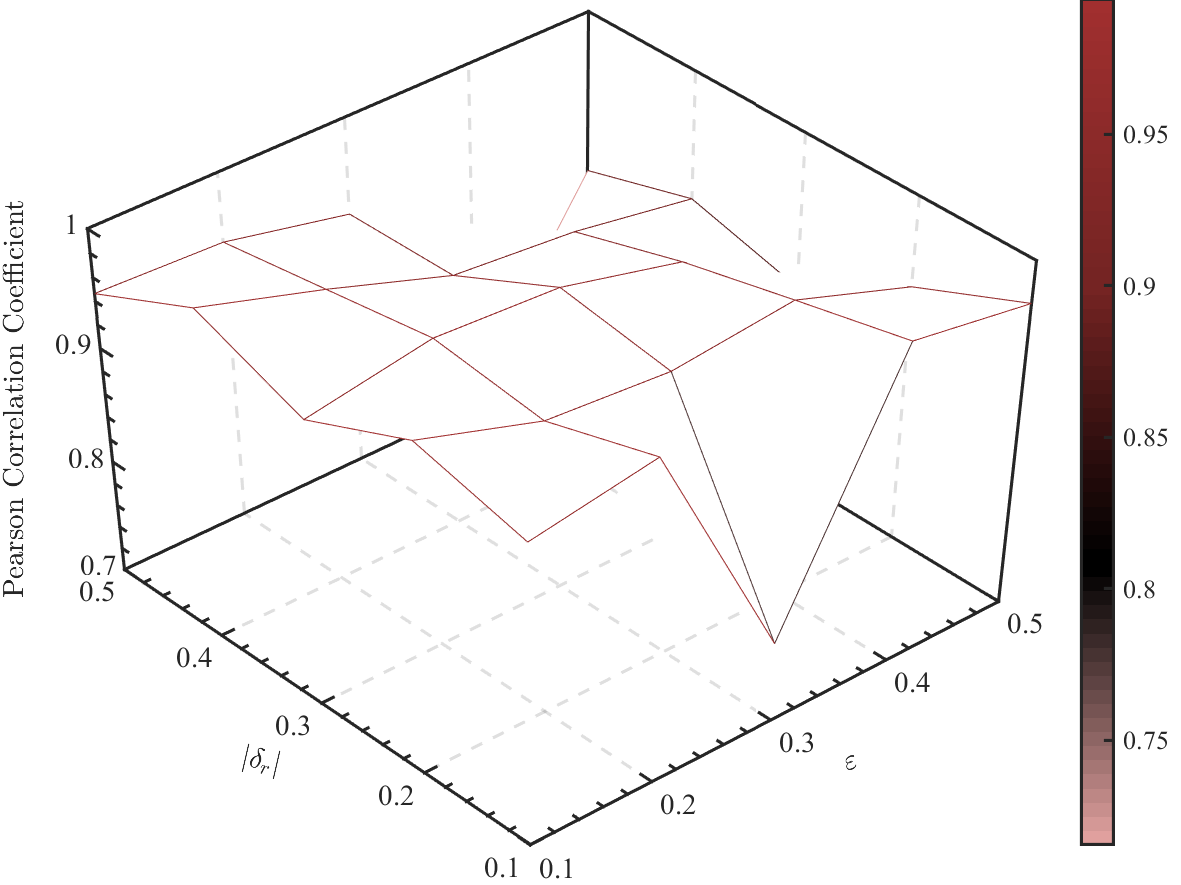}}
\centerline{(f) PCCs of SSA-DDPG action}
\end{minipage}
\hfill
\begin{minipage}{0.23\linewidth}
\centerline{\includegraphics[width=4.35cm]{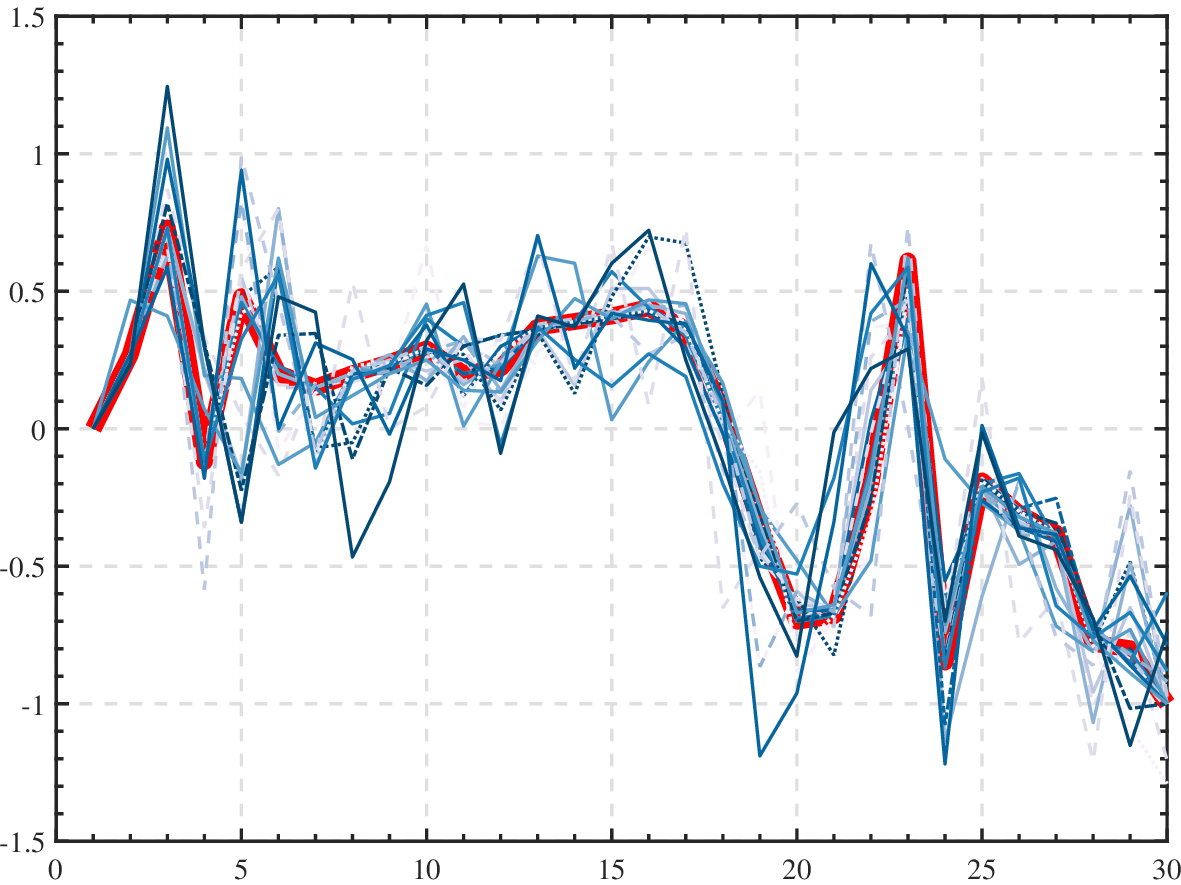}}
\centerline{(g) SSA-SAC action}
\end{minipage}
\hfill
\begin{minipage}{0.23\linewidth}
\centerline{\includegraphics[width=4.35cm]{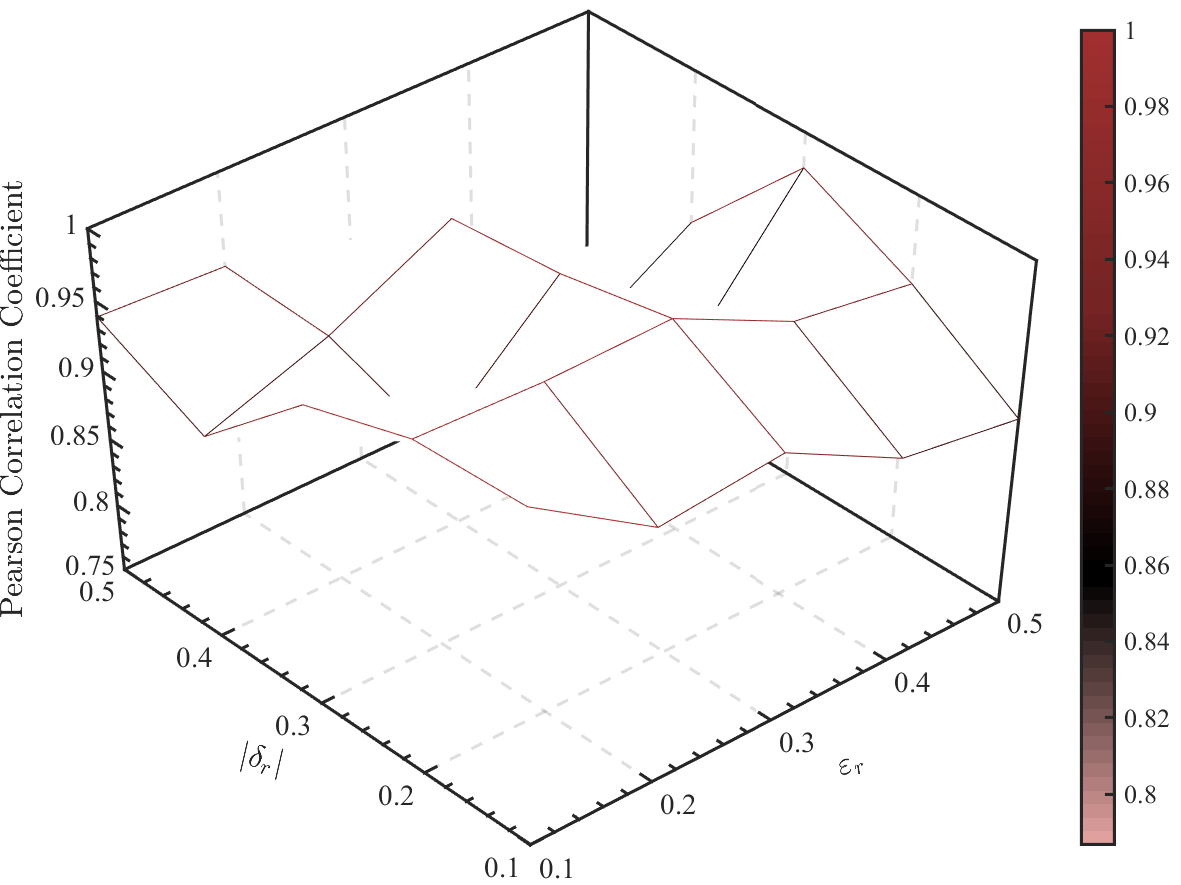}}
\centerline{(h) PCCs of SSA-SAC action}
\end{minipage}
\hfill
\vspace{1.3cm} 
\begin{minipage}{0.23\linewidth}
\centerline{\includegraphics[width=4.35cm]{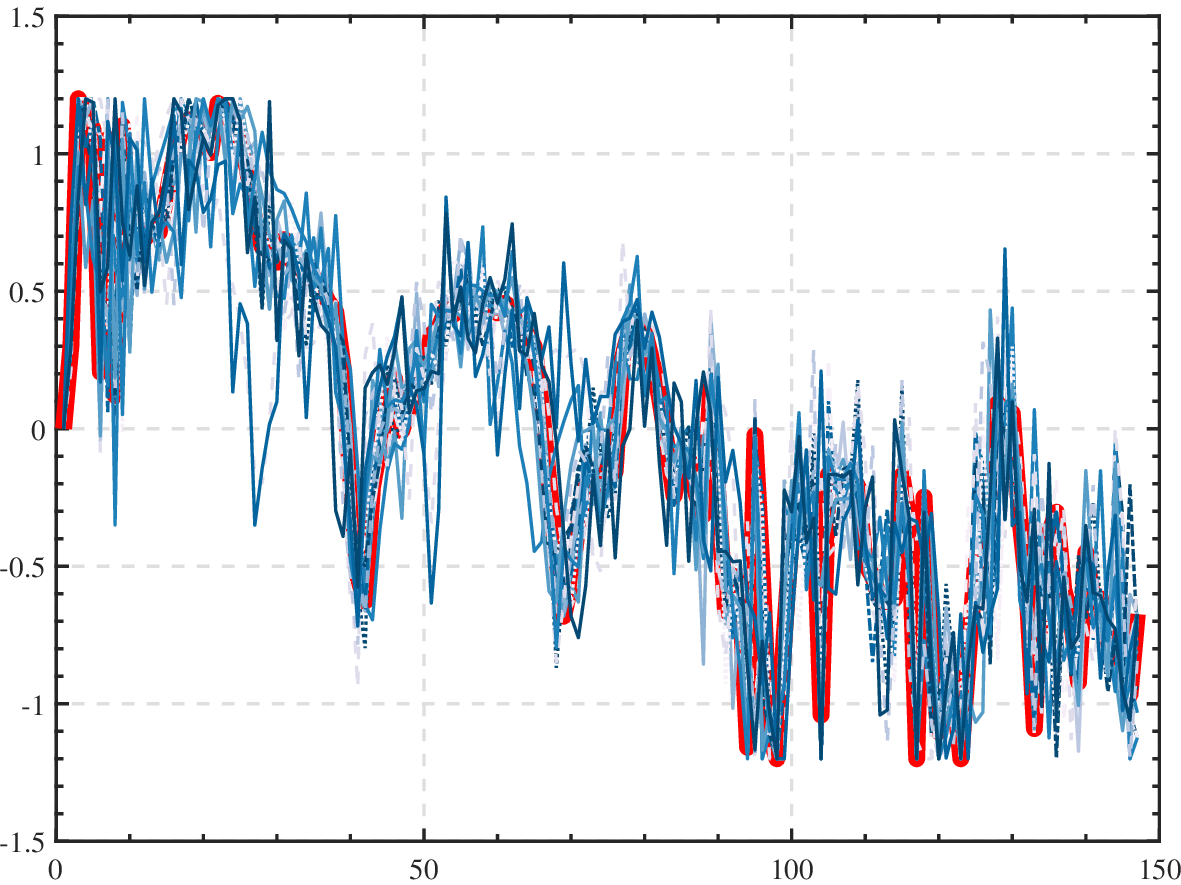}}
\centerline{(i) SSA-DDPG acc}
\end{minipage}
\hfill
\begin{minipage}{0.23\linewidth}
\centerline{\includegraphics[width=4.35cm]{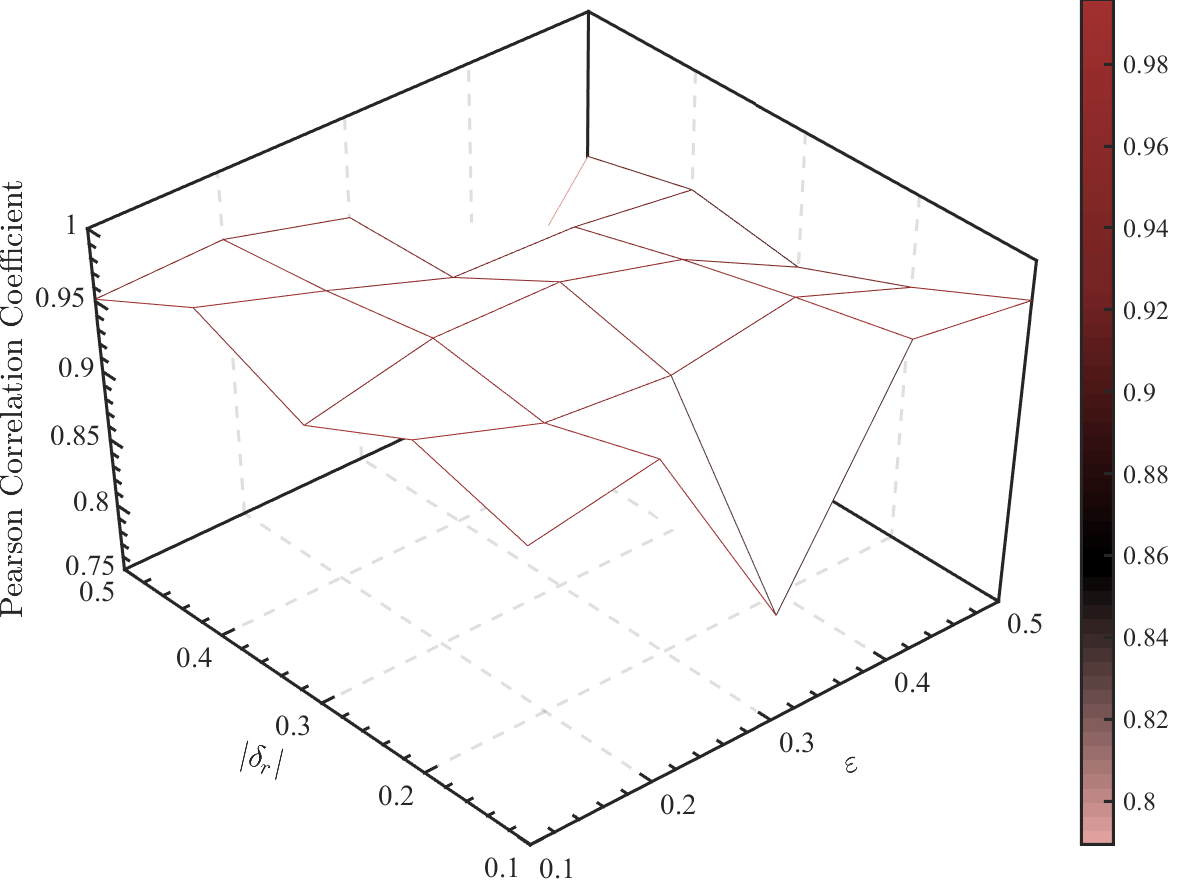}}
\centerline{(j) PCCs of SSA-DDPG acc}
\end{minipage}
\hfill
\begin{minipage}{0.23\linewidth}
\centerline{\includegraphics[width=4.35cm]{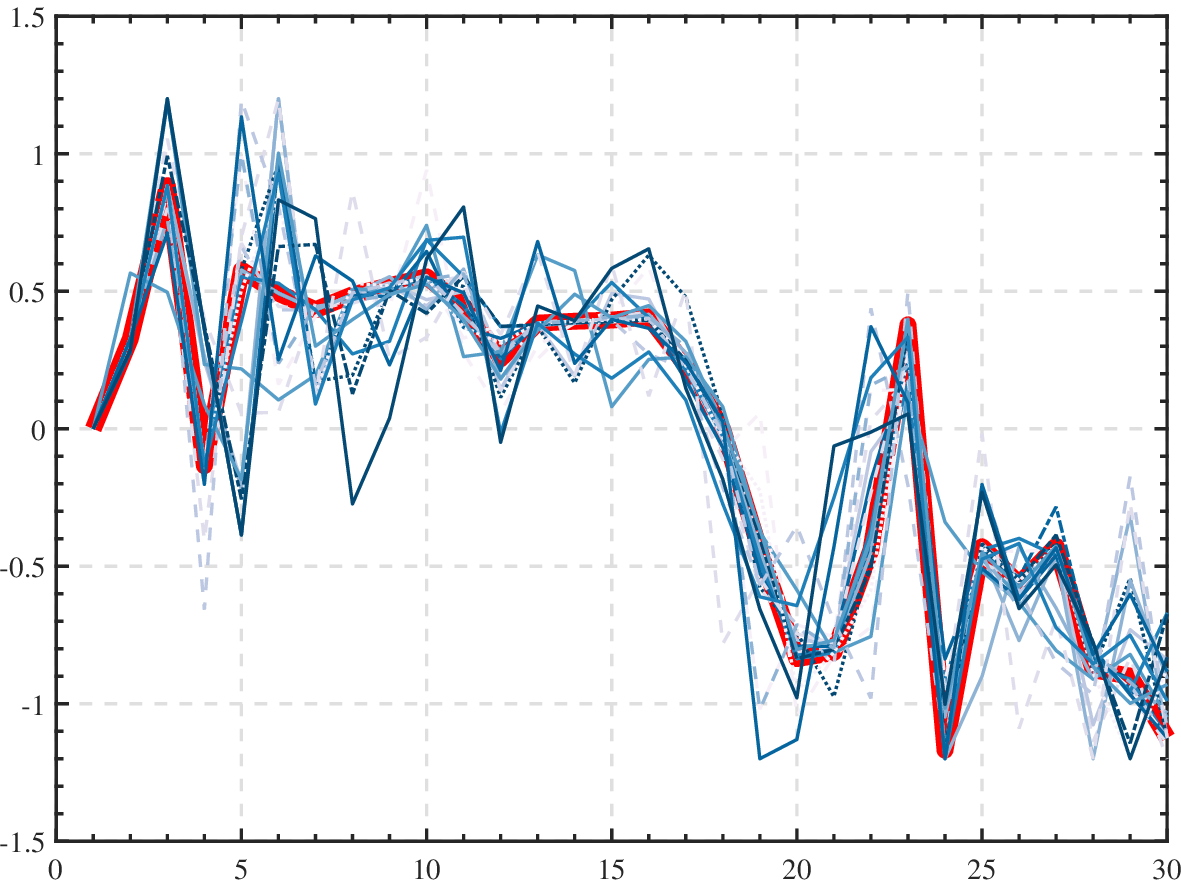}}
\centerline{(k) SSA-SAC acc}
\end{minipage}
\hfill
\begin{minipage}{0.23\linewidth}
\centerline{\includegraphics[width=4.35cm]{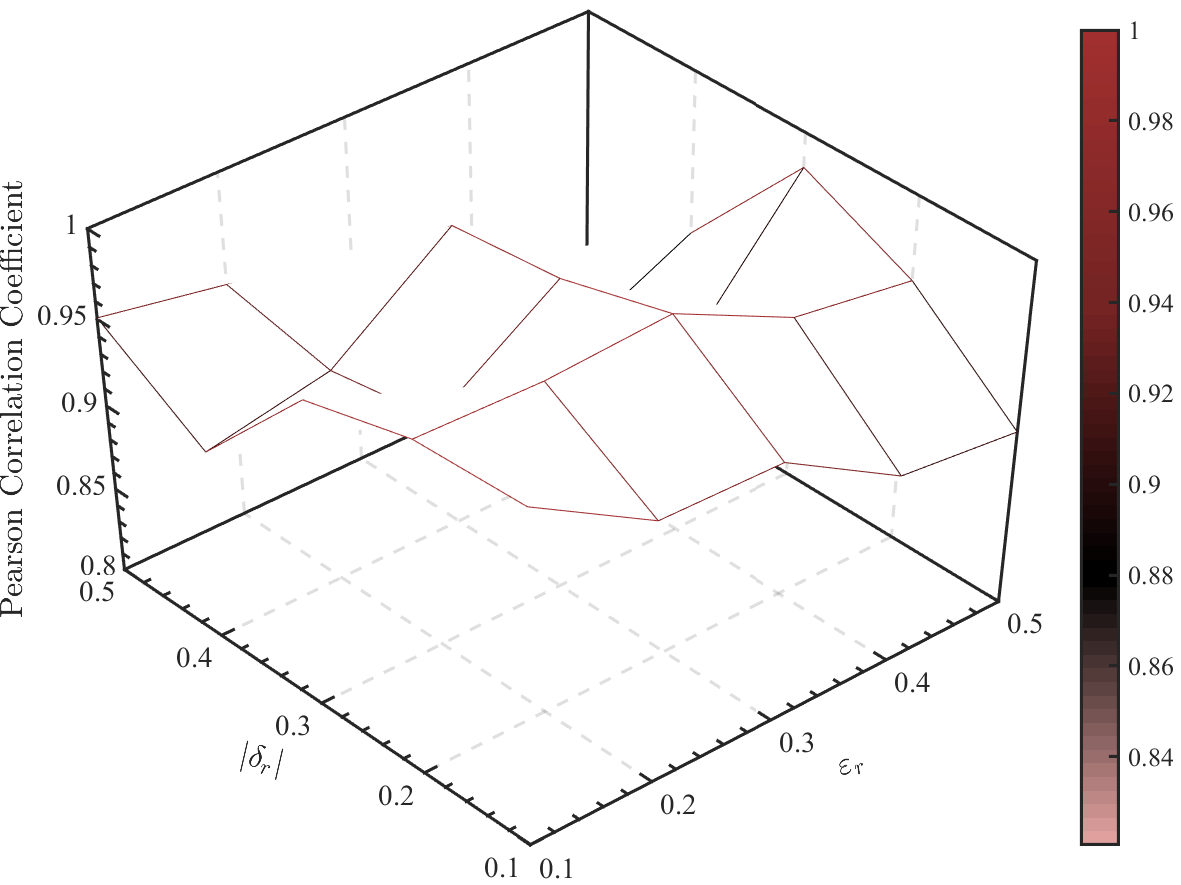}}
\centerline{(l) PCCs of SSA-SAC acc}
\end{minipage}
\hfill
\caption{Results of robustness experiments.}
\label{fig:robustnessv}
\end{figure*}

\begin{figure}
\begin{minipage}{0.95\linewidth}
\centerline{\includegraphics[width=7.5cm]{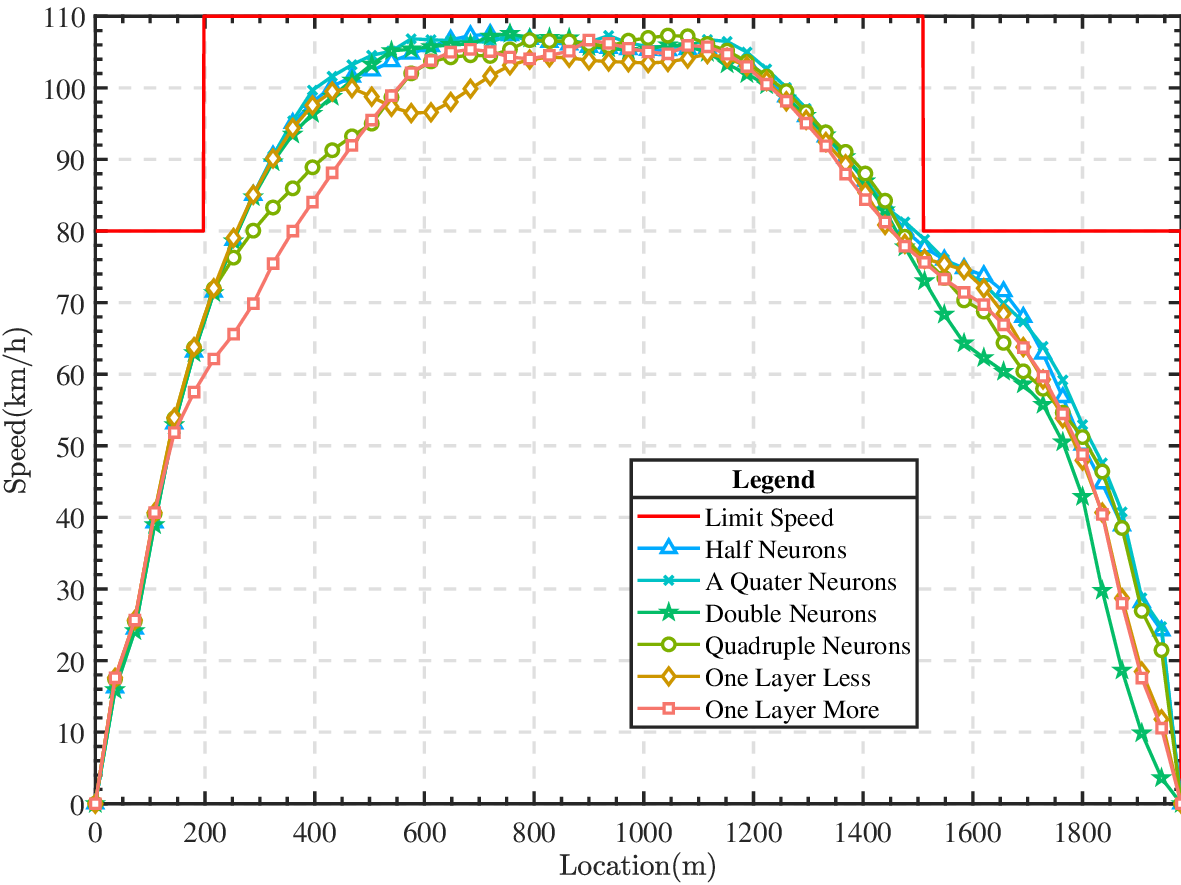}}
\centerline{(a) Up Direction (Section4)}
\end{minipage}
\hfill
\begin{minipage}{0.95\linewidth}
\centerline{\includegraphics[width=7.5cm]{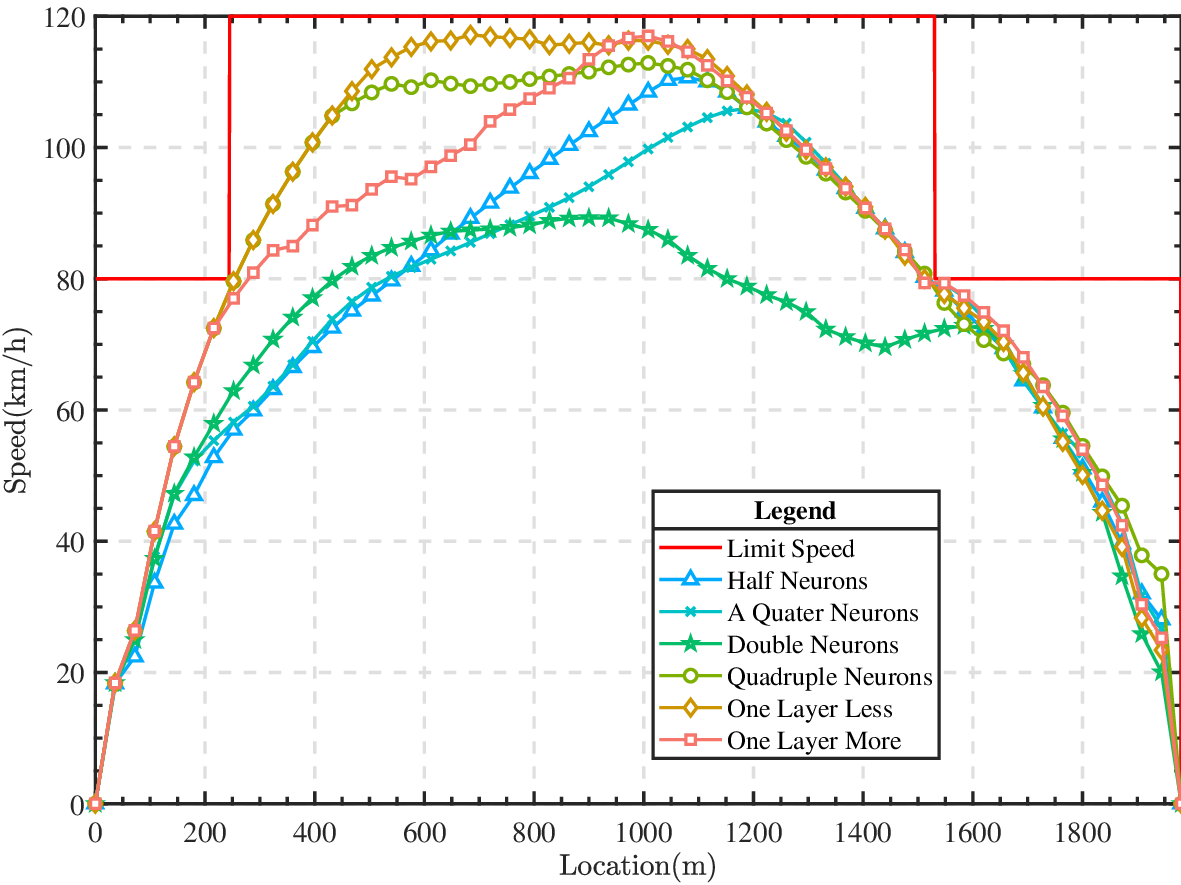}}
\centerline{(b) Down Direction (Section4)}
\end{minipage}
\hfill
\caption{Speed profiles of different structures of additional learner.}
\label{fig:Adexp}
\end{figure}

\subsection{Robustness Experiment}
This experiment aims to verify the robustness of the SSA-DRL, that is, the ability to complete the operation plan under the condition that the action is disturbed. Two parameters $\varepsilon_r,\delta_r$ are introduced in this experiment to control the probability and magnitude of action disturbance. For an action $a_r$ given to the agent, it has $\varepsilon_r\times 100\%$ probability to change to another action $a_r+\mathcal{N}_r,\mathcal{N}_r\in \left[-\delta_r,\delta_r\right]$. The range of $\varepsilon_r,|\delta_r|$ are both $\left[0.1,0.5\right]$ and the changing stepsize is 0.1. The Pearson correlation coefficient (PCC) is used here to measure the degree of correlation between the original sequence and the disturbed sequence. SSA-DDPG  and SSA-SAC are used in Section1 up and down direction respectively. Fig.~\ref{fig:robustnessv} shows the original and disturbed curves of speed profile, action sequence and acceleration sequence and the changing trend of PCC when $\varepsilon_r$ and $\delta_r$ change. In Fig.~\ref{fig:robustnessv}, the original curves are red bold and the disturbed curves are blue dash. It is clear that the disturbed curves are of the same trend with the original curve and there is no completely different curve or one curve is totally changed after a disturbance. The PCCs of speed profile, action sequences and acceleration sequences are all larger than 0.985, 0.7 and 0.75, since the PCC is more close to 1, the two curves are more linear correlation, thus combined with Fig.~\ref{fig:robustnessv}, it can be concluded that the SSA-DRL may have a strong robustness in some scenarios.

\subsection{Design of the Additional Learner}
As mentioned above, the structure of the additional learner can be different from the actor in the used DRL algorithm. In this experiment, six different neural network structures are used to verify that the design of additional learner will not influence the final strategy significantly. The six different structures are half neuron numbers, a quarter neuron numbers, double neuron numbers, quadruple neuron numbers, one hidden layer less (neuron numbers are not changed) and one hidden layer more respectively. In this experiment, the training process is the same with the basic simulation. Also, the active function and the connection mode of neural networks are not changed. SSA-DDPG and SSA-SAC are used in section4 up and down directions respectively and the speed profiles in one execution process are shown in Fig.~\ref{fig:Adexp}.

It can be clearly seen that only three speed profiles (double neurons, a quarter neurons and double neurons) of SSA-SAC have big differences with other speed profiles. In this case, though the structures of the additional learner and the actor in DRL are not the same, it will not have a big influence of the final execution process which makes the users more easily to train an easy to deploy neural network.

\section{Conclusion}
Aiming at the safe control strategy for urban rail transit autonomous operation, an SRL framework called SSA-DRL is proposed in this paper. The SSA-DRL uses a LTL based post-posed Shield to check whether an original action is safe and then uses a searching tree to find a safe action with the highest long term reward to correct the unsafe action. An additional learner consists of a replay buffer and an additional actor is also added to help to reduce the protect times in execution process.

Our framework is verified in simulations under four different aspects with two basic DRL algorithms. The basic experiment shows that the framework can control the train complete the operation plan with a lower energy consumption and protect times. And compared with the basic DRL or Shield-DRL algorithms, the SSA-DRL can get a higher reward and achieve convergence earlier in most simulation scenarios. The transferability and robustness experiment verify that a trained network can transfer to a new environment and can still complete the operation plan under some disturbances. The experiment of the design of the additional learner verifies that SSA-DRL can help users to train a easy to deploy neural network without big loss of performance.

Finally, from our research and other related researches  we summarize the difficulties of design a self-decision algorithm for autonomous train operation into 2\textbf{W}2\textbf{H} as follow.
\begin{itemize}
    \item \textbf{W}hat algorithms can be used to design a self-decision algorithm meanwhile optimize the objective functions?
    \item \textbf{W}hat methods can be used to build a fail-safe mechanism if the designed algorithm is based on AI methods?
    \item \textbf{H}ow do we ensure the designed algorithms always have a safe output which will not lead to an overspeed operation?
    \item \textbf{H}ow do we explain the safe output of the designed algorithm or \textbf{H}ow do we explain why the output is safe?
\end{itemize}

In the future work, we will try to find a more general framework to systematically answer the 2\textbf{W}2\textbf{H} problems.




\bibliographystyle{IEEEtran}
\bibliography{IEEEabrv,main.bib}

\begin{thebibliography}{10}
\providecommand{\url}[1]{#1}
\csname url@samestyle\endcsname
\providecommand{\newblock}{\relax}
\providecommand{\bibinfo}[2]{#2}
\providecommand{\BIBentrySTDinterwordspacing}{\spaceskip=0pt\relax}
\providecommand{\BIBentryALTinterwordstretchfactor}{4}
\providecommand{\BIBentryALTinterwordspacing}{\spaceskip=\fontdimen2\font plus
\BIBentryALTinterwordstretchfactor\fontdimen3\font minus
  \fontdimen4\font\relax}
\providecommand{\BIBforeignlanguage}[2]{{%
\expandafter\ifx\csname l@#1\endcsname\relax
\typeout{** WARNING: IEEEtran.bst: No hyphenation pattern has been}%
\typeout{** loaded for the language `#1'. Using the pattern for}%
\typeout{** the default language instead.}%
\else
\language=\csname l@#1\endcsname
\fi
#2}}
\providecommand{\BIBdecl}{\relax}
\BIBdecl

\bibitem{fu2014monocular}
C.~Fu, M.~A. Olivares-Mendez, R.~Suarez-Fernandez, and P.~Campoy, ``Monocular
  visual-inertial slam-based collision avoidance strategy for fail-safe uav
  using fuzzy logic controllers: comparison of two cross-entropy optimization
  approaches,'' \emph{Journal of Intelligent \& Robotic Systems}, vol.~73, pp.
  513--533, 2014.

\bibitem{horwick2010strategy}
M.~H{\"o}rwick and K.-H. Siedersberger, ``Strategy and architecture of a safety
  concept for fully automatic and autonomous driving assistance systems,'' in
  \emph{2010 IEEE Intelligent Vehicles Symposium}.\hskip 1em plus 0.5em minus
  0.4em\relax IEEE, 2010, pp. 955--960.

\bibitem{sutton2018reinforcement}
R.~S. Sutton and A.~G. Barto, \emph{Reinforcement learning: An
  introduction}.\hskip 1em plus 0.5em minus 0.4em\relax MIT press, 2018.

\bibitem{1606.01540}
G.~Brockman, V.~Cheung, L.~Pettersson, J.~Schneider, J.~Schulman, J.~Tang, and
  W.~Zaremba, ``Openai gym,'' 2016.

\bibitem{todorov2012mujoco}
E.~Todorov, T.~Erez, and Y.~Tassa, ``Mujoco: A physics engine for model-based
  control,'' in \emph{2012 IEEE/RSJ International Conference on Intelligent
  Robots and Systems}.\hskip 1em plus 0.5em minus 0.4em\relax IEEE, 2012, pp.
  5026--5033.

\bibitem{thananjeyan2021recovery}
B.~Thananjeyan, A.~Balakrishna, S.~Nair, M.~Luo, K.~Srinivasan, M.~Hwang, J.~E.
  Gonzalez, J.~Ibarz, C.~Finn, and K.~Goldberg, ``Recovery rl: Safe
  reinforcement learning with learned recovery zones,'' \emph{IEEE Robotics and
  Automation Letters}, vol.~6, no.~3, pp. 4915--4922, 2021.

\bibitem{kaur2022trustworthy}
D.~Kaur, S.~Uslu, K.~J. Rittichier, and A.~Durresi, ``Trustworthy artificial
  intelligence: a review,'' \emph{ACM Computing Surveys (CSUR)}, vol.~55,
  no.~2, pp. 1--38, 2022.

\bibitem{liu2022trustworthy}
H.~Liu, Y.~Wang, W.~Fan, X.~Liu, Y.~Li, S.~Jain, Y.~Liu, A.~Jain, and J.~Tang,
  ``Trustworthy ai: A computational perspective,'' \emph{ACM Transactions on
  Intelligent Systems and Technology}, vol.~14, no.~1, pp. 1--59, 2022.

\bibitem{li2023trustworthy}
B.~Li, P.~Qi, B.~Liu, S.~Di, J.~Liu, J.~Pei, J.~Yi, and B.~Zhou, ``Trustworthy
  ai: From principles to practices,'' \emph{ACM Computing Surveys}, vol.~55,
  no.~9, pp. 1--46, 2023.

\bibitem{smuha2019eu}
N.~A. Smuha, ``The eu approach to ethics guidelines for trustworthy artificial
  intelligence,'' \emph{Computer Law Review International}, vol.~20, no.~4, pp.
  97--106, 2019.

\bibitem{howlett1990optimal}
P.~Howlett, ``An optimal strategy for the control of a train,'' \emph{The
  ANZIAM Journal}, vol.~31, no.~4, pp. 454--471, 1990.

\bibitem{howlett2012energy}
P.~G. Howlett and P.~J. Pudney, \emph{Energy-efficient train control}.\hskip
  1em plus 0.5em minus 0.4em\relax Springer Science \& Business Media, 2012.

\bibitem{khmelnitsky2000optimal}
E.~Khmelnitsky, ``On an optimal control problem of train operation,''
  \emph{IEEE transactions on automatic control}, vol.~45, no.~7, pp.
  1257--1266, 2000.

\bibitem{albrecht2016k}
A.~Albrecht, P.~Howlett, P.~Pudney, X.~Vu, and P.~Zhou, ``The key principles of
  optimal train control—part 2: Existence of an optimal strategy, the local
  energy minimization principle, uniqueness, computational techniques,''
  \emph{Transportation Research Part B: Methodological}, vol.~94, pp. 509--538,
  2016.

\bibitem{albrecht2016key}
------, ``The key principles of optimal train control—part 1: Formulation of
  the model, strategies of optimal type, evolutionary lines, location of
  optimal switching points,'' \emph{Transportation Research Part B:
  Methodological}, vol.~94, pp. 482--508, 2016.

\bibitem{GOVERDE2021353}
\BIBentryALTinterwordspacing
R.~M. Goverde, G.~M. Scheepmaker, and P.~Wang, ``Pseudospectral optimal train
  control,'' \emph{European Journal of Operational Research}, vol. 292, no.~1,
  pp. 353--375, 2021. [Online]. Available:
  \url{https://www.sciencedirect.com/science/article/pii/S0377221720308948}
\BIBentrySTDinterwordspacing

\bibitem{2017Review}
G.~M. Scheepmaker, R.~Goverde, and L.~G. Kroon, ``Review of energy-efficient
  train control and timetabling,'' \emph{European Journal of Operational
  Research}, vol. 257, no. 2017, p. 355–376, 2017.

\bibitem{ko2004application}
H.~Ko, T.~Koseki, and M.~Miyatake, ``Application of dynamic programming to the
  optimization of the running profile of a train,'' \emph{WIT Transactions on
  The Built Environment}, vol.~74, 2004.

\bibitem{lu2013single}
S.~Lu, S.~Hillmansen, T.~K. Ho, and C.~Roberts, ``Single-train trajectory
  optimization,'' \emph{IEEE Transactions on Intelligent Transportation
  Systems}, vol.~14, no.~2, pp. 743--750, 2013.

\bibitem{2017Optimal}
X.~Wang, T.~Tang, and H.~He, ``Optimal control of heavy haul train based on
  approximate dynamic programming,'' \emph{Advances in Mechanical
  Engineering,9,4(2017-4-01)}, vol.~9, no.~4, p. 168781401769811, 2017.

\bibitem{liu2018optimal}
T.~Liu, J.~Xun, J.~Yin, and X.~Xiao, ``Optimal train control by approximate
  dynamic programming: Comparison of three value function approximation
  methods,'' in \emph{2018 21st International Conference on Intelligent
  Transportation Systems (ITSC)}.\hskip 1em plus 0.5em minus 0.4em\relax IEEE,
  2018, pp. 2741--2746.

\bibitem{2020Train}
P.~Wang, A.~Trivella, R.~Goverde, and F.~Corman, ``Train trajectory
  optimization for improved on-time arrival under parametric uncertainty,''
  \emph{Transportation Research Part C Emerging Technologies}, vol. 119, p.
  102680, 2020.

\bibitem{zhu2017communication}
L.~Zhu, Y.~He, F.~R. Yu, B.~Ning, T.~Tang, and N.~Zhao, ``Communication-based
  train control system performance optimization using deep reinforcement
  learning,'' \emph{IEEE Transactions on Vehicular Technology}, vol.~66,
  no.~12, pp. 10\,705--10\,717, 2017.

\bibitem{LIU2021103249}
\BIBentryALTinterwordspacing
W.~Liu, S.~Su, T.~Tang, and X.~Wang, ``A dqn-based intelligent control method
  for heavy haul trains on long steep downhill section,'' \emph{Transportation
  Research Part C: Emerging Technologies}, vol. 129, p. 103249, 2021. [Online].
  Available:
  \url{https://www.sciencedirect.com/science/article/pii/S0968090X2100262X}
\BIBentrySTDinterwordspacing

\bibitem{liu2021dqn}
------, ``A dqn-based intelligent control method for heavy haul trains on long
  steep downhill section,'' \emph{Transportation Research Part C: Emerging
  Technologies}, vol. 129, p. 103249, 2021.

\bibitem{yin2014intelligent}
J.~Yin, D.~Chen, and L.~Li, ``Intelligent train operation algorithms for subway
  by expert system and reinforcement learning,'' \emph{IEEE Transactions on
  Intelligent Transportation Systems}, vol.~15, no.~6, pp. 2561--2571, 2014.

\bibitem{zhou2020smart}
K.~Zhou, S.~Song, A.~Xue, K.~You, and H.~Wu, ``Smart train operation algorithms
  based on expert knowledge and reinforcement learning,'' \emph{IEEE
  Transactions on Systems, Man, and Cybernetics: Systems}, vol.~52, no.~2, pp.
  716--727, 2020.

\bibitem{shang2021deep}
M.~Shang, Y.~Zhou, and H.~Fujita, ``Deep reinforcement learning with reference
  system to handle constraints for energy-efficient train control,''
  \emph{Information Sciences}, vol. 570, pp. 708--721, 2021.

\bibitem{watkins1992q}
C.~J. Watkins and P.~Dayan, ``Q-learning,'' \emph{Machine learning}, vol.~8,
  pp. 279--292, 1992.

\bibitem{konda1999actor}
V.~Konda and J.~Tsitsiklis, ``Actor-critic algorithms,'' \emph{Advances in
  neural information processing systems}, vol.~12, 1999.

\bibitem{lillicrap2015continuous}
T.~P. Lillicrap, J.~J. Hunt, A.~Pritzel, N.~Heess, T.~Erez, Y.~Tassa,
  D.~Silver, and D.~Wierstra, ``Continuous control with deep reinforcement
  learning,'' \emph{arXiv preprint arXiv:1509.02971}, 2015.

\bibitem{haarnoja2018soft}
T.~Haarnoja, A.~Zhou, P.~Abbeel, and S.~Levine, ``Soft actor-critic: Off-policy
  maximum entropy deep reinforcement learning with a stochastic actor,'' in
  \emph{International conference on machine learning}.\hskip 1em plus 0.5em
  minus 0.4em\relax PMLR, 2018, pp. 1861--1870.

\bibitem{silver2016mastering}
D.~Silver, A.~Huang, C.~J. Maddison, A.~Guez, L.~Sifre, G.~Van Den~Driessche,
  J.~Schrittwieser, I.~Antonoglou, V.~Panneershelvam, M.~Lanctot \emph{et~al.},
  ``Mastering the game of go with deep neural networks and tree search,''
  \emph{nature}, vol. 529, no. 7587, pp. 484--489, 2016.

\bibitem{silver2017mastering}
D.~Silver, J.~Schrittwieser, K.~Simonyan, I.~Antonoglou, A.~Huang, A.~Guez,
  T.~Hubert, L.~Baker, M.~Lai, A.~Bolton \emph{et~al.}, ``Mastering the game of
  go without human knowledge,'' \emph{nature}, vol. 550, no. 7676, pp.
  354--359, 2017.

\bibitem{vinyals2019grandmaster}
O.~Vinyals, I.~Babuschkin, W.~M. Czarnecki, M.~Mathieu, A.~Dudzik, J.~Chung,
  D.~H. Choi, R.~Powell, T.~Ewalds, P.~Georgiev \emph{et~al.}, ``Grandmaster
  level in starcraft ii using multi-agent reinforcement learning,''
  \emph{Nature}, vol. 575, no. 7782, pp. 350--354, 2019.

\bibitem{alshiekh2018safe}
M.~Alshiekh, R.~Bloem, R.~Ehlers, B.~K{\"o}nighofer, S.~Niekum, and U.~Topcu,
  ``Safe reinforcement learning via shielding,'' in \emph{Thirty-Second AAAI
  Conference on Artificial Intelligence}, 2018.

\bibitem{gu2022review}
S.~Gu, L.~Yang, Y.~Du, G.~Chen, F.~Walter, J.~Wang, Y.~Yang, and A.~Knoll, ``A
  review of safe reinforcement learning: Methods, theory and applications,''
  \emph{arXiv preprint arXiv:2205.10330}, 2022.

\bibitem{garcia2015comprehensive}
J.~Garc{\i}a and F.~Fern{\'a}ndez, ``A comprehensive survey on safe
  reinforcement learning,'' \emph{Journal of Machine Learning Research},
  vol.~16, no.~1, pp. 1437--1480, 2015.

\bibitem{gu2022constrained}
S.~Gu, G.~Chen, L.~Zhang, J.~Hou, Y.~Hu, and A.~Knoll, ``Constrained
  reinforcement learning for vehicle motion planning with topological
  reachability analysis,'' \emph{Robotics}, vol.~11, no.~4, p.~81, 2022.

\bibitem{wen2020safe}
L.~Wen, J.~Duan, S.~E. Li, S.~Xu, and H.~Peng, ``Safe reinforcement learning
  for autonomous vehicles through parallel constrained policy optimization,''
  in \emph{2020 IEEE 23rd International Conference on Intelligent
  Transportation Systems (ITSC)}.\hskip 1em plus 0.5em minus 0.4em\relax IEEE,
  2020, pp. 1--7.

\bibitem{cheng2019end}
R.~Cheng, G.~Orosz, R.~M. Murray, and J.~W. Burdick, ``End-to-end safe
  reinforcement learning through barrier functions for safety-critical
  continuous control tasks,'' in \emph{Proceedings of the AAAI conference on
  artificial intelligence}, vol.~33, no.~01, 2019, pp. 3387--3395.

\bibitem{mo2021safe}
S.~Mo, X.~Pei, and C.~Wu, ``Safe reinforcement learning for autonomous vehicle
  using monte carlo tree search,'' \emph{IEEE Transactions on Intelligent
  Transportation Systems}, vol.~23, no.~7, pp. 6766--6773, 2021.

\bibitem{pham2018optlayer}
T.-H. Pham, G.~De~Magistris, and R.~Tachibana, ``Optlayer-practical constrained
  optimization for deep reinforcement learning in the real world,'' in
  \emph{2018 IEEE International Conference on Robotics and Automation
  (ICRA)}.\hskip 1em plus 0.5em minus 0.4em\relax IEEE, 2018, pp. 6236--6243.

\bibitem{garcia2012safe}
J.~Garcia and F.~Fern{\'a}ndez, ``Safe exploration of state and action spaces
  in reinforcement learning,'' \emph{Journal of Artificial Intelligence
  Research}, vol.~45, pp. 515--564, 2012.

\bibitem{garcia2020teaching}
J.~Garc{\'\i}a and D.~Shafie, ``Teaching a humanoid robot to walk faster
  through safe reinforcement learning,'' \emph{Engineering Applications of
  Artificial Intelligence}, vol.~88, p. 103360, 2020.

\bibitem{singh2020building}
A.~Singh, Y.~Halpern, N.~Thain, K.~Christakopoulou, E.~Chi, J.~Chen, and
  A.~Beutel, ``Building healthy recommendation sequences for everyone: A safe
  reinforcement learning approach,'' in \emph{FAccTRec Workshop}, 2020.

\bibitem{lu2022safe}
X.~Lu, L.~Xiao, G.~Niu, X.~Ji, and Q.~Wang, ``Safe exploration in wireless
  security: A safe reinforcement learning algorithm with hierarchical
  structure,'' \emph{IEEE Transactions on Information Forensics and Security},
  vol.~17, pp. 732--743, 2022.

\bibitem{zhao2022nature}
F.~Zhao, Y.~Zeng, B.~Han, H.~Fang, and Z.~Zhao, ``Nature-inspired
  self-organizing collision avoidance for drone swarm based on reward-modulated
  spiking neural network,'' \emph{Patterns}, vol.~3, no.~11, 2022.

\bibitem{zhao2022safe}
Z.~Zhao, J.~Xun, X.~Wen, and J.~Chen, ``Safe reinforcement learning for single
  train trajectory optimization via shield sarsa,'' \emph{IEEE Transactions on
  Intelligent Transportation Systems}, vol.~24, no.~1, pp. 412--428, 2022.

\bibitem{thomas2019preventing}
P.~S. Thomas, B.~C. da~Silva, A.~G. Barto, S.~Giguere, Y.~Brun, and
  E.~Brunskill, ``Preventing undesirable behavior of intelligent machines,''
  \emph{Science}, vol. 366, no. 6468, pp. 999--1004, 2019.

\bibitem{2014Intelligent}
J.~Yin, D.~Chen, and L.~Li, ``Intelligent train operation algorithms for subway
  by expert system and reinforcement learning,'' \emph{IEEE Transactions on
  Intelligent Transportation Systems}, vol.~15, no.~6, pp. 2561--2571, 2014.

\bibitem{tang2020reinforcement}
H.~Tang, Y.~Wang, X.~Liu, and X.~Feng, ``Reinforcement learning approach for
  optimal control of multiple electric locomotives in a heavy-haul freight
  train: A double-switch-q-network architecture,'' \emph{Knowledge-Based
  Systems}, vol. 190, p. 105173, 2020.

\bibitem{wang2023cooperative}
X.~Wang, A.~D’Ariano, S.~Su, and T.~Tang, ``Cooperative train control during
  the power supply shortage in metro system: A multi-agent reinforcement
  learning approach,'' \emph{Transportation Research Part B: Methodological},
  vol. 170, pp. 244--278, 2023.

\bibitem{chen2023research}
X.~Chen, X.~Guo, J.~Meng, R.~Xu, S.~Li, and D.~Li, ``Research on ato control
  method for urban rail based on deep reinforcement learning,'' \emph{IEEE
  Access}, vol.~11, pp. 5919--5928, 2023.

\bibitem{ning2021deep}
L.~Ning, M.~Zhou, Z.~Hou, R.~M. Goverde, F.-Y. Wang, and H.~Dong, ``Deep
  deterministic policy gradient for high-speed train trajectory optimization,''
  \emph{IEEE Transactions on Intelligent Transportation Systems}, vol.~23,
  no.~8, pp. 11\,562--11\,574, 2021.

\bibitem{lin2023reinforcement}
X.~Lin, Z.~Liang, L.~Shen, F.~Zhao, X.~Liu, P.~Sun, and T.~Cao, ``Reinforcement
  learning method for the multi-objective speed trajectory optimization of a
  freight train,'' \emph{Control Engineering Practice}, vol. 138, p. 105605,
  2023.

\bibitem{li2023intelligent}
G.~Li, S.~W. Or, and K.~W. Chan, ``Intelligent energy-efficient train
  trajectory optimization approach based on supervised reinforcement learning
  for urban rail transits,'' \emph{IEEE Access}, vol.~11, pp. 31\,508--31\,521,
  2023.

\bibitem{zhang2021intelligent}
L.~Zhang, M.~Zhou, Z.~Li \emph{et~al.}, ``An intelligent train operation method
  based on event-driven deep reinforcement learning,'' \emph{IEEE Transactions
  on Industrial Informatics}, vol.~18, no.~10, pp. 6973--6980, 2021.

\bibitem{su2022cooperative}
S.~Su, W.~Liu, Q.~Zhu, R.~Li, T.~Tang, and J.~Lv, ``A cooperative
  collision-avoidance control methodology for virtual coupling trains,''
  \emph{Accident Analysis \& Prevention}, vol. 173, p. 106703, 2022.

\bibitem{SpinningUp2018}
J.~Achiam, ``{Spinning Up in Deep Reinforcement Learning},'' 2018.

\bibitem{pnueli1977temporal}
A.~Pnueli, ``The temporal logic of programs,'' in \emph{18th Annual Symposium
  on Foundations of Computer Science (sfcs 1977)}.\hskip 1em plus 0.5em minus
  0.4em\relax ieee, 1977, pp. 46--57.

\bibitem{2001Model}
O.~Kupferman and M.~Y. Vardi, ``Model checking of safety properties,''
  \emph{Kluwer Academic Publishers}, 2001.

\end{thebibliography}

\end{document}